\title{Near-Optimal Sample Complexity Bounds \\ for Constrained MDPs}
\author{%
  Sharan Vaswani\thanks{Equal contribution} \\
  Simon Fraser University\\
  \texttt{vaswani.sharan@gmail.com} \\
  \And
  Lin F. Yang$^*$ \\
  University of California, Los Angeles \\
  \texttt{linyang@ee.ucla.edu} \\
  \And
  Csaba Szepesv\'ari \\
  Amii, University of Alberta, DeepMind \\
  \texttt{szepesva@ualberta.ca} \\
}
\definecolor{shadecolor}{gray}{0.90}
\declaretheoremstyle[
headfont=\normalfont\bfseries,
notefont=\mdseries, notebraces={(}{)},
bodyfont=\normalfont,
postheadspace=0.5em,
spaceabove=5pt,
mdframed={
  skipabove=3pt,
  skipbelow=3pt,
  hidealllines=true,
  backgroundcolor={shadecolor},
  innerleftmargin=2pt,
  innerrightmargin=2pt}
]{shaded}
\declaretheorem[style=shaded]{theorem}
\newtheorem{lemma}[theorem]{Lemma}
\newtheorem{definition}[theorem]{Definition}
\newenvironment{proofsketch}[1][]{\begin{proof}[Proof Sketch:]}{\end{proof}}
\crefname{alg}{Algorithm}{Algorithm}
\renewcommand*{\@textcolor}[3]{%
  \protect\leavevmode
  \begingroup
    \color#1{#2}#3%
  \endgroup
}
\newenvironment{thmbox}
	{%
		\begin{mdframed}[style=mdframedthmbox]%
	}{%
		\end{mdframed}%
	}
\newcommand{\norminf}[1]{\left\|#1\right\|_{\infty}}
\newcommand{\abs}[1]{\left\vert #1 \right\vert}
\newcommand{\R}{\mathbb{R}}
\newcommand{\cA}{\mathcal{A}}
\newcommand{\cE}{\mathcal{E}}
\newcommand{\cK}{\mathcal{K}}
\newcommand{\cS}{\mathcal{S}}
\newcommand{\epsp}{\varepsilon_{\text{\tiny{opt}}}}
\newcommand{\epsl}{\varepsilon_{\text{\tiny{l}}}}
\renewcommand{\epsilon}{\varepsilon}
\DeclareMathOperator*{\argmin}{arg\ min}
\DeclareMathOperator*{\argmax}{arg\ max}
\newcommand{\Mhat}{\hat{M}}
\newcommand{\Var}[2]{\text{Var}_{#1}(#2)}
\newcommand{\const}[1]{V_c^{#1}(\rho)}
\newcommand{\consthat}[1]{\hat{V}_c^{#1}(\rho)}
\newcommand{\reward}[1]{V_r^{#1}(\rho)}
\newcommand{\rewardp}[1]{V_{r_p}^{#1}(\rho)}
\newcommand{\rewardrp}[1]{V_{r_p}^{#1}}
\newcommand{\rewardrphat}[1]{\hat{V}_{r_p}^{#1}}
\newcommand{\rewarda}[1]{V_{\alpha}^{#1}}
\newcommand{\rewardb}[1]{V_{\beta}^{#1}}
\newcommand{\rewardhat}[1]{\hat{V}_r^{#1}(\rho)}
\newcommand{\rewardhatp}[1]{\hat{V}_{r_p}^{#1}(\rho)}
\newcommand{\rewardahat}[1]{\hat{V}_{\alpha}^{#1}}
\newcommand{\rewardbhat}[1]{\hat{V}_{\beta}^{#1}}
\newcommand{\optreward}{V_r^{*}(\rho)}
\newcommand{\lag}[2]{V_{l}^{#1,#2}(\rho)}
\newcommand{\rewardq}[1]{Q_{r}^{#1}}
\newcommand{\constq}[1]{Q_{c}^{#1}}
\newcommand{\piopt}{\pi^*}
\newcommand{\pihat}{\hat{\pi}}
\newcommand{\pihatopt}{\hat{\pi}^*}
\newcommand{\pitildeopt}{\tilde{\pi}^*}
\newcommand{\pihatopta}{\hat{\pi}_\alpha^{*}}
\newcommand{\pit}{\pi_t}
\newcommand{\pihatt}{\hat{\pi}_t}
\newcommand{\lambdat}{\lambda_t}
\newcommand{\lambdatt}{\lambda_{t+1}}
\newcommand{\PP}{\mathbb{P}}
\newcommand{\pihatbar}{\bar{\pi}_{T}}
\newcommand{\cP}{\mathcal{P}}
\newcommand{\cPhat}{\hat{\mathcal{P}}}
\newcommand{\tautil}{\tilde{\tau}}
\newcommand{\pitil}{\tilde{\pi}}
\begin{document}
\maketitle

\begin{abstract}
In contrast to the advances in characterizing the sample complexity for solving Markov decision processes (MDPs), the optimal statistical complexity for solving constrained MDPs (CMDPs) remains unknown. We resolve this question by providing \emph{minimax} upper and lower bounds on the sample complexity for learning near-optimal policies in a discounted CMDP with access to a generative model (simulator). In particular, we design a model-based algorithm that addresses two settings: (i) \emph{relaxed feasibility},  where small constraint violations are allowed, and (ii) \emph{strict feasibility}, where the output policy is required to satisfy the constraint. For (i), we prove that our algorithm returns an $\epsilon$-optimal policy with probability $1 - \delta$, by making  $\tilde{O}\left(\frac{S A \log(1/\delta)}{(1 - \gamma)^3 \epsilon^2}\right)$ queries to the generative model, thus matching the sample-complexity for unconstrained MDPs. For (ii), we show that the algorithm's sample complexity is upper-bounded by $\tilde{O} \left(\frac{S A \, \log(1/\delta)}{(1 - \gamma)^5 \, \epsilon^2 \zeta^2} \right)$ where $\zeta$ is the problem-dependent Slater constant that characterizes the size of the feasible region. Finally, we prove a matching lower-bound for the strict feasibility setting, thus obtaining the first near minimax optimal bounds for discounted CMDPs. Our results show that learning CMDPs is as easy as MDPs when small constraint violations are allowed, but inherently more difficult when we demand zero constraint violation. 
\end{abstract}
\section{Introduction}
\label{sec:introduction}
Common reinforcement learning (RL) algorithms focus on optimizing an unconstrained objective, and have found applications in games such as Atari~\citep{mnih2015human} or Go~\citep{silver2016mastering}, robot manipulation tasks~\citep{tan2018sim,zeng2020tossingbot} or clinical trials~\citep{schaefer2005modeling}. However, many applications require the planning agent to satisfy constraints -- for example, in wireless sensor networks~\citep{buratti2009overview} where there is a constraint on  average power consumption. More generally, in the constrained Markov decision processes (CMDP) framework, the goal is to find a policy that maximizes the value associated with a reward function subject to the policy achieving a return (for a second reward function) that exceeds an apriori determined  threshold \citep{altman1999constrained}. There has been substantial work addressing the planning problem to find a near-optimal policy in a known CMDP~\citep{borkar2005actor,borkar2014risk, tessler2018reward,paternain2019constrained,achiam2017constrained, xu2021crpo}. However, since the CMDP is unknown in most practical applications, we consider the problem of finding a near-optimal policy in this more challenging setting. 

There have been multiple recent approaches to obtain a near-optimal policy in CMDPs in the regret-minimization or PAC-RL settings~\citep{efroni2020exploration, zheng2020constrained,brantley2020constrained,kalagarla2020sample,wachi2020safe, miryoosefi2022simple, yu2021provably, ding2021provably, gattami2021reinforcement,hasan2021model, chen2021primal}. These works tackle the exploration, estimation and planning problems simultaneously. On the other hand, recent works~\citep{hasan2021model, wei2021provably,bai2021achieving} consider an easier, but even more fundamental problem of obtaining a near-optimal policy with access to a simulator or \emph{generative model}~\citep{kearns1999finite,kakade2003sample,agarwal2020model}. In particular, these works assume that the transition probabilities in the underlying CMDP are unknown, but the planner has access to a sampling oracle (the generative model) that returns a sample of the next state when given any state-action pair as input. This is the problem setting we consider and \emph{aim to obtain matching upper and lower bounds on the sample complexity of planning in CMDPs with access to a generative model}.

Given a target error $\epsilon > 0$, the approximate CMDP objective is to return a policy that achieves a cumulative reward within an $\epsilon$ additive error of the optimal policy in the CMDP. Previous work can be classified into two categories based on how it tackles the constraint -- for the easier problem that we term \emph{relaxed feasibility}, the policy returned by an algorithm is allowed to violate the constraint by at most $\epsilon$. On the other hand, for the more difficult \emph{strict feasibility} problem, the returned policy is required to strictly satisfy the constraint and achieve zero constraint violation. Except for the recent works of~\citet{wei2021provably} and \citet{bai2021achieving}, most provably efficient approaches including those in the regret-minimization and PAC-RL settings consider the relaxed feasibility setting. For this problem, the best model-based algorithm requires $\tilde{O}\left(\frac{S^2 A}{(1 - \gamma)^3 \epsilon^2}\right)$ samples to return an $\epsilon$-optimal policy in an infinite-horizon $\gamma$-discounted CMDP with $S$ states and $A$ actions
\citep{hasan2021model}, while the best model-free approach requires $\tilde{O}\left(\frac{S A}{(1 - \gamma)^5 \epsilon^2}\right)$ samples for achieving the objective~\citep{ding2021provably}. On the other hand, the best known upper bounds for a model-free algorithm in the strict feasibility setting are achieved by~\citet{bai2021achieving}. In particular, their algorithm requires $\tilde{O}\left(\frac{S A}{(1 - \gamma)^2 \epsilon^2}\right)$ samples~\citep[Theorem 2]{bai2021achieving} to output an $\epsilon$-optimal policy. However, their analysis considers normalized reward and constraint value functions~\citep[Eq. 1]{bai2021achieving} that lie in the $[0,1]$ range (compared to the standard $[0, 1/1- \gamma]$ range). This difference in the scale of the values prevents a direct comparison of their results to our sample complexity bounds. Subsequently, we show that when appropriately normalized, our sample complexity bounds are better by a $\left(\nicefrac{1}{1 - \gamma}\right)$ factor in both the relaxed feasibility (\cref{sec:ub-relaxed})  and strict feasibility settings (\cref{sec:ub-strict}). 

Importantly, there are no lower bounds characterizing the difficulty of either the relaxed or strict feasibility problems (except in degenerate cases where the constraint is always satisfied and the CMDP problem reduces to an unconstrained MDP). To get an indication of what the optimal bounds might be, it is instructive to compare these results to the unconstrained MDP setting. For unconstrained MDPs with access to a generative model, both model-based~\citep{agarwal2020model,li2020breaking} and model-free approaches~\citep{sidford2018near} can return an $\epsilon$-optimal policy within near-optimal $\tilde{\Theta} \left(\frac{S A}{(1 - \gamma)^3 \epsilon^2}\right)$ sample-complexity~\citep{azar2012sample}. Hence, compared to the sample-complexity for unconstrained MDPs, the best-known upper-bounds for CMDPs are worse for both the relaxed and strict feasibility settings. However, 
it is unclear whether solving CMDPs is inherently more difficult than unconstrained MDPs. We resolve these questions for both the relaxed and strict feasibility settings, and make the following contributions. 

\textbf{Generic model-based algorithm}: In~\cref{sec:method}, we provide a generic model-based primal-dual algorithm (\cref{alg:cmdp-generative}) that can be used to achieve both the relaxed and strict feasibility objectives (with appropriate parameter settings). The proposed algorithm requires solving a sequence of unconstrained empirical MDPs using any black-box MDP planner. 

\textbf{Upper-bound on sample complexity under relaxed feasibility}: In~\cref{sec:ub-relaxed}, we prove that with a specific set of parameters,~\cref{alg:cmdp-generative} uses no more than $\tilde{O}\left(\frac{S A}{(1 - \gamma)^3 \epsilon^2}\right)$ samples to achieve the relaxed feasibility objective. This improves upon the bounds of~\citet{hasan2021model} and matches the lower-bound in the easier unconstrained MDP setting, implying that our bounds are near-optimal. Our result indicates that under relaxed feasibility solving CMDPs is as easy as solving unconstrained MDPs. To the best of our knowledge, these are the first such bounds.

\textbf{Upper-bound on sample-complexity under strict feasibility}: In~\cref{sec:ub-strict}, we prove that with a specific set of parameters,~\cref{alg:cmdp-generative} uses no more than $\tilde{O}\left(\frac{S A}{(1 - \gamma)^5 \zeta^2 \epsilon^2}\right)$ to achieve the strict feasibility objective. Here $\zeta \in \left(0, \nicefrac{1}{1 - \gamma} \right]$ is the problem-dependent \emph{Slater constant} that characterizes the size of the feasible region and influences the difficulty of the problem. Unlike~\citet{bai2021achieving}, our bounds do not depend on additional (potentially large) problem-dependent quantities. 

\textbf{Lower-bound on sample-complexity under strict feasibility}: In~\cref{sec:lb-strict}, we prove a matching  problem-dependent $\Omega \left(\frac{SA}{(1 - \gamma)^5 \, \zeta^2 \, \epsilon^2} \right)$ lower bound on the sample-complexity in the strict feasibility setting. Our results thus demonstrate that the proposed model-based algorithm is near minimax optimal. Furthermore, our bounds indicate that under strict feasibility (i) solving CMDPs is inherently more difficult than solving unconstrained MDPs, and (ii) the problem hardness (in terms of the sample-complexity) increases as $\zeta$ (and hence the size of the feasible region) decreases. To the best of our knowledge, these are first results characterizing the difficulty of solving CMDPs with access to a generative model and demonstrate a separation between the relaxed and strict feasibility settings. 

\textbf{Overview of techniques}: For proving the upper bounds, we use a specific primal-dual algorithm that reduces the CMDP planning problem to solving multiple unconstrained MDPs. Specifically, by using a strong-duality argument, we show that we can obtain an optimal CMDP policy by averaging the optimal policies of a specific sequence of MDPs. For each MDP in this sequence, we use the model-based techniques from~\citet{agarwal2020model, li2020breaking} to prove concentration results for data-dependent policies. This allows us to prove concentration for the optimal data-dependent policy in the CMDP, and subsequently bound the sample complexity for both the relaxed and strict feasibility problems. For the lower bound, we modify the MDP hard instances~\citep{azar2013minimax,xiao2021sample} to handle a constraint reward. This makes the resulting gadgets significantly more complex than those required for MDPs, but we show that similar likelihood arguments can be used to prove the lower-bound. 
\section{Problem Formulation}
\label{sec:problem}
We consider an infinite-horizon discounted constrained Markov decision process (CMDP)~\citep{altman1999constrained} denoted by $M$, and defined by the tuple $\langle \cS, \cA, \cP, r, c, b, \rho, \gamma \rangle$ where $\cS$ is the set of states, $\cA$ is the action set, $\cP : \cS \times \cA \rightarrow \Delta_\cS$ is the transition probability function, $\rho \in \Delta_{\cS}$ is the initial distribution of states and $\gamma \in [0, 1)$ is the discount factor. The primary reward to be maximized is denoted by $r : \cS \times \cA \rightarrow [0,1]$, whereas the constraint reward is denoted by $c: \cS \times \cA \rightarrow [0,1]$\footnote{These ranges for $r$ and $c$ are chosen for simplicity. Our results can be easily extended to handle other ranges.}. If $\Delta_{\cA}$ denotes the simplex over the action space, the expected discounted return or \emph{reward value function} of a stationary, stochastic policy\footnote{The performance of an optimal policy in a CMDP can always be achieved by a stationary, stochastic policy~\citep{altman1999constrained}. On the other hand, for an MDP, it suffices to only consider stationary, deterministic policies~\citep{puterman2014markov}.} $\pi: \cS \rightarrow \Delta_{\cA}$ is defined as $\reward{\pi} = \mathbb{E}_{s_0, a_0, \ldots} \Big[\sum_{t=0}^\infty \gamma^t r(s_t, a_t)\Big]$, where $s_0 \sim \rho, a_t \sim \pi( \cdot | s_t),$ and $s_{t+1} \sim \cP( \cdot | s_t, a_t)$. For each state-action pair $(s,a)$ and policy $\pi$, the reward action-value function is defined as $\rewardq{\pi}: \cS \times \cA \rightarrow \R$, and satisfies the relation: $V_{r}^{\pi}(s) = \langle \pi(\cdot | s), \rewardq{\pi}(s,\cdot) \rangle$, where $V_{r}^{\pi}(s)$ is the reward value function when the starting state is equal to $s$. Analogously, the \emph{constraint value function} and constraint action-value function of policy $\pi$ is denoted by $\const{\pi}$ and $\constq{\pi}$ respectively. The CMDP objective is to return a policy that maximizes $\reward{\pi}$, while ensuring that $\const{\pi} \geq b$. Formally, 
\begin{align}
\max_{\pi} \reward{\pi} \quad \text{s.t.} \quad  \const{\pi} \geq b.
\label{eq:true-CMDP}
\end{align}
The optimal stochastic policy for the above CMDP is denoted by $\piopt$ and the corresponding reward value function is denoted by $\reward{*}$. We also define $\zeta := \max_{\pi} \const{\pi} - b$ as the problem-dependent quantity referred to as the Slater constant~\citep{ding2021provably,bai2021achieving}. The Slater constant is a measure of the size of the feasible region and determines the difficulty of solving~\cref{eq:true-CMDP}. 

For simplicity of exposition, we assume that the rewards $r$ and constraint rewards $c$ are known, but the transition matrix $\cP$ is unknown and needs to be estimated. We note that assuming the knowledge of the rewards does not affect the leading terms of the sample complexity since learning these is an easier problem compared to the transition matrix~\citep{azar2013minimax, sidford2018near}. We assume access to a \emph{generative model} or simulator that allows the agent to obtain samples from the $\cP(\cdot|s,a)$ distribution for any $(s,a)$. Assuming access to such a generative model, our aim is to characterize the sample complexity required to return a near-optimal policy $\pihat$ in $M$. Given a target error $\epsilon > 0$, we can characterize the performance of policy $\pihat$ in two ways:

\textbf{Relaxed feasibility}: We require $\hat{\pi}$ to achieve an approximately optimal reward value, while allowing it to have a small constraint violation in $M$\footnote{In general, the desired gap in the reward value can be different from the level of constraint violation.}. Formally, we require $\pihat$ s.t. 
\begin{align}
\reward{\hat{\pi}} \geq \optreward - \epsilon, \text{ and } \const{\hat{\pi}} \geq b -\epsilon. 
\label{eq:relaxed-objective}    
\end{align}

\textbf{Strict feasibility} We require $\hat{\pi}$ to achieve an approximately optimal reward value, while simultaneously demanding zero constraint violation in $M$. Formally, we require $\pihat$ s.t. 
\begin{align}
\reward{\hat{\pi}} \geq \optreward - \epsilon, \text{ and } \const{\hat{\pi}} \geq b
\label{eq:strict-objective}    
\end{align}

Next, we describe a general model-based algorithm to handle both these cases, and subsequently instantiate the algorithm for the relaxed feasibility (\cref{sec:ub-relaxed}) and strict feasibility (\cref{sec:ub-strict}) settings.   
\section{Methodology}
\label{sec:method}
\begin{algorithm}[!t]
\caption{Model-based algorithm for CMDPs with generative model}
\label{alg:cmdp-generative}
\textbf{Input}: $\cS$ (state space), $\cA$ (action space), $r$ (rewards), $c$ (constraint rewards), $\zeta$ (Slater constant), $N$ (number of samples), $b'$ (constraint RHS), $\omega$ (perturbation magnitude), $U$ (projection upper bound), $\epsl$ (epsilon-net resolution), $T$ (number of iterations), $\lambda_0 = 0$ (initialization). \\

For each state-action $(s,a)$ pair, collect $N$ samples from $\cP(.|s,a)$ and form $\cPhat$. \\

Perturb the rewards to form vector $r_p(s,a) := r(s,a) + \xi(s,a)$ where $\xi(s,a) \sim \mathcal{U}[0, \omega]$. \\

Form the empirical CMDP $\Mhat = \langle \cS, \cA, \cPhat, r_p, c, b', \rho, \gamma \rangle$. \\

Form the epsilon-net $\Lambda = \{0, \epsl, 2 \epsl, \ldots, U\}$. \\

\For{$t \leftarrow 0$  \KwTo  $T-1$}{
Update the policy by solving an unconstrained MDP: $\pihatt = \argmax \hat{V}^{\pi}_{r_p + \lambdat c}$. \\
Update the dual-variables: $\lambdatt = \mathcal{R}_{\Lambda}\left[\PP_{[0,U]} \left[\lambdat - \eta \, (\consthat{\pihatt} - b')\right]\right]$. 
}
\textbf{Output}: Mixture policy $\bar{\pi}_T = \frac{1}{T} \sum_{t = 0}^{T-1} \pihatt$. 
\end{algorithm}

We will use a model-based approach~\citep{agarwal2020model,li2020breaking,hasan2021model} for achieving the objectives in~\cref{eq:relaxed-objective} and~\cref{eq:strict-objective}. In particular, for each $(s,a)$ pair, we collect $N$ independent samples from $\cP(\cdot|s,a)$ and form an empirical transition matrix $\cPhat$ such that $\cPhat(s'|s,a) = \frac{N(s'|s,a)}{N}$, where $N(s'|s,a)$ is the number of samples that have transitions from $(s,a)$ to $s'$. These estimated transition probabilities are used to form an empirical CMDP. Due to a technical requirement, (which we will clarify in the~\cref{sec:concentration}), we require adding a small random perturbation to the rewards in the empirical CMDP\footnote{Similar to MDPs~\citep{li2020breaking}, we can instead perturb the $Q$ function while planning in the empirical CMDP.}. In particular, for each $s \in \cS$ and $a \in \cA$, we define the perturbed rewards $r_p(s,a) := r(s,a) + \xi(s,a)$ where $\xi(s,a) \sim \mathcal{U}[0, \omega]$ are i.i.d. uniform random variables. Finally, compared to~\cref{eq:true-CMDP}, we will require solving the empirical CMDP with a constraint right-hand side equal to $b'$. Note that setting $b' < b$ corresponds to loosening the constraint, while $b' > b$ corresponds to tightening the constraint. This completes the specification of the empirical CMDP $\Mhat$ that is defined by the tuple $\langle \cS, \cA, \cPhat, r_p, c, b', \rho, \gamma \rangle$. For $\Mhat$, the corresponding reward value function (and constraint value function) for policy $\pi$ is denoted as $\rewardhatp{\pi}$ (and $\consthat{\pi}$ respectively). 
In order to fully instantiate $\Mhat$, we require setting the values of $\omega$ (the magnitude of the perturbation) and $b'$ (the constraint right-hand side). This depends on the specific setting (relaxed vs strict feasibility) and we do this in~\cref{sec:ub-relaxed,sec:ub-strict} respectively. We compute the optimal policy for the empirical CMDP $\Mhat$ as follows: 
\begin{align}
\pihatopt \in \argmax \rewardhatp{\pi} \, \text{s.t.} \, \consthat{\pi} \geq b'
\label{eq:emp-CMDP}
\end{align}

In contrast to~\citet{agarwal2020model,li2020breaking} that consider model-based approaches for unconstrained MDPs and can solve the resulting empirical MDP using any black-box approach, we will require solving~\cref{eq:emp-CMDP} using a specific primal-dual approach that we outline next. Using this algorithm enables us to prove optimal sample complexity bounds under both relaxed and strict feasibility. 

First, observe that~\cref{eq:emp-CMDP} can be written as an equivalent saddle-point problem -- $\max_{\pi} \min_{\lambda \geq 0} \left[\rewardhatp{\pi} + \lambda \left(\consthat{\pi} - b' \right) \right]$, where $\lambda \in \R$ corresponds to the Lagrange multiplier for the constraint. The solution to this saddle-point problem is $(\pihatopt, \lambda^*)$ where $\pihatopt$ is the optimal empirical policy and $\lambda^*$ is the optimal Lagrange multiplier. We solve the above saddle-point problem iteratively, by alternatively updating the policy (primal variable) and the Lagrange multiplier (dual variable). If $T$ is the total number of iterations of the primal-dual algorithm, we define $\pihatt$ and $\lambdat$ to be the primal and dual iterates for $t \in [T] :=\{1,\dots,T\}$. The primal update at iteration $t$ is given as: 
\begin{align}
\pihatt = \argmax \left[\rewardhatp{\pi} + \lambdat \consthat{\pi} \right] = \argmax \hat{V}^{\pi}_{r_p + \lambdat c}.  
\label{eq:primal-update} 
\end{align}
Hence, iteration $t$ of the algorithm requires solving an unconstrained MDP with a reward equal to $r_p + \lambdat c$. This can be done using any black-box MDP solver such as policy iteration. The algorithm updates the Lagrange multipliers using a gradient descent step and requires projecting and rounding the resulting dual variables. In particular, the dual variables are first projected onto the $[0,U]$ interval, where $U$ is chosen to be an upper-bound on $|\lambda^*|$. After the projection, the resulting iterates are rounded to the closest element in the set $\Lambda = \{0, \epsl, 2 \epsl, \ldots, U\}$, a one-dimensional epsilon-net (with resolution $\epsl$) over the dual variables. In~\cref{sec:concentration}, we will see that constructing such an $\epsl$-net will enable us to prove concentration results for all $\lambda \in \Lambda$. 

The dual update at iteration $t$ is given as:
\begin{align}
\lambdatt = \mathcal{R}_{\Lambda}\left[\PP_{[0,U]} \left[\lambdat - \eta \, (\consthat{\pihatt} - b')\right]\right] \, ,
\label{eq:dual-update}    
\end{align}
where $\PP_{[0,U]} [\lambda] = \argmin_{p \in [0,U]} \abs{\lambda - p}$ projects  $\lambda$ onto the $[0,U]$ interval

and $\mathcal{R}_{\Lambda}[\lambda] = \argmin_{p \in \Lambda } \abs{\lambda - p}$ rounds $\lambda$ to the closest element in $\Lambda$. Since $\Lambda$ is an epsilon-net, for all $\lambda \in [0,U]$, $\vert \lambda - \mathcal{R}_{\Lambda}[\lambda] \vert \leq \epsl$. Finally, $\eta$ in~\cref{eq:dual-update} corresponds to the step-size for the gradient descent update. The above primal-dual updates are similar to the dual-descent algorithm proposed in~\citet{paternain2019constrained}. The pseudo-code summarizing the entire model-based algorithm is given in~\cref{alg:cmdp-generative}. We note that although~\cref{alg:cmdp-generative} requires the knowledge of $\zeta$, this is not essential and we can instead use an estimate of $\zeta$. In~\cref{app:est-zeta}, we show that we can estimate $\zeta$ to within a factor of 2 using $\tilde{O}\left(\frac{|\cS||\cA|}{(1-\gamma)^3\zeta^2}\right)$ additional queries. Next, we show that the primal-dual updates in~\cref{alg:cmdp-generative} can be used to solve the empirical CMDP $\Mhat$. Specifically, we prove the following theorem (proof in~\cref{app:proofs-pd}) that bounds the average optimality gap (in the reward value function) and constraint violation for the mixture policy returned by~\cref{alg:cmdp-generative}.
\begin{restatable}[Guarantees for the primal-dual algorithm]{theorem}{pdguarantees}
\label{thm:pd-guarantees}
For a target error $\epsp > 0$ and the primal-dual updates in~\cref{eq:primal-update}-\cref{eq:dual-update} with $U > |\lambda^*|$, $T = \frac{4 U^2}{\epsp^2 \, (1 - \gamma)^2} \left[1 + \frac{1}{(U - \lambda^*)^2} \right]$, $\eta = \frac{U (1 - \gamma)}{\sqrt{T}}$ and $\epsl = \frac{\epsp^2 (1 - \gamma)^2 \, (U - \lambda^*)}{6 U}$, the mixture policy $\pihatbar := \frac{1}{T} \sum_{t = 0}^{T-1} \pihatt$ satisfies
\begin{align*}
\rewardhatp{\pihatbar} & \geq \rewardhatp{\pihatopt} - \epsp \quad \text{and} \quad \consthat{\pihatbar} \geq b' - \epsp\,. 
\end{align*}
\end{restatable}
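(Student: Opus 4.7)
My plan is to treat \cref{alg:cmdp-generative} as a primal-dual scheme on the empirical Lagrangian $L(\pi,\lambda) := \rewardhatp{\pi} + \lambda(\consthat{\pi}-b')$. Strong duality for CMDPs (via their LP reformulation) gives $V^\star := \rewardhatp{\pihatopt} = L(\pihatopt,\lambda^*) = \min_{\lambda\geq 0}\max_\pi L(\pi,\lambda)$, so $(\pihatopt,\lambda^*)$ is a saddle point and, by assumption, $\lambda^*<U$. Because each primal iterate is an exact best response, $\pihatt=\argmax_\pi L(\pi,\lambdat)$, we get $L(\pihatt,\lambdat)\geq L(\pihatopt,\lambdat) = V^\star + \lambdat(\consthat{\pihatopt}-b') \geq V^\star$, using $\lambdat\geq 0$ and $\consthat{\pihatopt}\geq b'$. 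Averaging over $t$ gives the primal lower bound $\tfrac{1}{T}\sum_t L(\pihatt,\lambdat)\geq V^\star$.

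Next I would analyse the dual update as projected online gradient descent on the (linear in $\lambda$) functions $\lambda\mapsto L(\pihatt,\lambda)$, whose subgradient $\consthat{\pihatt}-b'$ is bounded in magnitude by $1/(1-\gamma)$, with an additional per-step rounding perturbation of magnitude at most $\epsl$. Expanding $(\lambdatt-\lambda)^2$ first through the projection onto $[0,U]$ and then through the snap to $\Lambda$ contributes an extra $2\epsl U + \epsl^2$ per step, which telescoping converts into, for every $\lambda\in[0,U]$,
\[
\tfrac{1}{T}\sum_{t=0}^{T-1}\bigl[L(\pihatt,\lambdat)-L(\pihatt,\lambda)\bigr] \;\leq\; \frac{U^2}{2\eta T}+\frac{\eta}{2(1-\gamma)^2}+\frac{\epsl U}{\eta} \;=:\; R(T).
\]
Interpreting the mixture policy in the sampling sense (draw $t$ uniformly from $\{0,\ldots,T-1\}$ at the start of an episode and execute $\pihatt$) makes value functions linear in $t$, so $\tfrac{1}{T}\sum_t L(\pihatt,\lambda) = \rewardhatp{\pihatbar}+\lambda(\consthat{\pihatbar}-b')$. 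Combining with the primal lower bound yields, for all $\lambda\in[0,U]$, the master inequality
\[
\rewardhatp{\pihatbar}+\lambda(\consthat{\pihatbar}-b') \;\geq\; V^\star - R(T).
\]

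To finish, I would plug in the two extreme values of $\lambda$. Setting $\lambda=0$ gives the reward guarantee $\rewardhatp{\pihatbar}\geq V^\star - R(T)$ directly. For the constraint guarantee, suppose $\Delta := b'-\consthat{\pihatbar}>0$ and set $\lambda=U$ to obtain $U\Delta \leq \rewardhatp{\pihatbar}-V^\star + R(T)$; the saddle-point inequality $L(\pihatbar,\lambda^*)\leq L(\pihatopt,\lambda^*)=V^\star$ simultaneously gives $\rewardhatp{\pihatbar}-V^\star \leq \lambda^*\Delta$, and subtracting yields $(U-\lambda^*)\Delta \leq R(T)$, i.e., $\Delta \leq R(T)/(U-\lambda^*)$. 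Choosing $\eta=U(1-\gamma)/\sqrt{T}$ equalises the first two terms of $R(T)$ at $U/[(1-\gamma)\sqrt{T}]$; the prescribed $T$ then drives $R(T)$ below both $\epsp$ and $\epsp(U-\lambda^*)$ as required, and the prescribed $\epsl$ keeps the rounding term $\epsl U/\eta$ within the same budget.

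The main obstacle is the three-way balancing of $(T,\eta,\epsl)$. The $\epsl$-net rounding contributes $\epsl U/\eta$ to $R(T)$, which grows as $\eta$ shrinks, and the constraint bound pays an extra $1/(U-\lambda^*)$ factor relative to the reward bound, so a single $R(T)$ must be pushed below $\epsp\min\{1,U-\lambda^*\}$ while keeping $\epsl$ non-trivial for later use in the concentration argument. Matching this to the stated $T=\Theta\bigl(U^2/[\epsp^2(1-\gamma)^2] \cdot [1+1/(U-\lambda^*)^2]\bigr)$ and $\epsl=\Theta\bigl(\epsp^2(1-\gamma)^2(U-\lambda^*)/U\bigr)$ is the only place constants need to be chased; the remainder of the argument is standard best-response/no-regret reasoning together with strong duality.
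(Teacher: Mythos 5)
Your proposal is correct and follows the same overall architecture as the paper's proof: exact best-response primal iterates give $\tfrac{1}{T}\sum_t L(\pihatt,\lambdat)\geq \rewardhatp{\pihatopt}$, the dual update is analysed as projected online gradient descent where the snap to the $\epsl$-net contributes an extra $2\epsl U+\epsl^2$ per step to the squared-distance telescoping, and the resulting master inequality is evaluated at $\lambda=0$ for the reward bound and $\lambda=U$ for the constraint bound, with the same three-way balancing of $(T,\eta,\epsl)$. The one place you genuinely diverge is the extraction of the constraint violation: the paper invokes a separate perturbation-function lemma (\cref{lemma:lag-constraint}, which bounds $[b'-\consthat{\pihatbar}]_+$ via the sensitivity of $\nu(\tau)=\max_\pi\{\rewardhatp{\pi}\mid\consthat{\pi}\geq b'+\tau\}$), whereas you pair the $\lambda=U$ instance of the master inequality with the saddle-point inequality $L(\pihatbar,\lambda^*)\leq\max_\pi L(\pi,\lambda^*)=\rewardhatp{\pihatopt}$ and subtract, obtaining $(U-\lambda^*)\Delta\leq R(T)$ directly. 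Your route is more self-contained and arguably cleaner, since it needs only strong duality (which the paper already establishes via the occupancy-measure LP) rather than an auxiliary lemma; the paper's lemma is slightly more general in that it applies to any policy with a small Lagrangian gap, not just the averaged iterate. Both yield the identical final bound $\Delta\leq R(T)/(U-\lambda^*)$, and your parameter choices for $T$, $\eta$, $\epsl$ match the statement, so only constant-chasing remains.
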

Hence, with $T = O(\nicefrac{1}{\epsp^2})$ and $\epsl = O(\epsp^2)$, the algorithm outputs a policy $\pihatbar$ that achieves a reward $\epsp$ close to that of the optimal empirical policy $\pihatopt$, while violating the constraint by at most $\epsp$. Hence, with sufficient number of iterations $T$ and by choosing a sufficiently small resolution $\epsl$ for the epsilon-net, we can use the above primal-dual algorithm to approximately solve the problem in~\cref{eq:emp-CMDP}. In order to completely instantiate the primal-dual algorithm, we require setting $U > |\lambda^*|$. We will subsequently do this for the the relaxed and strict feasibility settings in~\cref{sec:ub-relaxed,sec:ub-strict} respectively. We note that in contrast to~\citet[Theorem 3]{paternain2019constrained} that bounds the Lagrangian,~\cref{thm:pd-guarantees} provides explicit bounds on both the reward suboptimality and constraint violation. 

We conclude this section by making some observations about the primal-dual algorithm -- while the subsequent bounds for both settings heavily depend on using the ``best-response'' primal update in~\cref{eq:primal-update}, the algorithm does not require using the specific form of the dual updates in~\cref{eq:dual-update}. Indeed, when used in conjunction with the projection and rounding operations in~\cref{eq:dual-update}, we can use any method to update the dual variables (not necessarily gradient descent) provided that it results in an $O\left(T^{a} + \epsl T^{b} \right)$ (for $a < 1$) bound on the dual regret (see the proof of~\cref{thm:pd-guarantees} for the definition). Next, we specify the values of $N, b'$, $\omega$, $\epsl$, $T$, $U$ in~\cref{alg:cmdp-generative} to achieve the objective in~\cref{eq:relaxed-objective}. 
 
\section{Upper-bound under Relaxed Feasibility}
\label{sec:ub-relaxed}
In order to achieve the objective in~\cref{eq:relaxed-objective} for a target error $\epsilon > 0$, we require setting $N = \tilde{O}\left(\frac{\log(1/\delta)}{(1 - \gamma)^3 \epsilon^2} \right)$, $b' = b - \frac{3 \epsilon}{8}$ and $\omega = \frac{\epsilon (1 - \gamma)}{8}$. This completely specifies the empirical CMDP $\Mhat$ and the problem in~\cref{eq:emp-CMDP}. In order to specify the primal-dual algorithm, we set $U = O\left(\nicefrac{1}{\epsilon \, (1 - \gamma)}\right)$, $\epsl = O\left(\epsilon^2 (1 - \gamma)^2 \right)$ and $T = O\left(\nicefrac{1}{(1 - \gamma)^4 \epsilon^4}\right)$. With these choices, we prove the following theorem in~\cref{app:proof-relaxed} and provide a proof sketch below. 
\begin{restatable}{theorem}{ubrelaxed}
For a fixed $\epsilon \in \left(0, \nicefrac{1}{1 - \gamma}\right]$ and  $\delta\in(0,1)$,  \cref{alg:cmdp-generative} with $N = \tilde{O}\left(\frac{\log(1/\delta)}{(1 - \gamma)^3 \epsilon^2} \right)$ samples, $b' = b - \frac{3 \epsilon}{8}$, $\omega = \frac{\epsilon (1 - \gamma)}{8}$, $U = O\left(\nicefrac{1}{\epsilon \, (1 - \gamma)}\right)$, $\epsl = O\left(\epsilon^2 (1 - \gamma)^2 \right)$ and $T = O\left(\nicefrac{1}{(1 - \gamma)^4 \epsilon^4}\right)$, returns policy $\pihatbar$  that satisfies the objective in~\cref{eq:relaxed-objective} with probability at least $1 - 4 \delta$.
\label{thm:ub-relaxed}
\end{restatable}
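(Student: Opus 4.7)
The plan is to solve the empirical CMDP $\Mhat$ (with loosened constraint threshold $b' = b - 3\epsilon/8$) approximately via the primal--dual scheme, use \cref{thm:pd-guarantees} to certify the average iterate $\pihatbar$ against $\Mhat$, and then transfer these guarantees back to the true CMDP $M$ via uniform concentration. Before invoking \cref{thm:pd-guarantees} I need to verify that the prescribed $U$ exceeds $|\lambda^*|$ for $\Mhat$. This follows from a Slater-style bound: the true optimal policy $\piopt$ has $\const{\piopt} \geq b$, and a single-policy concentration bound gives $\consthat{\piopt} \geq \const{\piopt} - \epsilon/8$ with high probability, so $\consthat{\piopt} \geq b' + \epsilon/4$. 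Consequently $\Mhat$ has a Slater gap of at least $\epsilon/4$, and since the optimal reward value is bounded by $1/(1-\gamma)$, strong duality yields $|\lambda^*| \leq 4/(\epsilon(1-\gamma))$, matching the chosen $U = O(1/(\epsilon(1-\gamma)))$.

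The heart of the proof is to establish, for every iterate $\pihatt$ produced by \cref{alg:cmdp-generative}, the uniform concentration $|\rewardhatp{\pihatt} - \reward{\pihatt}| = O(\epsilon)$ and $|\consthat{\pihatt} - \const{\pihatt}| = O(\epsilon)$. Because $\pihatt$ is data-dependent, a direct Hoeffding bound fails. Two design choices resolve this. First, the reward perturbation $\xi \sim \mathcal{U}[0,\omega]$ with $\omega = \epsilon(1-\gamma)/8$ ensures that for each fixed $\lambda$ the unconstrained MDP with reward $r_p + \lambda c$ has a unique optimal policy almost surely. Second, the dual rounding confines $\lambdat$ to the finite grid $\Lambda$, of size $|\Lambda| = O(U/\epsl) = \tilde{O}(1)$. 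For each fixed $\lambda \in \Lambda$ I would then invoke the model-based concentration machinery for unconstrained MDPs (cf.\ Agarwal et al.\ 2020, Li et al.\ 2020) combined with a variance-aware Bernstein/absorbing-MDP analysis, which yields concentration at the data-dependent optimal policy for the empirical MDP with reward $r_p + \lambda c$ using $N = \tilde{O}(1/((1-\gamma)^3 \epsilon^2))$ samples per state-action pair. A union bound over $\Lambda$ extends this to every $\pihatt$ simultaneously. The perturbation itself only contributes $\omega/(1-\gamma) = \epsilon/8$ to the reward discrepancy, which is absorbed into the $O(\epsilon)$ target.

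With these ingredients, applying \cref{thm:pd-guarantees} with target error $\epsp = \Theta(\epsilon)$ gives $\rewardhatp{\pihatbar} \geq \rewardhatp{\pihatopt} - \epsp$ and $\consthat{\pihatbar} \geq b' - \epsp$. Since $\piopt$ is feasible in $\Mhat$ by the Slater certificate above, $\rewardhatp{\pihatopt} \geq \rewardhatp{\piopt} \geq \reward{\piopt} - O(\epsilon) = \optreward - O(\epsilon)$, and uniform concentration further gives $\reward{\pihatbar} \geq \rewardhatp{\pihatbar} - O(\epsilon)$; chaining these yields $\reward{\pihatbar} \geq \optreward - \epsilon$. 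Symmetrically, $\const{\pihatbar} \geq \consthat{\pihatbar} - O(\epsilon) \geq b' - \epsp - O(\epsilon) \geq b - \epsilon$ for appropriate constants. The overall failure probability comes from four events --- uniform reward concentration over iterates, uniform constraint concentration over iterates, the single-policy concentration at $\piopt$ used for the Slater certificate, and the algorithmic guarantee of \cref{thm:pd-guarantees} --- giving $1 - 4\delta$ by a union bound.

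The main obstacle is the uniform concentration step over the data-dependent iterates: without the perturbation-plus-epsilon-net device, the $\pihatt$'s do not admit a clean concentration bound, and without a variance-aware Bernstein/absorbing-MDP analysis, the dependence on $1-\gamma$ would degrade from the optimal $(1-\gamma)^{-3}$ to $(1-\gamma)^{-4}$, missing the headline rate. Once uniform concentration over $\Lambda$ is in place, bounding $\lambda^*$ via the Slater argument and chaining the inequalities to pass from $\Mhat$ to $M$ are essentially mechanical, with the constants in $N$, $T$, $U$, and $\epsl$ chosen so that each error contribution is at most $\epsilon/8$.
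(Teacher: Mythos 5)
Your proposal is correct and follows the same architecture as the paper's proof: solve the loosened empirical CMDP with the primal--dual scheme, certify $\pihatbar$ against $\Mhat$ via \cref{thm:pd-guarantees}, and transfer to $M$ using the perturbation-plus-$\Lambda$-net device together with the absorbing-MDP/Bernstein machinery of Li et al.\ to get concentration at the data-dependent iterates, then union-bound four failure events. The one genuine deviation is how you bound $\lambda^*$: you use $\piopt$ as the Slater point for $\Mhat$ (feasible with margin $\approx \epsilon/4$ once $\consthat{\piopt} \geq \const{\piopt} - \epsilon/8$), whereas the paper's \cref{lemma:dual-bound} uses the constraint-maximizing policy $\pi^*_c := \argmax \const{\pi}$ and its slack $\zeta + (b - b')$. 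Your route is slightly leaner here --- it reuses the concentration event at $\piopt$ that is needed anyway for feasibility of $\piopt$ in $\Mhat$, and drops the separate concentration requirement at $\pi^*_c$ --- but it is specific to the relaxed setting: when the constraint is tightened to $b' = b + \Delta$ (strict feasibility), $\piopt$ has no margin and one must fall back on $\pi^*_c$ and $\zeta$, which is why the paper states a single lemma covering both cases. One small imprecision: the perturbation does not merely make the optimizer of $r_p + \lambda c$ unique almost surely; what the argument actually needs (and what \cref{lemma:gapli} delivers) is a quantitative gap $\iota > 0$ with probability $1-\delta$, since $\iota$ controls the resolution, and hence the cardinality, of the grid $U_{s,a}$ in the absorbing-MDP union bound. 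Since you defer to that machinery rather than reprove it, this does not affect correctness of the outline.
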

\begin{proofsketch}
We prove the result for a general primal-dual error $\epsp < \epsilon$ and $b' = b - \frac{\epsilon - \epsp}{2}$, and subsequently specify $\epsp$ and hence $b'$. 
In~\cref{lemma:decomposition-relaxed} (proved in~\cref{app:proof-relaxed}), we show that if the constraint value functions are sufficiently concentrated (the empirical value function is close to the ground truth value function) for both the optimal policy $\piopt$ in $M$ and the mixture policy $\pihatbar$ returned by~\cref{alg:cmdp-generative}, i.e., if  
\begin{align}
\abs{\const{\pihatbar} - \consthat{\pihatbar}} \leq \frac{\epsilon - \epsp}{2} \quad \text{;} \quad \abs{\const{\piopt} - \consthat{\piopt}} \leq \frac{\epsilon - \epsp}{2}, \label{eq:sampling-const-relaxed} 
\end{align}
then (i) policy $\pihatbar$ violates the constraint in $M$ by at most $\epsilon$, i.e., $\const{\pihatbar} \geq b - \epsilon$, and (ii) its suboptimality in $M$ (compared to $\piopt$) can be decomposed as:
\begin{align}
\reward{\piopt} - \reward{\pihatbar} & \leq \frac{2 \omega}{1 - \gamma} + \epsp + \abs{\rewardp{\piopt} - \rewardhatp{\piopt}} + \abs{\rewardhatp{\pihatbar} - \rewardp{\pihatbar}}.
\label{eq:decomposition-relaxed} 
\end{align}

In order to instantiate the primal-dual algorithm, we require a concentration result for policy $\pi^*_c$ that maximizes the the constraint value function, i.e. if $\pi^*_c := \argmax \const{\pi}$, then we require $\abs{\const{\pi^*_c} - \consthat{\pi^*_c}} \leq \epsilon + \epsp$. In Case 1 of~\cref{lemma:dual-bound} (proved in~\cref{app:proofs-pd}), we show that if this concentration result holds, then we can upper-bound the optimal dual variable $|\lambda^*|$ by $\frac{2 (1 + \omega)}{(\epsilon + \epsp) (1 - \gamma)}$. With these results in hand, we can instantiate all the algorithm parameters except $N$ (the number of samples required for each state-action pair). In particular, we set $\epsp = \frac{\epsilon}{4}$ and hence $b' = b - \frac{3 \epsilon}{8}$, and $\omega = \frac{\epsilon (1 - \gamma)}{8} < 1$. Setting $U = \frac{32}{5 \epsilon \, (1 - \gamma)}$ ensures that the $U > |\lambda^*|$ condition required by~\cref{thm:pd-guarantees} holds. To guarantee that the primal-dual algorithm outputs an $\frac{\epsilon}{4}$-approximate policy, we use~\cref{thm:pd-guarantees} to set $T = O\left(\frac{1}{(1 - \gamma)^4 \epsilon^4}\right)$ iterations and $\epsl = O\left(\epsilon^2 (1 - \gamma)^2 \right)$.~\cref{eq:decomposition-relaxed} can then be simplified as,
\begin{align}
\reward{\piopt} - \reward{\pihatbar} & \leq \frac{\epsilon}{2} + \abs{\rewardp{\piopt} - \rewardhatp{\piopt}} + \abs{\rewardhatp{\pihatbar} - \rewardp{\pihatbar}}. \nonumber
\end{align}
Putting everything together, in order to guarantee an $\epsilon$-reward suboptimality for $\pihatbar$, we require that:
\begin{align}
\abs{\const{\pi^*_c} - \consthat{\pi^*_c}} & \leq \frac{5 \epsilon}{4} \, \text{;} \, \abs{\const{\pihatbar} - \consthat{\pihatbar}} \leq \frac{3 \epsilon}{8} \, \text{;} \, \abs{\const{\piopt} - \consthat{\piopt}} \leq \frac{3 \epsilon}{8} \nonumber \\
\abs{\rewardp{\piopt} - \rewardhatp{\piopt}} & \leq \frac{\epsilon}{4} \, \text{;} \, \abs{\rewardhatp{\pihatbar} - \rewardp{\pihatbar}} \leq \frac{\epsilon}{4} \label{eq:conc-relaxed}. 
\end{align}
We control such concentration terms for both the constraint and reward value functions in~\cref{sec:concentration}, and bound the terms in~\cref{eq:conc-relaxed}. In particular, we prove that for a fixed $\epsilon \in \left(0, \nicefrac{1}{1 - \gamma}\right]$, using $N \geq \tilde{O}\left(\frac{\log(1/\delta)}{(1 - \gamma)^3 \, \epsilon^2}\right)$ samples enssures that the statements in~\cref{eq:conc-relaxed} hold  with probability $1 - 4 \delta$. This guarantees that $\reward{\piopt} - \reward{\pihatbar} \leq \epsilon$ and $\const{\pihatbar} \geq b - \epsilon$.
\end{proofsketch}
Hence, the total sample-complexity of achieving the objective in~\cref{eq:relaxed-objective} is $\tilde{O}\left(\frac{S A \log(1/\delta)}{(1 - \gamma)^3 \epsilon^2} \right)$. This result improves over the $\tilde{O}\left(\frac{S^2 A \log(1/\delta)}{(1 - \gamma)^3 \epsilon^2} \right)$ result in~\citet{hasan2021model}. Furthermore, our result matches the lower-bound in the easier unconstrained setting~\citep{azar2012sample}, implying that our bounds are near-optimal. We conclude that under relaxed feasibility and with access to a generative model, solving constrained MDPs is as easy as solving MDPs. Algorithmically, we do not require constructing an optimistic CMDP like in~\citet{hasan2021model}. Instead, we solve the empirical CMDP in~\cref{eq:emp-CMDP} using specific primal-dual updates~\cref{eq:primal-update,eq:dual-update}. Note that if the rewards and constraint rewards (corresponding to $K$ constraints) are unknown and need to be estimated, a union bound guarantees that the sample complexity will only increase by a multiplicative $\log(K+1)$ factor~\citep{hasan2021model}.

In this setting, when using the \emph{normalized value functions},~\citet[Corollary 1]{bai2021achieving} prove an $O\left(\frac{S A}{(1 - \gamma)^2 \, \epsilon^2 \zeta^2}\right)$ bound on the sample complexity. When translated to the standard $\left[0,\nicefrac{1}{1 - \gamma}\right]$ range, this implies an $O\left(\frac{S A}{(1 - \gamma)^4 \,\epsilon^2 \zeta^2}\right)$ bound~\citep[Footnote 6]{bai2021achieving}. In comparison, our result in~\cref{thm:ub-relaxed} has a better dependence on $\nicefrac{1}{1 - \gamma}$ and does not depend on $\zeta$. Importantly, unlike~\citep{bai2021achieving}, our result implies that in the relaxed feasibility setting, solving CMDPs is as hard as solving MDPs. 

In the next section, we instantiate~\cref{alg:cmdp-generative} in the strict feasibility setting. 

\section{Upper-bound under Strict Feasibility}
\label{sec:ub-strict}
Unlike~\cref{sec:ub-relaxed}, since the strict feasibility setting does not allow any constraint violations, it necessitates using a stricter constraint in the empirical CMDP to account for the estimation error in the transition probabilities. Algorithmically, we require setting $b' > b$. Specifically, in order to achieve the objective in~\cref{eq:strict-objective} for a target error $\epsilon > 0$, we require setting $N = \tilde{O}\left(\frac{\log(1/\delta)}{(1 - \gamma)^5 \zeta^2 \epsilon^2} \right)$,\footnote{Again, we do not need to know $\zeta$ and it can be replaced by the estimator constructed in Section~\ref{app:est-zeta}.} $b' = b + \frac{\epsilon (1 - \gamma) \zeta}{20}$ and $\omega = \frac{\epsilon (1 - \gamma)}{10}$. This completely specifies the empirical CMDP $\Mhat$ and the problem in~\cref{eq:emp-CMDP}. To specify the primal-dual algorithm, we set $U = \frac{4 (1 + \omega)}{\zeta (1 - \gamma)}$,  $\epsl = O\left(\epsilon^2 (1 - \gamma)^4 \zeta^2 \right)$ and $T = O\left(\nicefrac{1}{(1 - \gamma)^6 \zeta^4 \epsilon^2}\right)$. With these choices, we prove the following theorem in~\cref{app:proof-strict}, and provide a proof sketch below. 
\begin{restatable}{theorem}{ubstrict}
For a fixed $\epsilon \in \left(0, \nicefrac{1}{1 - \gamma}\right]$ and  $\delta\in(0,1)$, \cref{alg:cmdp-generative},  with $N = \tilde{O}\left(\frac{\log(1/\delta)}{(1 - \gamma)^5 \epsilon^2 \zeta^2} \right)$ samples, $b' = b + \frac{\epsilon (1 - \gamma) \zeta}{20}$, $\omega = \frac{\epsilon (1 - \gamma)}{10}$, $U = \frac{4 (1 + \omega)}{\zeta (1 - \gamma)}$, $\epsl = O\left(\epsilon^2 (1 - \gamma)^4 \zeta^2 \right)$ and $T = O\left(\nicefrac{1}{(1 - \gamma)^6 \zeta^4 \epsilon^2}\right)$ returns policy $\pihatbar$ that satisfies the objective in~\cref{eq:strict-objective}, with probability at least $1 - 4 \delta$.
\label{thm:ub-strict} 
\end{restatable}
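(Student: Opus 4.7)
The plan is to mirror the proof of \cref{thm:ub-relaxed}, with two structural changes dictated by the zero-violation requirement: the empirical constraint RHS is \emph{tightened} (rather than loosened) so that any residual slack in $\Mhat$ still leaves $\pihatbar$ strictly feasible in $M$, and the optimal dual variable is bounded via Slater's condition instead of the trivial $1/(1-\gamma)$ reward bound. First I would prove a strict-feasibility analogue of \cref{lemma:decomposition-relaxed}: if $\abs{\const{\pi} - \consthat{\pi}} = O(\epsilon(1-\gamma)\zeta)$ holds for $\pi \in \{\pihatbar, \piopt, \pi_c^*\}$, then the tightening width $b'-b = \frac{\epsilon(1-\gamma)\zeta}{20}$ absorbs both the primal-dual slack $\epsp$ and the constraint-concentration error, so that transferring $\consthat{\pihatbar} \geq b' - \epsp$ back to $M$ yields $\const{\pihatbar} \geq b$ exactly.

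The novel piece is the reward comparison, since $\piopt$ is not guaranteed to satisfy the \emph{tightened} constraint in $\Mhat$. Let $\pi_c^* \in \argmax_\pi \const{\pi}$ with $\const{\pi_c^*} \geq b+\zeta$, and build the randomized mixture $\tilde\pi = (1-\alpha)\piopt + \alpha\pi_c^*$ (executed by sampling one of the two at the start, so the value function is linear in $\alpha$). Choosing $\alpha = \Theta(\epsilon(1-\gamma))$ gives $\const{\tilde\pi} \geq b + \alpha\zeta$, and an extra concentration slack of $O(\epsilon(1-\gamma)\zeta)$ leaves $\consthat{\tilde\pi} \geq b'$, so $\tilde\pi$ is feasible in $\Mhat$ and competes with $\pihatopt$. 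Since $\optreward \leq 1/(1-\gamma)$ the $(1-\alpha)$ factor costs only $O(\epsilon)$ in reward; combining with \cref{thm:pd-guarantees} and the reward concentration for $\piopt$ and $\pihatbar$ yields
\begin{align*}
\optreward - \reward{\pihatbar} \leq \tfrac{2\omega}{1-\gamma} + \epsp + O(\epsilon) + \abs{\rewardp{\piopt} - \rewardhatp{\piopt}} + \abs{\rewardhatp{\pihatbar} - \rewardp{\pihatbar}},
\end{align*}
mirroring \cref{eq:decomposition-relaxed}.

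For the dual bound I would invoke Case 2 of \cref{lemma:dual-bound}: concentration of $\consthat{\pi_c^*}$ within $\zeta/2$ of $\const{\pi_c^*}$ gives $\Mhat$ a Slater slack of at least $\zeta/4$ after the tightening, whence $|\lambda^*| \leq \frac{2(1+\omega)}{\zeta(1-\gamma)}$. Setting $U = \frac{4(1+\omega)}{\zeta(1-\gamma)}$ satisfies the hypothesis of \cref{thm:pd-guarantees}. The constraint in $\Mhat$ caps $\epsp$ at $\Theta(\epsilon(1-\gamma)\zeta)$ (strictly smaller than the tightening width so that $\epsp$ plus concentration fits inside the tightening budget), and \cref{thm:pd-guarantees} then forces $T = \Theta\bigl(U^2/((1-\gamma)^2\epsp^2)\bigr) = \Theta(1/((1-\gamma)^6\zeta^4\epsilon^2))$ and $\epsl = \Theta(\epsp^2(1-\gamma)^2) = \Theta(\epsilon^2(1-\gamma)^4\zeta^2)$, matching the stated parameters.

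The main obstacle is the concentration step and is what drives the sample requirement to $N \gtrsim (1-\gamma)^{-5}\zeta^{-2}\epsilon^{-2}$. The constraint value function has range $1/(1-\gamma)$, and controlling $\abs{\const{\pi} - \consthat{\pi}}$ at the fine scale $\Theta(\epsilon(1-\gamma)\zeta)$ already requires $N \gtrsim 1/((1-\gamma)^3 \cdot (\epsilon(1-\gamma)\zeta)^2) = 1/((1-\gamma)^5\zeta^2\epsilon^2)$ even for fixed policies. Data-dependence of $\pihatt$ and of the dual iterates $\lambdat$ is handled exactly as in \cref{sec:concentration}: the reward perturbation $\omega = \epsilon(1-\gamma)/10$ makes the Bellman-optimal policy unique almost surely for each Lagrangian reward $r_p + \lambda c$, and a union bound over the $\epsl$-net $\Lambda$ (of cardinality $O(U/\epsl)$, polynomial in the problem parameters) contributes only logarithmic overhead. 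A final union bound over the $O(1)$ constraint- and reward-concentration events yields the probability $1 - 4\delta$ guarantee.
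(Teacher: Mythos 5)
Your proposal is correct and reaches the same decomposition, parameter settings, and sample complexity, but it handles the one genuinely delicate step --- that $\piopt$ need not be feasible for the \emph{tightened} empirical constraint $\consthat{\pi}\ge b'=b+\Delta$ --- by a different mechanism. The paper (\cref{lemma:decomposition-strict,lemma:sensitivity}) introduces a second, \emph{loosened} empirical CMDP with RHS $b''=b-\Delta$, for which $\piopt$ is feasible under the concentration event, and then bounds the ``sensitivity error'' $\rewardhatp{\pitildeopt}-\rewardhatp{\pihatopt}\le 2\Delta\lambda^*$ by evaluating the Lagrangian of the tightened problem at $\pitildeopt$; combined with Case 2 of \cref{lemma:dual-bound} this contributes the $O(\epsilon)$ term $4\Delta(1+\omega)/(\zeta(1-\gamma))$. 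You instead use the primal Slater-point mixing trick: $\tilde\pi=(1-\alpha)\piopt+\alpha\pi_c^*$ with $\alpha=\Theta(\epsilon(1-\gamma))$ is feasible for the tightened empirical problem and loses only $\alpha/(1-\gamma)=O(\epsilon)$ in reward. The two arguments are essentially primal and dual views of the same LP sensitivity fact, and they yield the same order: $\Delta\lambda^*\lesssim \Delta/(\zeta(1-\gamma))$ versus $\alpha/(1-\gamma)$ with $\alpha\zeta\gtrsim\Delta$. Your route is arguably more elementary for this step (no strong duality needed there, only linearity of values in the occupancy-measure mixture, which you correctly note requires interpreting $\tilde\pi$ as a mixture executed by an initial coin flip rather than a per-state interpolation of action probabilities); the paper's route keeps the dual variable $\lambda^*$ explicit, which is convenient since the same bound on $\lambda^*$ is needed anyway to set $U$ in \cref{thm:pd-guarantees}. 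Everything else --- the feasibility transfer through $\Delta-\epsp$, the dual bound via \cref{lemma:dual-bound}, the choice of $T$ and $\epsl$, the $\Theta(\epsilon(1-\gamma)\zeta)$ concentration scale forcing $N\gtrsim (1-\gamma)^{-5}\zeta^{-2}\epsilon^{-2}$, and the perturbation-plus-epsilon-net handling of data-dependent policies --- matches the paper's proof.
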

\begin{proofsketch}
We prove the result for a general $b' = b + \Delta$ for $\Delta > 0$ and primal-dual error $\epsp < \Delta$, and subsequently specify $\Delta$ (and hence $b'$) and $\epsp$. In~\cref{lemma:decomposition-strict} (proved in~\cref{app:proof-strict}), we prove that if the constraint value functions are sufficiently concentrated (the empirical value function is close to the ground truth value function) for both the optimal policy $\piopt$ in $M$ and the mixture policy $\pihatbar$ returned by~\cref{alg:cmdp-generative} i.e. if 
\begin{align}
\abs{\const{\pihatbar} - \consthat{\pihatbar}} \leq \Delta - \epsp \quad \text{;} \quad \abs{\const{\piopt} - \consthat{\piopt}} \leq \Delta
\label{eq:sampling-const-strict}
\end{align}
then (i) policy $\pihatbar$ satisfies the constraint in $M$ i.e. $\const{\pihatbar} \geq b$, and (ii) its suboptimality in $M$ (compared to $\piopt$) can be decomposed as:
\begin{align}
\reward{\piopt} - \reward{\pihatbar} & \leq \frac{2 \omega}{1 - \gamma} + \epsp + 2 \Delta |\lambda^*| + \abs{\rewardp{\piopt} - \rewardhatp{\piopt}} +  \abs{\rewardhatp{\pihatbar} - \rewardp{\pihatbar}}
\label{eq:decomposition-strict}
\end{align}
In order to upper-bound $|\lambda^*|$, we require a concentration result for policy $\pi^*_c := \argmax \const{\pi}$ that maximizes the the constraint value function. In particular, we require $\Delta \in \left(0,\frac{\zeta}{2}\right)$ and $\abs{\const{\pi^*_c} - \consthat{\pi^*_c}} \leq \frac{\zeta}{2} - \Delta$. In Case 2 of~\cref{lemma:dual-bound} (proved in~\cref{app:proofs-pd}), we show that if this concentration result holds, then we can upper-bound the optimal dual variable $|\lambda^*|$ by $\frac{2 (1 + \omega)}{\zeta (1 - \gamma)}$. Using the above bounds to simplify~\cref{eq:decomposition-strict},
\begin{align}
\reward{\piopt} - \reward{\pihatbar} & \leq \frac{2 \omega}{1 - \gamma} + \epsp + \frac{4 \Delta (1 + \omega)}{\zeta (1 - \gamma)} + \abs{\rewardp{\piopt} - \rewardhatp{\piopt}} +  \abs{\rewardhatp{\pihatbar} - \rewardp{\pihatbar}} \nonumber.
\end{align}
With these results in hand, we can instantiate all the algorithm parameters except $N$ (the number of samples required for each state-action pair). In particular, we set $\Delta = \frac{\epsilon \, (1 - \gamma) \, \zeta}{40} < \frac{\zeta}{2}$, $\epsp = \frac{\Delta}{5} = \frac{\epsilon \, (1 - \gamma) \, \zeta}{200} < \frac{\epsilon}{5}$, 
and $\omega = \frac{\epsilon (1 - \gamma)}{10} < 1$. We set $U = \frac{8}{\zeta (1 - \gamma)}$ for the primal-dual algorithm, ensuring that the $U > |\lambda^*|$ condition required by~\cref{thm:pd-guarantees} holds. In order to guarantee that the primal-dual algorithm outputs an $\frac{\epsilon \, (1 - \gamma) \, \zeta}{200}$-approximate policy, we use~\cref{thm:pd-guarantees} to set $T = O\left(\frac{1}{(1 - \gamma)^6 \zeta^4 \epsilon^2}\right)$ iterations and $\epsl = O\left(\epsilon^2 (1 - \gamma)^4 \zeta^2 \right)$. With these values, we can further simplify~\cref{eq:decomposition-strict},
\begin{align}
\reward{\piopt} - \reward{\pihatbar} & \leq \frac{3 \epsilon}{5} + \abs{\rewardp{\piopt} - \rewardhatp{\piopt}} +  \abs{\rewardhatp{\pihatbar} - \rewardp{\pihatbar}}. \nonumber
\end{align}
Putting everything together, in order to guarantee an $\epsilon$-reward suboptimality for $\pihatbar$, we require the following concentration results to hold for $\Delta = \frac{\epsilon (1 - \gamma) \zeta}{40}$,
\begin{align}
\abs{\const{\pihatbar} - \consthat{\pihatbar}} & \leq \frac{4 \Delta}{5} \, \text{;} \, \abs{\const{\piopt} - \consthat{\piopt}} \leq \Delta \, \text{;} \abs{\const{\pi^*_c} - \consthat{\pi^*_c}} \leq \frac{19 \Delta}{5} \nonumber \\
\abs{\rewardp{\piopt} - \rewardhatp{\piopt}} & \leq \frac{\epsilon}{5} \, \text{;} \, \abs{\rewardhatp{\pihatbar} - \rewardp{\pihatbar}} \leq \frac{\epsilon}{5}. \label{eq:conc-strict} 
\end{align}
We control such concentration terms for both the constraint and reward value functions in~\cref{sec:concentration}, and bound the terms in~\cref{eq:conc-strict}. In particular, we prove that for a fixed $\epsilon \in \left(0, \nicefrac{1}{1 - \gamma}\right]$, using $N \geq \tilde{O}\left(\frac{\log(1/\delta)}{(1 - \gamma)^5 \, \zeta^2 \, \epsilon^2}\right)$ ensures that the statements in~\cref{eq:conc-strict} hold with probability $1 - 4 \delta$. This guarantees that $\reward{\piopt} - \reward{\pihatbar} \leq \epsilon$ and $\const{\pihatbar} \geq b$. 
\end{proofsketch}
\vspace{-2ex}
Hence, the total sample-complexity of achieving the objective in~\cref{eq:strict-objective} is $ \tilde{O} \left( \frac{S A \, \log(1/\delta)}{(1 - \gamma)^5 \, \zeta^2 \epsilon^2} \right)$. Similar to~\cref{sec:ub-relaxed}, in the strict feasibility setting, with the \emph{normalized value functions}, ~\citet{bai2021achieving} prove an $O\left(\frac{S A}{(1 - \gamma)^2 \, \epsilon^2 \zeta^2}\right)$ bound on the sample complexity. When translated to the standard $\left[0, \nicefrac{1}{1 - \gamma}\right]$ range, this implies an $\Omega\left(\frac{S A}{(1 - \gamma)^6 \, \epsilon^2 \zeta^2}\right)$ bound (see~\cref{app:bai-comparison} for a detailed explanation). In comparison, our result in~\cref{thm:ub-strict} has a better dependence on $(\nicefrac{1}{1 - \gamma})$.

In~\cref{sec:lb-strict}, we prove a matching lower bound showing that~\cref{alg:cmdp-generative} is minimax optimal in the strict feasibility setting. In the next section, we give more details for the bounding the concentration terms in~\cref{thm:ub-relaxed} and~\cref{thm:ub-strict}. 
\section{Bounding the concentration terms}
\label{sec:concentration}
We have seen that proving~\cref{thm:ub-relaxed} and~\cref{thm:ub-strict} require bounding the concentration terms in~\cref{eq:conc-relaxed} and~\cref{eq:conc-strict} respectively. In this section, we detail the techniques to achieve these bounds. 

Our approach requires reasoning about a general unconstrained MDP $M_{\alpha} = (\cS, \cA, \cP, \gamma, \alpha)$ with the same state-action space, transition probabilities and discount factor as the CMDP in~\cref{eq:true-CMDP} but with rewards equal to $\alpha$, coming from $[0, \alpha_{\max}]$. Analogously, we define the empirical MDP $\hat{M}_{\alpha} = (\cS, \cA, \cPhat, \gamma, \alpha)$ where the empirical transition matrix $\cPhat$ is the same as that of the empirical CMDP in~\cref{eq:emp-CMDP}. Similarly, we define MDP (and its empirical counterpart) $M_{\beta} = (\cS, \cA, \cP, \gamma, \beta)$ (and $\hat{M}_{\beta}$) where the rewards $\beta$ are from $[0, \beta_{\max}]$. Note that the rewards $\alpha$ and $\beta$ are independent of the sampling of the transition matrix. The corresponding value functions for policy $\pi$ in $M_\alpha$ and $\hat{M}_\alpha$ (and $M_\beta$ and $\hat{M}_\beta$) are denoted as $\rewarda{\pi}$ and $\rewardahat{\pi}$ (and $\rewardb{\pi}$ and $\rewardbhat{\pi}$) respectively, with the optimal value functions denoted as $\rewarda{*}$ and $\rewardahat{*}$ (and $\rewardb{*}$ and $\rewardbhat{*}$) respectively. The action-value function in $M_\alpha$ for policy $\pi$ and state-action pair $(s,a)$ is denoted as ${Q}^\pi_{\alpha}(s,a)$ and analogously for $\hat{M}_\alpha$. For the subsequent technical results, we require that $\hat{M}_\alpha$ satisfy the following gap condition~\citep{li2020breaking}: 
\begin{definition}[$\iota$-Gap Condition] MDP $\hat{M}_\alpha$ satisfies the $\iota$-gap condition if $\forall s$, $\hat{V}^*_{\alpha}(s) - \max_{a': a \neq \pihat^*_{\alpha}(s)}\hat{Q}^*_{\alpha}(s,a') \ge \iota$, where $\pihat^*_{\alpha} := \argmax \hat{V}^\pi_{\alpha}$ and $\pihat^*_{\alpha}(s) = \argmax_{a} \hat{Q}^*_{\alpha}(s,a)$ is the optimal action in state $s$. 
\label{def:gap-condition}
\end{definition}
Intuitively, the gap condition states that there is a unique optimal action at each state and there is a gap between the performance of best action and the second best action. With this gap condition, we use techniques in~\citet{li2020breaking} to prove the following lemma in~\cref{app:proofs-concentration}. 
\begin{restatable}{lemma}{main}
\label{lemma:main} 
Define $\pihat^*_\alpha := \argmax_{\pi} \rewardahat{\pi}$. If (i) $\cE$ is the event that the $\iota$-gap condition in~\cref{def:gap-condition} holds for $\hat{M}_\alpha$ and (ii) for $\delta \in (0,1)$ and $C(\delta) = 72 \log \left( \frac{16 \alpha_{\max} S A \log\left(\nicefrac{e}{1 - \gamma}\right)}{(1 - \gamma)^2 \, \iota \, \delta} \right)$, the number of samples per state-action pair is $N \geq \frac{4 \, C(\delta)}{1-\gamma}$, then with probability at least $\Pr[\cE] - \delta/10$,
\[
\norminf{\rewardbhat{\pihat^*_\alpha}  - \rewardb{\pihat^*_\alpha}} \leq  \sqrt{
\frac{C(\delta)}{N\cdot (1-\gamma)^3 }} \norminf{\beta}.
\]
\end{restatable}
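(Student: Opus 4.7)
The difficulty is that $\pihat^*_\alpha$ depends on the samples forming $\cPhat$, so a direct Bernstein bound on $\rewardbhat{\pihat^*_\alpha}-\rewardb{\pihat^*_\alpha}$ would be circular. The plan is to adapt the leave-one-out / absorbing-state construction of \citet{agarwal2020model,li2020breaking} so that the concentration applies to a data-dependent policy whose dependence on the samples at each state $s$ is decoupled from the bound at that state. For each $s\in\cS$ and each $u\in[0,\alpha_{\max}/(1-\gamma)]$, I would define an auxiliary empirical MDP $\hat M^{\alpha}_{s,u}$ that agrees with $\hat M_\alpha$ everywhere except at $s$, where the state becomes absorbing with immediate reward $u$. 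The optimal policy $\pi^{s,u}:=\argmax_{\pi}\hat V^\pi$ of $\hat M^{\alpha}_{s,u}$ depends only on $\{\cPhat(\cdot\mid s',\cdot)\}_{s'\ne s}$ and is therefore statistically independent of the samples drawn from $\cP(\cdot\mid s,\cdot)$.

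First I would discretize $u$ on an $\iota$-spaced grid $\mathcal U$ of cardinality $O\!\bigl(\alpha_{\max}/[(1-\gamma)\iota]\bigr)$, which is exactly the quantity that appears inside the logarithm defining $C(\delta)$. For each $(s,u)\in\cS\times\mathcal U$, Bernstein's inequality combined with the standard variance bound $\operatorname{Var}_{\cP(\cdot\mid s,a)}\!\bigl(\rewardb{\pi^{s,u}}\bigr)\le\norminf{\beta}^2/(1-\gamma)^2$ and the hypothesis $N\ge 4C(\delta)/(1-\gamma)$ yields a per-coordinate bound
\begin{align*}
\bigl|\rewardbhat{\pi^{s,u}}(s)-\rewardb{\pi^{s,u}}(s)\bigr|\;\le\;\sqrt{\frac{C(\delta)}{N(1-\gamma)^3}}\,\norminf{\beta}
\end{align*}
with failure probability at most $\delta/(10\,|\cS|\,|\mathcal U|)$; a union bound over $\cS\times\mathcal U$ costs only an additive $\log(|\cS|\,|\mathcal U|)$ term that is absorbed inside $C(\delta)$.

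Next, on the event $\cE$, I would use the $\iota$-gap to pick, for each $s$, the grid point $u^\star(s)$ closest to $\hat V^{*}_\alpha(s)$. A standard reward-perturbation argument shows that since the gap at every state is at least $\iota$ and $|u^\star(s)-\hat V^{*}_\alpha(s)|\le \iota/2$, the induced policy $\pi^{s,u^\star(s)}$ must coincide with $\pihat^*_\alpha$ on $\cS\setminus\{s\}$: the perturbation at $s$ is too small to flip the unique optimal action anywhere else. Because the action of $\pihat^*_\alpha$ at $s$ enters the value function only through the samples at $(s,\cdot)$---which $\pi^{s,u^\star(s)}$ does not depend on---a short telescoping identity transfers the per-policy Bernstein bound above to the $s$-entry of $\rewardbhat{\pihat^*_\alpha}-\rewardb{\pihat^*_\alpha}$. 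Iterating this substitution over all $s\in\cS$ gives the desired uniform bound, with total failure probability at most $\delta/10$ subtracted from $\Pr[\cE]$.

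The main obstacle, as I see it, is the coupling in the last step: one must telescope $\rewardbhat{\pihat^*_\alpha}(s)-\rewardb{\pihat^*_\alpha}(s)$ against $\rewardbhat{\pi^{s,u^\star(s)}}(s)-\rewardb{\pi^{s,u^\star(s)}}(s)$ without losing a factor of $1/(1-\gamma)$ from the residual discrepancy at the absorbing state itself, and do so using only that the $\iota$-gap pins down the optimal actions on $\cS\setminus\{s\}$. The bookkeeping among the constant $72$, the horizon factor $\log(e/(1-\gamma))$ reflecting the effective truncation of the discounted return, and the $1/\iota$ grid resolution should arise naturally from the Bernstein tail together with the union bound, but verifying these prefactors match the stated $C(\delta)$ is the detailed technical work.
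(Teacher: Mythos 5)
Your high-level architecture is the right one and matches the paper's: decouple the data-dependence of $\pihat^*_\alpha$ via an absorbing-MDP construction, cover the absorbing reward $u$ with a finite grid, use the $\iota$-gap condition to show $\pihat^*_\alpha$ equals one of the grid policies, and union bound. However, there is a genuine gap at the heart of your step two. You claim that Bernstein's inequality combined with the crude variance bound $\operatorname{Var}_{\cP(\cdot\mid s,a)}(\rewardb{\pi^{s,u}})\le\norminf{\beta}^2/(1-\gamma)^2$ directly yields the per-coordinate bound $\sqrt{C(\delta)/(N(1-\gamma)^3)}\,\norminf{\beta}$ on the value-function error. It does not: the value-function error is $\hat V_\beta^\pi - V_\beta^\pi = \gamma(I-\gamma\hat P^\pi)^{-1}(\hat P^\pi-P^\pi)V_\beta^\pi$, so the single-application Bernstein term $\sqrt{\log(\cdot)/N}\cdot\norminf{\beta}/(1-\gamma)$ gets multiplied by the resolvent norm $1/(1-\gamma)$, giving a $(1-\gamma)^{-4}$ rate inside the square root, not $(1-\gamma)^{-3}$. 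Recovering the extra factor of $(1-\gamma)$ is exactly the total-variance argument, and the paper handles it by establishing Bernstein bounds not for $V_\beta^\pi$ alone but for the entire recursively defined family $V_\beta^{\pi,(l)}=(I-\gamma P^\pi)^{-1}\sqrt{\operatorname{Var}_{P_\pi}(V_\beta^{\pi,(l-1)})}$ for all $l\le\log(e/(1-\gamma))$, then invoking Lemma~2 of \citet{li2020breaking} (restated as \cref{lemma:bern2con}). Your proposal never introduces this recursion; the "telescoping" you describe in the last step is about transferring bounds between policies, not about the variance recursion, so the missing factor of $(1-\gamma)$ is not recovered anywhere.

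Two smaller issues. First, you make the entire state $s$ absorbing, which leaves the auxiliary optimal policy's action at $s$ undetermined and forces the awkward coupling argument you yourself flag as the main obstacle; the paper instead makes the single pair $(s,a)$ absorbing with reward $u^*=\hat Q^*_\alpha(s,a)-\gamma\hat V^*_\alpha(s)$, verifies the Bellman equations so that $\pihat^*_\alpha$ is optimal in the absorbing MDP \emph{everywhere}, and only needs independence from $\cPhat_{s,a}$, which dissolves the obstacle. Second, your grid spacing of $\iota$ with tolerance $|u^\star(s)-\hat V^*_\alpha(s)|\le\iota/2$ is too coarse: a reward perturbation of size $\iota/2$ moves $Q^*$ by up to $\iota/(2(1-\gamma))$, which can exceed the gap; the paper's \cref{policy-gap} requires spacing $\iota(1-\gamma)/2$. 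This only affects logarithmic factors, but the first issue above is a real missing ingredient without which the stated $(1-\gamma)^{-3}$ rate does not follow.
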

\vspace{-1ex}
Hence, for policy $\pihat^*_\alpha$, we can obtain a concentration result in another MDP $\hat{M}_{\beta}$ with an independent reward function $\beta$ and the same empirical transition matrix $\cPhat$. 

We wish to use the above lemma for the unconstrained MDP formed at every iteration of the primal update in~\cref{eq:primal-update}. In particular, for a given $\lambdat$, we will use~\cref{lemma:main} with $\alpha = r_p + \lambdat c$ and $\beta = r_p$. Doing so will immediately give us a bound on $\norminf{\rewardrp{\pihatt} - \rewardrphat{\pihatt}}$ and hence $\abs{\rewardp{\pihatt} - \rewardhatp{\pihatt}}$. In order to use~\cref{lemma:main}, we require the unconstrained MDP $\hat{M}_{r_p + \lambda c}$ to satisfy the gap condition in~\cref{def:gap-condition} for any $\lambda \in \Lambda$. This is achieved by the perturbation of the rewards in Line 3 of~\cref{alg:cmdp-generative}. Specifically, using~\citet[Lemma 6]{li2020breaking} with a union-bound over $\Lambda$, we prove (in~\cref{lemma:gap} in~\cref{app:proofs-concentration}) that with probability $1 - \delta/10$, $\hat{M}_{r_p + \lambda c}$ satisfies the gap condition in~\cref{def:gap-condition} with $\iota = \frac{\omega \, \delta \, (1-\gamma)}{30 \, |\Lambda||S||A|^2}$ for every $\lambda \in \Lambda$. This allows us to use~\cref{lemma:main} with $\alpha = r_p + \lambdat c$ for all $t \in [T]$, and $\beta = r_p$ and $\beta = c$. In the following theorem, we obtain a concentration result for each $\pihatt$ and hence for the mixture policy $\pihatbar$.  
\begin{restatable}{theorem}{mainconcentrationemp}
\label{thm:main-concentration-emp}
For $\delta\in(0,1)$, $\omega \leq 1$ and $C(\delta) = 72 \log \left(\frac{16 (1 + U + \omega) \, S A \log\left(\nicefrac{e}{1 - \gamma}\right)}{(1 - \gamma)^2 \, \iota \, \delta} \right)$ where $\iota = \frac{\omega \, \delta \, (1-\gamma) \, \epsl}{30 \, U |S||A|^2}$, if $N \geq \frac{4 \, C(\delta)}{1-\gamma}$, then for $\pihatbar$ output by~\cref{alg:cmdp-generative}, with probability at least $1 - \delta/5$, 
\[
\abs{\rewardp{\pihatbar} - \rewardhatp{\pihatbar}} \leq  2 \sqrt{
\frac{C(\delta)}{N \cdot (1-\gamma)^3 }} \quad \text{;} \quad 
\abs{\const{\pihatbar} - \const{\pihatbar}} \leq  \sqrt{
\frac{C(\delta)}{N \cdot (1-\gamma)^3 }}.
\]
\end{restatable}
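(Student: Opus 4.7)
The plan is to combine the single-policy concentration bound in Lemma \ref{lemma:main} with a union bound over the epsilon-net $\Lambda$, and then exploit linearity of value functions under the policy mixture. First, I would apply Lemma \ref{lemma:gap} to establish a global gap event $\cE$: with probability at least $1 - \delta/10$, the $\iota$-gap condition of Definition \ref{def:gap-condition} holds simultaneously for the empirical MDP $\hat{M}_{r_p + \lambda c}$ for every $\lambda \in \Lambda$, with $\iota = \frac{\omega \, \delta \, (1-\gamma) \, \epsl}{30 \, U |S||A|^2}$. The factor $|\Lambda| \leq U/\epsl + 1$ in this $\iota$ is exactly what allows the subsequent union bound over dual iterates.

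Next, conditioning on $\cE$, I would apply Lemma \ref{lemma:main} in turn for each $\lambda \in \Lambda$, taking $\alpha = r_p + \lambda c$ (so that $\|\alpha\|_\infty \leq 1 + \omega + U$) and, for each such $\alpha$, running the lemma twice: once with $\beta = r_p$ (giving $\|\beta\|_\infty \leq 1 + \omega \leq 2$) and once with $\beta = c$ (giving $\|\beta\|_\infty \leq 1$). Invoking the lemma with failure parameter $\delta/(2|\Lambda|)$ per application and union-bounding over the $2|\Lambda|$ applications, the total extra failure probability beyond $\cE$ is at most $\delta/10$. The $\log(2|\Lambda|) = O(\log(U/\epsl))$ overhead from shrinking $\delta$ is exactly what appears inside $C(\delta)$ through the $\epsl$ in the denominator of $\iota$. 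On the resulting good event (probability $\geq 1 - \delta/5$), for every $\lambda \in \Lambda$ and denoting $\pi^*_\lambda := \argmax_\pi \hat{V}^\pi_{r_p + \lambda c}$, we obtain simultaneously
\begin{align*}
\norminf{\hat{V}^{\pi^*_\lambda}_{r_p} - V^{\pi^*_\lambda}_{r_p}} \leq 2 \sqrt{\tfrac{C(\delta)}{N(1-\gamma)^3}}, \qquad
\norminf{\hat{V}^{\pi^*_\lambda}_{c} - V^{\pi^*_\lambda}_{c}} \leq \sqrt{\tfrac{C(\delta)}{N(1-\gamma)^3}}.
\end{align*}

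Since each primal iterate $\pihatt$ coincides with $\pi^*_{\lambdat}$ for some $\lambdat \in \Lambda$ (by the primal update \eqref{eq:primal-update} and the rounding in \eqref{eq:dual-update}), these bounds apply uniformly to every $\pihatt$ for $t = 0,\dots,T-1$. I would then interpret $\pihatbar = \frac{1}{T}\sum_{t=0}^{T-1}\pihatt$ as the stochastic policy that, at the start of each trajectory, samples $t \sim \mathrm{Unif}(\{0,\dots,T-1\})$ and then commits to $\pihatt$ for the entire trajectory. Under this sample-then-commit mixture, expectation over the outer randomness gives the linearity identity $V^{\pihatbar}_\beta(\rho) = \frac{1}{T}\sum_t V^{\pihatt}_\beta(\rho)$, and likewise for $\hat{V}^{\pihatbar}_\beta(\rho)$. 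A triangle inequality then transfers the per-iterate bounds to $\pihatbar$, yielding the two inequalities in the theorem.

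The main obstacles are bookkeeping rather than conceptual: carefully choosing the per-application failure budget so the $\log|\Lambda|$ overhead fits inside the stated $C(\delta)$, and verifying that the mixture interpretation produces linear value functions (this requires the sample-then-commit semantics, not per-step policy mixing; the latter would correspond to a different Markov policy whose value is not generally the average of component values). Provided these are set up correctly, the rest reduces to straightforward union bounds and triangle inequalities over what Lemma \ref{lemma:main} already supplies.
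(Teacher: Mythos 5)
Your proposal is correct and follows essentially the same route as the paper: establish the $\iota$-gap condition uniformly over $\Lambda$ via Lemma \ref{lemma:gap}, invoke Lemma \ref{lemma:main} with $\alpha = r_p + \lambda c$ and $\beta \in \{r_p, c\}$ for each dual value, and transfer the per-iterate bounds to $\pihatbar$ via linearity of the mixture value and the triangle inequality. Your explicit allocation of $\delta/(2|\Lambda|)$ per application is in fact a slightly more careful accounting of the union bound over $\Lambda$ than the paper's write-up, which applies the lemma per iterate $t$ and relies on the $|\Lambda|$ factor buried in $\iota$; the substance is identical.
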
 
\cref{eq:conc-relaxed,eq:conc-strict} also require proving concentration bounds for fixed (that do not depend on the data) policies $\piopt$ and $\pi^*_c$. This can be done by directly using~\citet[Lemma 1]{li2020breaking}. Specifically, we prove the following the lemma in~\cref{app:proofs-concentration}. 
\begin{restatable}{lemma}{mainconcentrationopt}
\label{lemma:main-concentration-opt}
For $\delta \in (0,1)$, $\omega \leq 1$ and $C'(\delta) = 72 \log \left(\frac{4 |S| \log(e/1-\gamma)}{\delta}\right)$, if $N \geq \frac{4 \, C'(\delta)}{1 - \gamma}$ and $B(\delta,N) := \sqrt{\frac{C'(\delta)}{(1 - \gamma)^3 N}}$, then with probability at least $1 - 3 \delta$, 
\begin{align*}
\abs{\rewardp{\piopt} - \rewardhatp{\piopt}} & \leq  2  B(\delta,N) \, \text{;} \, \abs{\const{\piopt} - \consthat{\piopt}} \leq B(\delta,N) \, \text{;} \, \abs{\const{\pi^*_c} - \consthat{\pi^*_c}} \leq B(\delta,N). 
\end{align*}
\end{restatable}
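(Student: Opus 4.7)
The plan is to reduce the three bounds to a single off-the-shelf concentration inequality applied to fixed (data-independent) policies, and then union bound. The key structural observation is that $\piopt$ is the optimal policy of the true CMDP $M$ and $\pi^*_c := \argmax_\pi \const{\pi}$ is defined using the true transition matrix $\cP$; neither depends on the samples used to build $\cPhat$ in Line 2 of \cref{alg:cmdp-generative}. Moreover, the reward perturbation $\xi$ used to form $r_p$ is drawn independently of those samples, so conditional on any realization of $\xi$, the perturbed reward $r_p$ is a fixed vector in $[0,1+\omega]^{SA}\subseteq [0,2]^{SA}$.

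The main step is to invoke a fixed-policy Bernstein-style concentration bound for empirical value functions, e.g.\ \citet[Lemma 1]{li2020breaking}: for any fixed policy $\pi$ and any fixed reward vector $\alpha\in[0,\alpha_{\max}]^{SA}$, provided $N\ge \tfrac{4C'(\delta)}{1-\gamma}$, with probability at least $1-\delta$,
\[
\norminf{\hat V_\alpha^{\pi}-V_\alpha^{\pi}} \;\le\; \alpha_{\max}\sqrt{\frac{C'(\delta)}{(1-\gamma)^3 N}} \;=\; \alpha_{\max}\,B(\delta,N).
\]
I would then apply this three times. First, with $\pi=\piopt$ and $\alpha=r_p$ (where $\alpha_{\max}\le 1+\omega\le 2$), yielding $|\rewardp{\piopt}-\rewardhatp{\piopt}|\le 2B(\delta,N)$. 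Second, with $\pi=\piopt$ and $\alpha=c\in[0,1]^{SA}$, yielding $|\const{\piopt}-\consthat{\piopt}|\le B(\delta,N)$. Third, with $\pi=\pi^*_c$ and $\alpha=c$, yielding $|\const{\pi^*_c}-\consthat{\pi^*_c}|\le B(\delta,N)$. A union bound over the three events gives total failure probability at most $3\delta$.

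The only subtlety (and the one place any real care is required) is verifying that $r_p$ can be treated as ``fixed'' when invoking the cited lemma. Since the perturbation $\xi$ is drawn independently of $\cPhat$, one first conditions on $\xi$: under this conditioning $r_p$ is deterministic while $\cPhat$ has the same product-Bernoulli distribution as marginally, so the cited lemma applies conditionally, and then the bound holds marginally by averaging over $\xi$. I do not foresee any other obstacle; there is no data-dependent-policy issue here precisely because $\piopt$ and $\pi^*_c$ are defined through the true model, which is what distinguishes this lemma from \cref{thm:main-concentration-emp} and lets us use the standard single-policy concentration bound directly rather than the leave-one-out machinery behind \cref{lemma:main}.
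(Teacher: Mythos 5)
Your proposal is correct and matches the paper's proof, which likewise observes that $\piopt$ and $\pi^*_c$ are fixed policies independent of the sampling and directly invokes Lemma 1 of \citet{li2020breaking}; your added care about conditioning on the perturbation $\xi$ and the explicit three-way union bound are just details the paper leaves implicit.
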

Using~\cref{thm:main-concentration-emp} and~\cref{lemma:main-concentration-opt}, we can bound each term in~\cref{eq:conc-relaxed} and~\cref{eq:conc-strict}, completing the proof of~\cref{thm:ub-relaxed} and~\cref{thm:ub-strict} respectively. In the next section, we prove a lower-bound on the sample-complexity in the strict feasibility setting. 

\section{Lower-bound under strict feasibility}
\label{sec:lb-strict}
For a target error of $\epsilon$, our lower bound construction demonstrates that it is important to estimate the constraint value function to a smaller error equal to $\epsilon' := \epsilon (1 - \gamma) \zeta$. Intuitively, this is because a small ($\epsilon'$) estimation error in the constraint value can incorrectly render the optimal policy infeasible and result in a large $\nicefrac{\epsilon'}{(1 - \gamma) \zeta}$ suboptimality in the reward value. In~\cref{app:lb-bandit}, we detail this intuition in a simplified bandit setting and present the formal CMDP lower-bound below.

\begin{figure}[!ht]
\begin{center}
\includegraphics[width=0.8\textwidth]{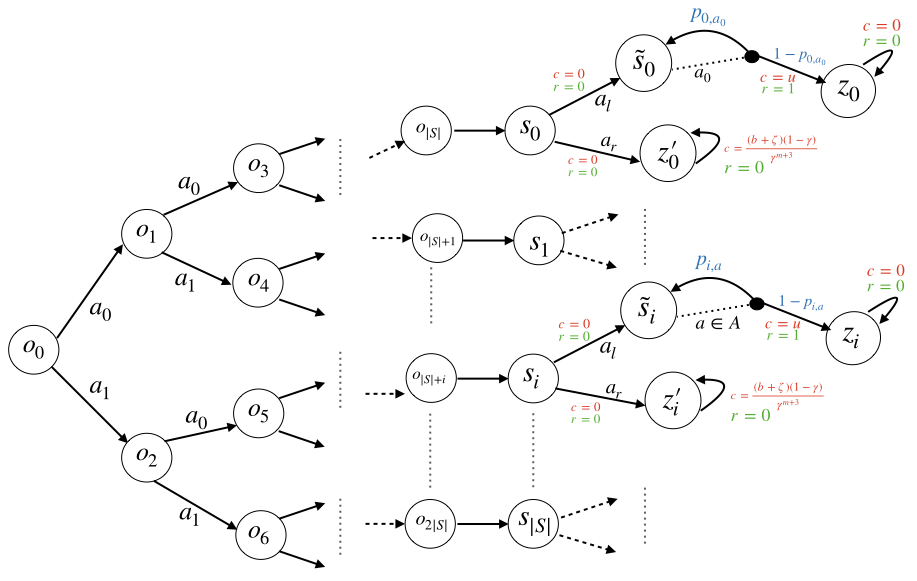}    
\end{center}
\caption{The instance consists of CMDPs with $S = 2^m - 1$ (for some integer $m > 0$) states and $A$ actions. We consider $SA + 1$ CMDPs -- $M_0$ and $M_{i,a}$ ($i \in \{1,\ldots S\}$, $a \in \{1,\ldots, A\}$) that share the same structure shown in the figure. For each CMDP, $o_0$ is the fixed starting state and there is a deterministic path of length $m+1$ from $o_0$ to each of the $S + 1$ states -- $s_i$ (for $i \in \{0, 1, \ldots, S\}$). Except for states $\tilde{s}_i$, the transitions in all other states are deterministic. For $i \neq 0$, for action $a \in \cA$ in state $\tilde{s}_i$, the probability of staying in $\tilde{s}_i$ is $p_{i,a}$, while that of transitioning to state $z_i$ is $1 - p_{i,a}$. There is only one action $a_0$ in $\tilde{s}_0$ and the probability of staying in $\tilde{s}_0$ is $p_{0,a_0}$, while that of transitioning to state $z_0$ is $1 - p_{0,a_0}$. The CMDPs $M_0$ and $M_{i,a}$ only differ in the values of $p_{i,a}$. The rewards $r$ and constraint rewards $c$ are the same in all CMDPs and are denoted in green and red respectively.
    } 
    \label{fig:lb-main}
\end{figure}
\vspace{1ex}
We define an algorithm to be $(\epsilon, \delta)$-sound if it outputs a policy $\pihat$ such that with probability $1 - \delta$, $V_r^{*}(\rho) - V_r^{\pihat}(\rho) \leq \epsilon$ and $V_c^{\pihat}(\rho) \geq b$ i.e. the algorithm achieves the strict feasibility objective in~\cref{eq:strict-objective}. We prove a lower bound on the number of samples required by any $(\epsilon, \delta)$-sound algorithm on the CMDP instance in~\cref{fig:lb-main}. For this instance, with a specific setting of the rewards and probabilities $p_0 < \bar{p} < p_1$, we prove that any $(\epsilon, \delta)$-sound algorithm requires at least $\Omega \left(\frac{\ln(|\cS| |\cA|/4 \delta)}{\epsilon^2 \zeta^2 (1- \gamma)^5} \right)$ samples to distinguish between $M_0$ and $M_{i,a}$. In particular, we prove the following theorem in~\cref{app:proof-lb}. 

\begin{restatable}{theorem}{lbstrict}
There exists constants $\gamma_0 \in (1-1/\log(|\cS|),1)$, 
$0\le \epsilon_0 \le \frac{1}{(1 - \gamma)} \, \min\left\{1, \frac{\gamma}{(1 - \gamma) \zeta} \right\}$, $\delta_0\in (0, 1)$, such that, 
for any $\gamma \in (\gamma_0, 1), \epsilon \in (0, \epsilon_0), \delta \in (0,\delta_0)$, any $(\epsilon, \delta)$-sound algorithm requires $\Omega \left(\frac{SA \ln(1/4 \delta)}{\epsilon^2 \zeta^2 (1- \gamma)^5} \right)$ samples from the generative model in the worst case.
\label{thm:lb-strict}
\end{restatable}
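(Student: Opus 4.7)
The plan is to establish the lower bound via a standard change-of-measure / Le Cam-type argument applied to the family of $SA+1$ hard instances described in Figure~\ref{fig:lb-main}. The high-level structure mirrors the Azar--Munos--Kappen lower bound for MDPs but must be adapted to account for the constraint reward and the Slater constant $\zeta$. First I would specify the probabilities $\{p_{i,a}\}$ and constraint threshold $b$ so that in the reference instance $M_0$ all recurrent actions at $\tilde{s}_i$ are identical (with some baseline $p_{i,a}=\bar{p}$), while each alternative $M_{i,a}$ is obtained from $M_0$ by a localized perturbation $p_{i,a}\leftarrow \bar{p}+\Delta$ at a single state-action pair. The parameters $\bar p$, $\Delta$, the two rewards, and $b$ should be chosen so that (i) the Slater constant of every instance equals $\zeta$, (ii) the optimal feasible policy in $M_{i,a}$ plays action $a$ at $\tilde{s}_i$, whereas playing $a$ at $\tilde{s}_i$ in $M_0$ either strictly violates the constraint or loses $\epsilon$ in reward, and conversely for the optimal policy in $M_0$ evaluated under $M_{i,a}$.

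Following the usual geometric-series calculation for value functions on the self-loop at $\tilde s_i$, one can verify that a per-step transition-probability gap of order $\Delta \asymp \epsilon\,\zeta\,(1-\gamma)^2$ is exactly what is needed to induce a gap of order $\epsilon$ in the reward value function while simultaneously crossing the constraint threshold by $\Theta(\zeta)$. With the classical choice $\bar p = 1 - c(1-\gamma)$ for a small constant $c$, so that $\bar p(1-\bar p) \asymp (1-\gamma)$, the per-sample KL divergence of the Bernoulli transitions at $(\tilde s_i,a)$ satisfies
\[
\KL\bigl(\mathrm{Ber}(\bar p)\,\big\|\,\mathrm{Ber}(\bar p + \Delta)\bigr) \ \lesssim\ \frac{\Delta^2}{\bar p(1-\bar p)} \ \asymp\ \epsilon^2\,\zeta^2\,(1-\gamma)^3.
\]
Since the two CMDPs $M_0$ and $M_{i,a}$ agree on every other $(s,a)$, the chain rule for KL divergence of the full sample trajectory gives $\KL(\mathbb{P}_0 \,\|\, \mathbb{P}_{i,a}) \le \mathbb{E}_0[N_{i,a}]\cdot \epsilon^2\zeta^2(1-\gamma)^3$, where $N_{i,a}$ is the (possibly adaptive) number of queries at $(\tilde s_i,a)$.

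Next I would convert $(\epsilon,\delta)$-soundness into a testing lower bound: because the optimal strictly feasible policies in $M_0$ and $M_{i,a}$ are separated (one of them violates the constraint or is $\epsilon$-suboptimal in the other instance), any sound algorithm must induce output distributions whose total variation between $M_0$ and $M_{i,a}$ is at least $1-2\delta$. Combining this with Pinsker/Le Cam yields $\mathbb{E}_0[N_{i,a}] = \Omega\!\left(\log(1/\delta)/(\epsilon^2\zeta^2(1-\gamma)^3)\right)$. A pigeonhole argument over the $SA$ pairs $(i,a)$ (since $\sum_{i,a}\mathbb{E}_0[N_{i,a}] \le \mathbb{E}_0[\text{total samples}]$, and any $(\epsilon,\delta)$-sound algorithm must certify \emph{each} coordinate) produces the $SA$ factor, and the remaining factor $(1-\gamma)^{-2}$ in the final rate comes from the length-$(m+1)$ deterministic tree from $o_0$ to $\tilde s_i$ together with the discount inflation $1/(1-\gamma p_{i,a})$ on the self-loop, which scales the effective query count by $(1-\gamma)^{-2}$.

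The main obstacle is the constraint-aware instance design: one must simultaneously calibrate the two rewards, $b$, $\bar p$ and $\Delta$ so that the Slater constant is exactly $\zeta$ in every instance, the per-instance reward and constraint value functions admit closed-form expressions as geometric sums, and the switch from $\bar p$ to $\bar p+\Delta$ simultaneously (a) moves the optimal action and (b) leaves the $(\epsilon,\delta)$-soundness constraint active on the right side (i.e.\ a wrong answer really is both $\epsilon$-suboptimal and strictly infeasible). Unlike the unconstrained setting, a single reward gap does not suffice; the scaling $\Delta \asymp \epsilon\zeta(1-\gamma)^2$ is precisely what is needed so that the induced change in the constraint value is $\Theta(\epsilon)$ when the constraint is nearly tight (slack $\zeta$), and this is where the $\zeta^{-2}(1-\gamma)^{-2}$ overhead compared to the unconstrained MDP lower bound emerges. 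I would therefore organize the proof as: (1) explicit formulas for $V_r^\pi(\rho)$ and $V_c^\pi(\rho)$ on this chain-with-loop topology; (2) a calibration lemma fixing $\bar p, \Delta, b$ and the rewards given $\gamma,\epsilon,\zeta$; (3) the KL computation; (4) Le Cam plus pigeonhole, with the restrictions on $\gamma_0,\epsilon_0,\delta_0$ arising from the calibration in step (2).
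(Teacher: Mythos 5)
Your high-level architecture matches the paper's proof almost exactly: the same family of $SA+1$ instances from \cref{fig:lb-main}, a localized perturbation of the self-loop probability at a single $(\tilde{s}_i,a)$, a calibration of the constraint reward so that every instance has Slater constant $\zeta$, a change-of-measure argument giving a per-pair lower bound on $\mathbb{E}_{M_0}[N_{i,a}]$, and a sum over the $SA$ pairs. You also correctly identify the one genuinely new mechanism: the perturbation induces only a $\Theta(\epsilon\zeta(1-\gamma))$ shift in the value functions, which by itself would not need to be resolved, but near-tightness of the constraint amplifies this by $\Theta\bigl((1-\gamma)^{-1}\zeta^{-1}\bigr)$ into an $\epsilon$-sized gap in achievable reward. (The paper uses a direct likelihood-ratio bound on a high-probability event rather than Pinsker, but that is immaterial.)

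There is, however, a genuine quantitative gap in your calibration and in how you account for the powers of $(1-\gamma)$. The sensitivity of the self-loop value $1/(1-\gamma p)$ to $p$ is $\gamma/(1-\gamma p)^2 \asymp (1-\gamma)^{-2}$, so to produce the required $\Theta(\epsilon') = \Theta(\epsilon\zeta(1-\gamma))$ separation in $1/(1-\gamma q_0)$ versus $1/(1-\gamma q_2)$ the perturbation must be $\Delta \asymp \epsilon'(1-\gamma)^2 = \epsilon\zeta(1-\gamma)^3$ (this is exactly the paper's $\alpha_2 \asymp (1-\gamma q_0)^2\epsilon'$), not $\epsilon\zeta(1-\gamma)^2$ as you propose. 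With the correct $\Delta$, the per-sample divergence is $\Delta^2/(\bar{p}(1-\bar{p})) \asymp \epsilon^2\zeta^2(1-\gamma)^5$ and the per-pair bound $\mathbb{E}_{M_0}[N_{i,a}] \gtrsim \log(1/\delta)/(\epsilon^2\zeta^2(1-\gamma)^5)$ comes out directly, matching the paper's $t_* = c_{10}\log\delta^{-1}/((1-\gamma)^3\epsilon'^2)$. Your larger $\Delta$ yields only $(1-\gamma)^{-3}$ per pair, and the step you invoke to recover the missing $(1-\gamma)^{-2}$ --- the length-$(m+1)$ routing tree and the discount inflation ``scaling the effective query count'' --- is not a valid mechanism: with a generative model the algorithm queries $(\tilde{s}_i,a)$ directly, the deterministic tree contributes nothing to the sample complexity, and the discount inflation on the self-loop is already fully spent in determining how small $\Delta$ may be. Separately, your soundness-to-testing step (``the output distributions must be $1-2\delta$ apart in total variation'') needs the care the paper takes in defining the event $\cE_0$ on the occupancy measure of the \emph{output policy} and solving the constrained LP under the extra occupancy restriction, so as to rule out a single hedging policy that is simultaneously near-optimal and feasible in both $M_0$ and $M_{i,a}$; you gesture at this but it is where the constants $\gamma_0,\epsilon_0,\delta_0$ and the choice of $x=\Theta(\zeta)$ actually get pinned down.
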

The above lower bound matches the upper bound in~\cref{thm:ub-strict} and proves that~\cref{alg:cmdp-generative} is near minimax optimal in the strict feasibility setting. It also demonstrates that solving CMDPs under strict feasibility is inherently more difficult than solving unconstrained MDPs or CMDPs in the relaxed feasibility setting. Finally, we can conclude that the problem becomes more difficult (requires more samples) as the Slater constant $\zeta$ decreases and the feasible region shrinks. 
\section{Discussion}
\label{sec:discussion}
We proposed a model-based primal-dual algorithm for planning in CMDPs. Via upper and lower bounds, we proved that our algorithm is near minimax optimal for both the relaxed and strict feasibility settings. Our results demonstrate that solving CMDPs is as easy as MDPs when small constraint violations are allowed, but inherently more difficult when we demand zero constraint violation. Algorithmically, we required a specific primal-dual approach that involved solving a sequence of MDPs. In contrast, model-based approaches for MDPs~\citep{agarwal2020model,li2020breaking} allow the use of any black-box planner. It is possible to obtain an $O\left(\frac{S^2 A}{(1 - \gamma)^3 \, \epsilon^2}\right)$ sample complexity for a black-box CMDP planner in the relaxed feasibility setting. However, the $O(S^2 A)$ dependence in the bound implies that we need to accurately estimate all entries in the transition probability matrix, and is therefore loose in the special case of unconstrained MDPs~\citep{agarwal2020model, li2020breaking}. In the future, we aim to extend our near-optimal sample complexity results to black-box CMDP solvers.

\section*{Acknowledgements}
We would like to thank Reza Babanezhad and Arushi Jain for helpful feedback on the paper. Csaba Szepesv\'ari gratefully acknowledges the funding from Natural Sciences and Engineering Research Council (NSERC) of Canada, ``Design.R AI-assisted CPS Design'' (DARPA)  project and the Canada CIFAR AI Chairs Program for Amii. Lin Yang is supported in part by DARPA grant HR00112190130 and NSF Award 2221871. This work was partly done while Lin Yang was visiting Deepmind.

\bibliographystyle{plainnat}
\bibliography{ref}

\begin{thebibliography}{38}
\providecommand{\natexlab}[1]{#1}
\providecommand{\url}[1]{\texttt{#1}}
\expandafter\ifx\csname urlstyle\endcsname\relax
  \providecommand{\doi}[1]{doi: #1}\else
  \providecommand{\doi}{doi: \begingroup \urlstyle{rm}\Url}\fi

\bibitem[Achiam et~al.(2017)Achiam, Held, Tamar, and
  Abbeel]{achiam2017constrained}
Joshua Achiam, David Held, Aviv Tamar, and Pieter Abbeel.
\newblock Constrained policy optimization.
\newblock In \emph{International Conference on Machine Learning}, pages 22--31.
  PMLR, 2017.

\bibitem[Agarwal et~al.(2020)Agarwal, Kakade, and Yang]{agarwal2020model}
Alekh Agarwal, Sham Kakade, and Lin~F Yang.
\newblock Model-based reinforcement learning with a generative model is minimax
  optimal.
\newblock In \emph{Conference on Learning Theory}, pages 67--83. PMLR, 2020.

\bibitem[Altman(1999)]{altman1999constrained}
Eitan Altman.
\newblock \emph{Constrained Markov decision processes}, volume~7.
\newblock CRC Press, 1999.

\bibitem[Azar et~al.(2012)Azar, Munos, and Kappen]{azar2012sample}
Mohammad~Gheshlaghi Azar, R{\'e}mi Munos, and Bert Kappen.
\newblock On the sample complexity of reinforcement learning with a generative
  model.
\newblock \emph{arXiv preprint arXiv:1206.6461}, 2012.

\bibitem[Azar et~al.(2013)Azar, Munos, and Kappen]{azar2013minimax}
Mohammad~Gheshlaghi Azar, R{\'e}mi Munos, and Hilbert~J Kappen.
\newblock Minimax pac bounds on the sample complexity of reinforcement learning
  with a generative model.
\newblock \emph{Machine learning}, 91\penalty0 (3):\penalty0 325--349, 2013.

\bibitem[Bai et~al.(2021)Bai, Bedi, Agarwal, Koppel, and
  Aggarwal]{bai2021achieving}
Qinbo Bai, Amrit~Singh Bedi, Mridul Agarwal, Alec Koppel, and Vaneet Aggarwal.
\newblock Achieving zero constraint violation for constrained reinforcement
  learning via primal-dual approach.
\newblock \emph{arXiv preprint arXiv:2109.06332}, 2021.

\bibitem[Borkar and Jain(2014)]{borkar2014risk}
Vivek Borkar and Rahul Jain.
\newblock Risk-constrained markov decision processes.
\newblock \emph{IEEE Transactions on Automatic Control}, 59\penalty0
  (9):\penalty0 2574--2579, 2014.

\bibitem[Borkar(2005)]{borkar2005actor}
Vivek~S Borkar.
\newblock An actor-critic algorithm for constrained markov decision processes.
\newblock \emph{Systems \& control letters}, 54\penalty0 (3):\penalty0
  207--213, 2005.

\bibitem[Brantley et~al.(2020)Brantley, Dudik, Lykouris, Miryoosefi,
  Simchowitz, Slivkins, and Sun]{brantley2020constrained}
Kiant{\'e} Brantley, Miroslav Dudik, Thodoris Lykouris, Sobhan Miryoosefi, Max
  Simchowitz, Aleksandrs Slivkins, and Wen Sun.
\newblock Constrained episodic reinforcement learning in concave-convex and
  knapsack settings.
\newblock \emph{arXiv preprint arXiv:2006.05051}, 2020.

\bibitem[Buratti et~al.(2009)Buratti, Conti, Dardari, and
  Verdone]{buratti2009overview}
Chiara Buratti, Andrea Conti, Davide Dardari, and Roberto Verdone.
\newblock An overview on wireless sensor networks technology and evolution.
\newblock \emph{Sensors}, 9\penalty0 (9):\penalty0 6869--6896, 2009.

\bibitem[Chen et~al.(2021)Chen, Dong, and Wang]{chen2021primal}
Yi~Chen, Jing Dong, and Zhaoran Wang.
\newblock A primal-dual approach to constrained markov decision processes.
\newblock \emph{arXiv preprint arXiv:2101.10895}, 2021.

\bibitem[Ding et~al.(2021)Ding, Wei, Yang, Wang, and
  Jovanovic]{ding2021provably}
Dongsheng Ding, Xiaohan Wei, Zhuoran Yang, Zhaoran Wang, and Mihailo Jovanovic.
\newblock Provably efficient safe exploration via primal-dual policy
  optimization.
\newblock In \emph{International Conference on Artificial Intelligence and
  Statistics}, pages 3304--3312. PMLR, 2021.

\bibitem[Efroni et~al.(2020)Efroni, Mannor, and Pirotta]{efroni2020exploration}
Yonathan Efroni, Shie Mannor, and Matteo Pirotta.
\newblock Exploration-exploitation in constrained mdps.
\newblock \emph{arXiv preprint arXiv:2003.02189}, 2020.

\bibitem[Feng et~al.(2019)Feng, Yin, and Yang]{feng2019does}
Fei Feng, Wotao Yin, and Lin~F. Yang.
\newblock Does knowledge transfer always help to learn a better policy?
\newblock \emph{CoRR}, abs/1912.02986, 2019.
\newblock URL \url{http://arxiv.org/abs/1912.02986}.

\bibitem[Gattami et~al.(2021)Gattami, Bai, and
  Aggarwal]{gattami2021reinforcement}
Ather Gattami, Qinbo Bai, and Vaneet Aggarwal.
\newblock Reinforcement learning for constrained markov decision processes.
\newblock In \emph{International Conference on Artificial Intelligence and
  Statistics}, pages 2656--2664. PMLR, 2021.

\bibitem[HasanzadeZonuzy et~al.(2021)HasanzadeZonuzy, Kalathil, and
  Shakkottai]{hasan2021model}
Aria HasanzadeZonuzy, Dileep~M. Kalathil, and Srinivas Shakkottai.
\newblock Model-based reinforcement learning for infinite-horizon discounted
  constrained markov decision processes.
\newblock In Zhi{-}Hua Zhou, editor, \emph{Proceedings of the Thirtieth
  International Joint Conference on Artificial Intelligence, {IJCAI} 2021,
  Virtual Event / Montreal, Canada, 19-27 August 2021}, pages 2519--2525.
  ijcai.org, 2021.

\bibitem[Jain et~al.(2022)Jain, Vaswani, Babanezhad, Szepesvari, and
  Precup]{jain2022towards}
Arushi Jain, Sharan Vaswani, Reza Babanezhad, Csaba Szepesvari, and Doina
  Precup.
\newblock Towards painless policy optimization for constrained mdps.
\newblock \emph{arXiv preprint arXiv:2204.05176}, 2022.

\bibitem[Kakade(2003)]{kakade2003sample}
Sham~Machandranath Kakade.
\newblock \emph{On the sample complexity of reinforcement learning}.
\newblock University of London, University College London (United Kingdom),
  2003.

\bibitem[Kalagarla et~al.(2021)Kalagarla, Jain, and Nuzzo]{kalagarla2020sample}
Krishna~Chaitanya Kalagarla, Rahul Jain, and Pierluigi Nuzzo.
\newblock A sample-efficient algorithm for episodic finite-horizon {MDP} with
  constraints.
\newblock In \emph{Thirty-Fifth {AAAI} Conference on Artificial Intelligence,
  {AAAI}}, pages 8030--8037. {AAAI} Press, 2021.

\bibitem[Kearns and Singh(1999)]{kearns1999finite}
Michael Kearns and Satinder Singh.
\newblock Finite-sample convergence rates for q-learning and indirect
  algorithms.
\newblock \emph{Advances in neural information processing systems}, pages
  996--1002, 1999.

\bibitem[Li et~al.(2020)Li, Wei, Chi, Gu, and Chen]{li2020breaking}
Gen Li, Yuting Wei, Yuejie Chi, Yuantao Gu, and Yuxin Chen.
\newblock Breaking the sample size barrier in model-based reinforcement
  learning with a generative model.
\newblock \emph{Advances in neural information processing systems},
  33:\penalty0 12861--12872, 2020.

\bibitem[Miryoosefi and Jin(2022)]{miryoosefi2022simple}
Sobhan Miryoosefi and Chi Jin.
\newblock A simple reward-free approach to constrained reinforcement learning.
\newblock In \emph{International Conference on Machine Learning}, pages
  15666--15698. PMLR, 2022.

\bibitem[Mnih et~al.(2015)Mnih, Kavukcuoglu, Silver, Rusu, Veness, Bellemare,
  Graves, Riedmiller, Fidjeland, Ostrovski, et~al.]{mnih2015human}
Volodymyr Mnih, Koray Kavukcuoglu, David Silver, Andrei~A Rusu, Joel Veness,
  Marc~G Bellemare, Alex Graves, Martin Riedmiller, Andreas~K Fidjeland, Georg
  Ostrovski, et~al.
\newblock Human-level control through deep reinforcement learning.
\newblock \emph{nature}, 518\penalty0 (7540):\penalty0 529--533, 2015.

\bibitem[Paternain et~al.(2019)Paternain, Chamon, Calvo-Fullana, and
  Ribeiro]{paternain2019constrained}
Santiago Paternain, Luiz~FO Chamon, Miguel Calvo-Fullana, and Alejandro
  Ribeiro.
\newblock Constrained reinforcement learning has zero duality gap.
\newblock \emph{arXiv preprint arXiv:1910.13393}, 2019.

\bibitem[Puterman(2014)]{puterman2014markov}
Martin~L Puterman.
\newblock \emph{Markov decision processes: discrete stochastic dynamic
  programming}.
\newblock John Wiley \& Sons, 2014.

\bibitem[Schaefer et~al.(2005)Schaefer, Bailey, Shechter, and
  Roberts]{schaefer2005modeling}
Andrew~J Schaefer, Matthew~D Bailey, Steven~M Shechter, and Mark~S Roberts.
\newblock Modeling medical treatment using markov decision processes.
\newblock In \emph{Operations research and health care}, pages 593--612.
  Springer, 2005.

\bibitem[Sidford et~al.(2018)Sidford, Wang, Wu, Yang, and Ye]{sidford2018near}
Aaron Sidford, Mengdi Wang, Xian Wu, Lin~F Yang, and Yinyu Ye.
\newblock Near-optimal time and sample complexities for solving markov decision
  processes with a generative model.
\newblock In \emph{Proceedings of the 32nd International Conference on Neural
  Information Processing Systems}, pages 5192--5202, 2018.

\bibitem[Silver et~al.(2016)Silver, Huang, Maddison, Guez, Sifre, Van
  Den~Driessche, Schrittwieser, Antonoglou, Panneershelvam, Lanctot,
  et~al.]{silver2016mastering}
David Silver, Aja Huang, Chris~J Maddison, Arthur Guez, Laurent Sifre, George
  Van Den~Driessche, Julian Schrittwieser, Ioannis Antonoglou, Veda
  Panneershelvam, Marc Lanctot, et~al.
\newblock Mastering the game of go with deep neural networks and tree search.
\newblock \emph{nature}, 529\penalty0 (7587):\penalty0 484--489, 2016.

\bibitem[Tan et~al.(2018)Tan, Zhang, Coumans, Iscen, Bai, Hafner, Bohez, and
  Vanhoucke]{tan2018sim}
Jie Tan, Tingnan Zhang, Erwin Coumans, Atil Iscen, Yunfei Bai, Danijar Hafner,
  Steven Bohez, and Vincent Vanhoucke.
\newblock Sim-to-real: Learning agile locomotion for quadruped robots.
\newblock \emph{arXiv preprint arXiv:1804.10332}, 2018.

\bibitem[Tessler et~al.(2018)Tessler, Mankowitz, and Mannor]{tessler2018reward}
Chen Tessler, Daniel~J Mankowitz, and Shie Mannor.
\newblock Reward constrained policy optimization.
\newblock \emph{arXiv preprint arXiv:1805.11074}, 2018.

\bibitem[Wachi and Sui(2020)]{wachi2020safe}
Akifumi Wachi and Yanan Sui.
\newblock Safe reinforcement learning in constrained markov decision processes.
\newblock In \emph{International Conference on Machine Learning}, pages
  9797--9806. PMLR, 2020.

\bibitem[Wang(2020)]{wang2020randomized}
Mengdi Wang.
\newblock Randomized linear programming solves the markov decision problem in
  nearly linear (sometimes sublinear) time.
\newblock \emph{Mathematics of Operations Research}, 45\penalty0 (2):\penalty0
  517--546, 2020.

\bibitem[Wei et~al.(2021)Wei, Liu, and Ying]{wei2021provably}
Honghao Wei, Xin Liu, and Lei Ying.
\newblock A provably-efficient model-free algorithm for constrained markov
  decision processes.
\newblock \emph{arXiv preprint arXiv:2106.01577}, 2021.

\bibitem[Xiao et~al.(2021)Xiao, Lee, Dai, Schuurmans, and
  Szepesvari]{xiao2021sample}
Chenjun Xiao, Ilbin Lee, Bo~Dai, Dale Schuurmans, and Csaba Szepesvari.
\newblock On the sample complexity of batch reinforcement learning with
  policy-induced data.
\newblock \emph{arXiv preprint arXiv:2106.09973}, 2021.

\bibitem[Xu et~al.(2021)Xu, Liang, and Lan]{xu2021crpo}
Tengyu Xu, Yingbin Liang, and Guanghui Lan.
\newblock Crpo: A new approach for safe reinforcement learning with convergence
  guarantee.
\newblock In \emph{International Conference on Machine Learning}, pages
  11480--11491. PMLR, 2021.

\bibitem[Yu et~al.(2021)Yu, Tian, Zhang, and Sra]{yu2021provably}
Tiancheng Yu, Yi~Tian, Jingzhao Zhang, and Suvrit Sra.
\newblock Provably efficient algorithms for multi-objective competitive rl.
\newblock In \emph{International Conference on Machine Learning}, pages
  12167--12176. PMLR, 2021.

\bibitem[Zeng et~al.(2020)Zeng, Song, Lee, Rodriguez, and
  Funkhouser]{zeng2020tossingbot}
Andy Zeng, Shuran Song, Johnny Lee, Alberto Rodriguez, and Thomas Funkhouser.
\newblock Tossingbot: Learning to throw arbitrary objects with residual
  physics.
\newblock \emph{IEEE Transactions on Robotics}, 36\penalty0 (4):\penalty0
  1307--1319, 2020.

\bibitem[Zheng and Ratliff(2020)]{zheng2020constrained}
Liyuan Zheng and Lillian Ratliff.
\newblock Constrained upper confidence reinforcement learning.
\newblock In \emph{Learning for Dynamics and Control}, pages 620--629. PMLR,
  2020.

\end{thebibliography}

\newpage
\appendix
\newcommand{\appendixTitle}{%
\vbox{
    \centering
	\hrule height 4pt
	\vskip 0.2in
	{\LARGE \bf Supplementary material}
	\vskip 0.2in
	\hrule height 1pt 
}}

\appendixTitle

\section*{Organization of the Appendix}
\begin{itemize}

  \item[\ref{app:proofs-pd}] \nameref{app:proofs-pd}
    
  \item[\ref{app:proof-relaxed}] \nameref{app:proof-relaxed}
 
  \item[\ref{app:proof-strict}] \nameref{app:proof-strict}
  
  \item[\ref{app:proofs-concentration}] \nameref{app:proofs-concentration}
  
  \item[\ref{app:lb}] \nameref{app:lb}
  
  \item[\ref{app:est-zeta}] \nameref{app:est-zeta}
  
  \item[\ref{app:bai-comparison}] \nameref{app:bai-comparison}
  
\end{itemize}
\section{Proofs for primal-dual algorithm}
\label{app:proofs-pd}
\pdguarantees*
\begin{proof}
We will define the dual regret w.r.t $\lambda$ as the following quantity: 
\begin{align}
R^{d}(\lambda, T) & := \sum_{t = 0}^{T-1} (\lambdat - \lambda) \, (\consthat{\pihatt} - b')\,.
\label{eq:dual-regret-def}    
\end{align}
Using the primal update in~\cref{eq:primal-update}, for any $\pi$, 
\begin{align}
\rewardhatp{\pihatt} + \lambdat \consthat{\pihatt} \geq \rewardhatp{\pi} + \lambdat \consthat{\pi}\,.
\end{align}
Substituting $\pi = \pihatopt$, we have,
\begin{align}
\rewardhatp{\pihatopt} - \rewardhatp{\pihatt} & \leq \lambdat [\consthat{\pihatt} - \consthat{\pihatopt}]\,.
\end{align}
Since $\pihatopt$ is a solution to the empirical CMDP, $\consthat{\pihatopt} \geq b'$,
we get 
\begin{align}
\rewardhatp{\pihatopt} - \rewardhatp{\pihatt} & \leq \lambdat [\consthat{\pihatt} - b']\,. \label{eq:pd-guarantees-inter} \\
\end{align}

Starting from the definition of the dual regret (cf. \cref{eq:dual-regret-def}),
using~\cref{eq:pd-guarantees-inter} and dividing by $T$ gives
\begin{align*}
\frac{1}{T} \sum_{t = 0}^{T-1} \left[ \rewardhatp{\pihatopt} - \rewardhatp{\pihatt} \right] + \frac{\lambda}{T} \sum_{t = 0}^{T-1} (b' - \consthat{\pihatt})
& \leq \frac{R^d(\lambda, T)}{T}\,.
\end{align*}

Recall that $\bar{\pi}_T = \frac{1}{T} \sum_{t = 0}^{T-1} \pihatt$.
Then, by the definition of this ``mixture'',
we have $\frac{1}{T} \sum_{t = 0}^{T-1} \rewardhatp{\pihatt} = \rewardhatp{\bar{\pi}_T}$, and $\frac{1}{T} \sum_{t = 0}^{T-1} \consthat{\pihatt} = \consthat{\bar{\pi}_T}$. Combining this with the last inequality, we get
\begin{align}
\left[ \rewardhatp{\pihatopt} - \rewardhatp{\bar{\pi}_T} \right] + \lambda \, (b' - \consthat{\bar{\pi}_T}) & \leq R^d(\lambda, T) /T\,.
\end{align}
Note that this holds for all $\lambda$. 

Below we will show that the following inequality holds for any $\lambda\in [0,U]$:
\begin{align}
R^{d}(\lambda, T) & \leq T^{3/2} \, \frac{\epsl^2 + 2 \epsl U}{2 U (1- \gamma)} + \frac{U \sqrt{T}}{1 - \gamma}\,.
\label{eq:dual-regret-bound}        
\end{align}
This combined with the previous inequality (and the ``right'' choice of $T$, the number of updates and $\epsl$, the ``rounding parameter'') 
gives the desired bounds.
In particular, for the reward optimality gap, since $\lambda = 0 \in [0,U]$,
\begin{align*}
\rewardhatp{\pihatopt} - \rewardhatp{\pihatbar} & \leq \sqrt{T} \, \frac{\epsl^2 + 2 \epsl U}{2 U (1- \gamma)} + \frac{U}{(1 - \gamma) \sqrt{T}} < \sqrt{T} \, \frac{3 \epsl}{2 (1- \gamma)} + \frac{U}{(1 - \gamma) \sqrt{T}}\,. \tag{since $\epsl < U$} 
\end{align*}

For the constraint violation, there are two cases. The first case is when $b' - \consthat{\bar{\pi}_T}\le 0$. In this case, it also holds that $b'-\epsp-\consthat{\bar{\pi}_T}\le 0$, which is what we wanted to show.
The second case is when $b' - \consthat{\bar{\pi}_T}>0$. In this case, using the notation $[x]_{+} = \max\{x,0\}$, we have
\begin{align*}
\left[ \rewardhatp{\pihatopt} - \rewardhatp{\bar{\pi}_T} \right] + U \, \left[b' - \consthat{\bar{\pi}_T} \right]_{+} & \leq \frac{R^d(U, T)}{T} \,.
\end{align*}
Because by assumption it holds that $U > \lambda^*$, \cref{lemma:lag-constraint} is applicable and gives that
\begin{align*}
\left[b' - \consthat{\bar{\pi}_T} \right]_{+} & \leq \frac{R^d(U, T)}{T (U - \lambda^*)} \,.
\end{align*}

Hence, since $U \in [0,U]$, combining the above display with \cref{eq:dual-regret-bound} gives
\begin{align*}
\left[b' - \consthat{\bar{\pi}_T} \right] \leq \left[b' - \consthat{\bar{\pi}_T} \right]_{+} & \leq \sqrt{T} \, \frac{\epsl^2 + 2 \epsl U}{2 U (1- \gamma) \, (U - \lambda^*)} + \frac{U}{(U - \lambda^*) \, (1 - \gamma) \sqrt{T}} \\
& < \sqrt{T} \, \frac{3 \epsl}{2 (1- \gamma) \, (U - \lambda^*)} + \frac{U}{(U - \lambda^*) \, (1 - \gamma) \sqrt{T}}\,. \tag{since $\epsl < U$}
\end{align*}
Now, set $T$ such that the second term in both quantities is bounded from above by $\frac{\epsp}{2}$. 
This gives 
\begin{align*}
T & = T_0 := \frac{4 U^2}{\epsp^2 \, (1 - \gamma)^2} \left[1 + \frac{1}{(U - \lambda^*)^2} \right]\,.
\end{align*}
With $T = T_0$, the above expressions can be simplified as follows:
\begin{align*}
\rewardhatp{\pihatopt} - \rewardhatp{\pihatbar} & \leq \frac{2 U}{(1 - \gamma) \epsp} \, \left(1 + \frac{1}{U - \lambda^*} \right) \frac{3 \epsl}{2 (1- \gamma)} + \frac{\epsp}{2}\,, \\
\left[b' - \consthat{\bar{\pi}_T} \right] & \leq \frac{2 U}{(1 - \gamma) \epsp} \, \left(1 + \frac{1}{U - \lambda^*} \right) \frac{3 \epsl}{2 (1- \gamma) \, (U - \lambda^*)} + \frac{\epsp}{2}\,.
\end{align*}
Now, set $\epsl$ such that the first term in both quantities is also bounded from above by $\frac{\epsp}{2}$. For this, choose
\begin{align*}
\epsl & = \frac{\epsp^2 (1 - \gamma)^2 \, (U - \lambda^*)}{6 U}\,.
\end{align*}
With these values, the algorithm ensures that
\begin{align*}
\rewardhatp{\pihatopt} - \rewardhatp{\pihatbar}  \leq \epsp \quad \text{ and } \quad b' - \consthat{\bar{\pi}_T}  \leq \epsp.
\end{align*}

To finish the proof, it remains to show the bound \cref{eq:dual-regret-bound} on the dual regret.
For this, fix an arbitrary $\lambda \in [0,U]$.
We will track the change of $|\lambdat - \lambda|$. 
Defining $\lambdatt' := \PP_{[0,U]}[\lambdat - \eta \, (\consthat{\pihatt} - b')]$,
\begin{align*}
|\lambdatt - \lambda| & = |\mathcal{R}_{\Lambda}[\lambdatt'] - \lambda| = |\mathcal{R}_{\Lambda}[\lambdatt'] - \lambdatt' + \lambdatt' - \lambda| \leq |\mathcal{R}_{\Lambda}[\lambdatt'] - \lambdatt'| + |\lambdatt' - \lambda| \\
& \leq \epsl + |\lambdatt' - \lambda|\,.
 \tag{since $\vert \lambda - \mathcal{R}_{\Lambda}[\lambda] \vert \leq \epsl$ for all $\lambda \in [0,U]$ because of the epsilon-net.} 
\end{align*}
Squaring both sides,
\begin{align*}
|\lambdatt - \lambda|^2 & = \epsl^2 + |\lambdatt' - \lambda|^2 + 2 \epsl \, |\lambdatt' - \lambda| \leq \epsl^2 + 2 \epsl U + |\lambdatt' - \lambda|^2 \tag{since $\lambda$, $\lambdatt' \in [0,U]$, } \\
& \leq \epsl^2 + 2 \epsl U + |\lambdat - \eta \, (\consthat{\pihatt} - b') - \lambda|^2 \tag{since projections are non-expansive} \\
& = \epsl^2 + 2 \epsl U + |\lambdat - \lambda|^2 - 2 \eta \, (\lambdat - \lambda) \, (\consthat{\pihatt} - b') + \eta^2 (\consthat{\pihatt} - b')^2 \\
& \leq \epsl^2 + 2 \epsl U + |\lambdat - \lambda|^2 - 2 \eta \, (\lambdat - \lambda) \, (\consthat{\pihatt} - b') + \frac{\eta^2}{(1 - \gamma)^2}  \,,
\end{align*}
where the last inequality follows because $b'$ and the constraint value are in the $[0,1/(1-\gamma)]$ interval. Rearranging and dividing by $2 \eta$, we get
\begin{align*}
(\lambdat - \lambda) \, (\consthat{\pihatt} - b') & \leq \frac{\epsl^2 + 2 \epsl U}{2 \eta} + \frac{|\lambdat - \lambda|^2 - |\lambdatt - \lambda|^2}{2 \eta}  + \frac{\eta}{2 (1 - \gamma)^2}\,.
\end{align*}
Summing from $t = 0$ to $T-1$ and using the definition of the dual regret,
\begin{align*}
R^{d}(\lambda, T) & \leq T \, \frac{\epsl^2 + 2 \epsl U}{2 \eta} + \frac{1}{2 \eta} \sum_{t = 0}^{T-1} [|\lambdat - \lambda|^2 - |\lambdatt - \lambda|^2] + \frac{\eta T}{2 (1 - \gamma)^2} \,.
\end{align*}
Telescoping, bounding $|\lambda_0 - \lambda|$ by $U$ and dropping a negative term gives
\begin{align*}
R^{d}(\lambda, T)& \leq T \, \frac{\epsl^2 + 2 \epsl U}{2 \eta} + \frac{U^2}{2 \eta} + \frac{\eta T}{2 (1 - \gamma)^2} \,.
\end{align*}
Setting $\eta = \frac{U (1 - \gamma)}{\sqrt{T}}$,
\begin{align}
R^{d}(\lambda, T) & \leq T^{3/2} \, \frac{\epsl^2 + 2 \epsl U}{2 U (1- \gamma)} + \frac{U \sqrt{T}}{1 - \gamma}\,,
\label{eq:dual-regret-bound2}        
\end{align}
which finishes the proof.

\end{proof}

\begin{thmbox}
\begin{lemma}[Bounding the dual variable]
The objective~\cref{eq:emp-CMDP} satisfies strong duality. Defining $\pi^*_c := \argmax \const{\pi}$. We consider two cases: (1) If $b' = b - \epsilon'$ for $\epsilon' > 0$ and event $\mathcal{E}_1 = \left\{\abs{\consthat{\pi^*_c} -  \const{\pi^*_c}}  \leq \frac{\epsilon'}{2} \right\}$ holds, then $\lambda^* \leq \frac{2 (1 + \omega)}{\epsilon' (1 - \gamma)}$ and (2) If $b' = b + \Delta$ for $\Delta \in \left(0,\frac{\zeta}{2}\right)$ and event $\mathcal{E}_2  = \left\{\abs{\consthat{\pi^*_c} -  \const{\pi^*_c}} \leq \frac{\zeta}{2} - \Delta 
\right\}$ holds, then $\lambda^* \leq \frac{2 (1 + \omega)}{\zeta (1 - \gamma)}$. 
\label{lemma:dual-bound} 
\end{lemma}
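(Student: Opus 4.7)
The plan is to use the standard Lagrangian-duality bound on the optimal multiplier in terms of the Slater slack of any strictly feasible policy, and then show that $\pi^*_c$ is in fact strictly feasible in the \emph{empirical} CMDP (with appropriate slack) under the concentration event. I would first verify that strong duality holds for the empirical problem~\eqref{eq:emp-CMDP}: this is a standard fact for CMDPs whenever Slater's condition holds, and both assumed events will imply that a strictly feasible policy exists for $\hat M$, so the statement reduces to invoking the usual strong-duality result (e.g.~\citet{paternain2019constrained}).

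Once strong duality is available, the key inequality is the following: for the saddle point $(\hat\pi^*,\lambda^*)$,
\begin{align*}
\rewardrphat{\hat\pi^*}(\rho) \;=\; \max_{\pi}\bigl[\rewardrphat{\pi}(\rho)+\lambda^*(\consthat{\pi}-b')\bigr] \;\ge\; \rewardrphat{\tilde\pi}(\rho)+\lambda^*\bigl(\consthat{\tilde\pi}-b'\bigr),
\end{align*}
for any policy $\tilde\pi$ with $\consthat{\tilde\pi}>b'$. Rearranging and using $\rewardrphat{\pi}\in[0,(1+\omega)/(1-\gamma)]$ since the perturbed reward $r_p$ lies in $[0,1+\omega]$ gives
\begin{align*}
\lambda^* \;\le\; \frac{\rewardrphat{\hat\pi^*}(\rho)-\rewardrphat{\tilde\pi}(\rho)}{\consthat{\tilde\pi}-b'} \;\le\; \frac{1+\omega}{(1-\gamma)\,\bigl(\consthat{\tilde\pi}-b'\bigr)}.
\end{align*}
So the whole question reduces to producing a lower bound on the empirical slack $\consthat{\tilde\pi}-b'$ for a well-chosen $\tilde\pi$, which I take to be $\pi^*_c$.

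For Case~1, since the true CMDP is feasible, $\const{\pi^*_c}\ge b$; combining with $\mathcal{E}_1$ yields $\consthat{\pi^*_c}\ge b-\epsilon'/2$, so with $b'=b-\epsilon'$ the slack is at least $\epsilon'/2$, and the displayed inequality gives $\lambda^*\le 2(1+\omega)/[\epsilon'(1-\gamma)]$ as claimed. For Case~2, I use the definition of the Slater constant $\const{\pi^*_c}=b+\zeta$ together with $\mathcal{E}_2$ to get $\consthat{\pi^*_c}\ge b+\zeta-(\zeta/2-\Delta)=b+\zeta/2+\Delta$; since $b'=b+\Delta$, the slack is at least $\zeta/2$, yielding $\lambda^*\le 2(1+\omega)/[\zeta(1-\gamma)]$.

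I expect the only non-routine part to be a careful statement of strong duality for $\hat M$: it has to be justified that the empirical CMDP is strictly feasible on each event $\mathcal{E}_i$ (which the slack computations above already demonstrate), so that standard Lagrangian duality for CMDPs applies and the saddle-point $(\hat\pi^*,\lambda^*)$ is well-defined and finite. All remaining steps are algebraic manipulations and direct substitution, and the perturbation magnitude $\omega$ enters only through the uniform bound $\|r_p\|_\infty\le 1+\omega$.
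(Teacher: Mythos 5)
Your proposal is correct and follows essentially the same route as the paper: invoke strong duality for the empirical CMDP, test the Lagrangian at $\lambda^*$ with a strictly feasible policy, bound the reward gap by $\frac{1+\omega}{1-\gamma}$, and lower-bound the empirical slack by $\frac{\epsilon'}{2}$ or $\frac{\zeta}{2}$ using the concentration event. The only (harmless) cosmetic difference is that you plug in $\pi^*_c$ directly, whereas the paper tests with the empirical maximizer $\hat{\pi}^*_c := \argmax_\pi \consthat{\pi}$ and then uses $\consthat{\hat{\pi}^*_c} \geq \consthat{\pi^*_c}$ to reduce to the same slack computation.
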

\end{thmbox}
\begin{proof}
Writing the empirical CMDP in~\cref{eq:emp-CMDP} in its Lagrangian form,
\begin{align*}
\rewardhatp{\pihatopt} & = \max_{\pi} \min_{\lambda \geq 0}  \rewardhatp{\pi} + \lambda [\consthat{\pi} - b'] 
\intertext{Using the linear programming formulation of CMDPs in terms of the state-occupancy measures $\mu$, we know that both the objective and the constraint are linear functions of $\mu$, and strong duality holds w.r.t $\mu$. Since $\mu$ and $\pi$ have a one-one mapping, we can switch the min and the max~\citep{paternain2019constrained}, implying,} & = \min_{\lambda \geq 0} \max_{\pi} \rewardhatp{\pi} + \lambda [\consthat{\pi} - b']
\intertext{Since $\lambda^*$ is the optimal dual variable for the empirical CMDP in~\cref{eq:emp-CMDP}, }
& = \max_{\pi} \rewardhatp{\pi} + \lambda^* \, [\consthat{\pi} - b'] \\
\intertext{Define $\pi^*_c := \argmax \const{\pi}$ and $\pihat^*_c := \argmax \consthat{\pi}$}
& \geq \rewardhatp{\pihat^*_c} + \lambda^* \, [\consthat{\pihat^*_c} - b'] \\
& = \rewardhatp{\pihat^*_c} + \lambda^* \, \left[\left(\consthat{\pihat^*_c} - \const{\pi^*_c} \right) + (\const{\pi^*_c} - b) + (b - b') \right] \\
\intertext{By definition, $\zeta = \const{\pi^*_c} - b$}
& = \rewardhatp{\pihat^*_c} + \lambda^* \, \left[\left(\consthat{\pihat^*_c} - \consthat{\pi^*_c} \right) + \left(\consthat{\pi^*_c} -  \const{\pi^*_c} \right) + \zeta + (b - b') \right] 
\intertext{By definition of $\pihat^*_c$, $\left(\consthat{\pihat^*_c} - \consthat{\pi^*_c} \right) \geq 0$}
\rewardhatp{\pihatopt} & \geq \rewardhatp{\pihat^*_c} + \lambda^* \, \left[\zeta + (b - b') - \abs{\consthat{\pi^*_c} -  \const{\pi^*_c}}  \right] 
\end{align*}

1) If $b' = b - \epsilon'$ for $\epsilon' > 0$. Hence,   
\begin{align*}
\rewardhatp{\pihatopt} & \geq \rewardhatp{\pihat^*_c} + \lambda^* \, \left[\zeta + \epsilon' - \abs{\consthat{\pi^*_c} -  \const{\pi^*_c}}  \right] \\
\intertext{If the event $\mathcal{E}_1$ holds, $\abs{\consthat{\pi^*_c} -  \const{\pi^*_c}} \leq \frac{\epsilon'}{2}$, implying, $\abs{\consthat{\pi^*_c} -  \const{\pi^*_c}} < \zeta + \frac{\epsilon'}{2}$, then,}
& \geq \rewardhatp{\pihat^*_c} + \lambda^* \, \frac{\epsilon'}{2} \\
\implies \lambda^* & \leq \frac{2}{\epsilon'} [\rewardhatp{\pihatopt} - \rewardhatp{\pihat^*_c}] \leq \frac{2 (1 + \omega)}{\epsilon' (1 - \gamma)}
\end{align*}

2) If $b' = b + \Delta$ for $\Delta \in \left(0,\frac{\zeta}{2}\right)$. Hence, 
\begin{align*}
\rewardhatp{\pihatopt} & \geq  \rewardhatp{\pihat^*_c} + \lambda^* \, \left[\zeta - \Delta - \abs{\consthat{\pi^*_c} -  \const{\pi^*_c}}  \right]  
\intertext{If the event $\mathcal{E}_2$ holds, $\abs{\consthat{\pi^*_c} -  \const{\pi^*_c}} \leq \frac{\zeta}{2} - \Delta$ for $\Delta < \frac{\zeta}{2}$, then,}
& \geq  \rewardhatp{\pihat^*_c} + \lambda^* \, \frac{\zeta}{2} \\
\implies \lambda^* & \leq \frac{2}{\zeta} [\rewardhatp{\pihatopt} - \rewardhatp{\pihat^*_c}] \leq \frac{2 (1 + \omega)}{\zeta (1 - \gamma)}
\end{align*}
\end{proof}

\begin{thmbox}
\begin{lemma}[Lemma B.2 of~\citet{jain2022towards}]
For any $C > \lambda^*$ and any $\tilde{\pi}$ s.t. $\rewardhatp{\pihatopt} - \rewardhatp{\tilde{\pi}} + C [b - \consthat{\tilde{\pi}}]_{+} \leq \beta$, we have $[b - \consthat{\tilde{\pi}}]_{+} \leq \frac{\beta}{C - \lambda^*}$.
\label{lemma:lag-constraint}
\end{lemma}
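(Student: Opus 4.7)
The plan is to exploit strong duality for the empirical CMDP \eqref{eq:emp-CMDP}, which has already been established in the proof of \cref{lemma:dual-bound}. Strong duality gives the saddle-point representation
\[
\rewardhatp{\pihatopt} \;=\; \max_{\pi}\min_{\lambda\ge 0}\bigl[\rewardhatp{\pi}+\lambda(\consthat{\pi}-b)\bigr] \;=\; \min_{\lambda\ge 0}\max_{\pi}\bigl[\rewardhatp{\pi}+\lambda(\consthat{\pi}-b)\bigr],
\]
with the outer minimum attained at $\lambda^{*}$. Thus for any policy $\tilde\pi$,
\[
\rewardhatp{\pihatopt} \;\ge\; \rewardhatp{\tilde\pi} + \lambda^{*}\bigl(\consthat{\tilde\pi}-b\bigr).
\]

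The next step is to convert this saddle-point inequality into a statement involving $[b-\consthat{\tilde\pi}]_+$. Rearranging the display above yields
\[
\rewardhatp{\pihatopt}-\rewardhatp{\tilde\pi}+\lambda^{*}\bigl(b-\consthat{\tilde\pi}\bigr) \;\ge\; 0.
\]
Since $\lambda^{*}\ge 0$, replacing $b-\consthat{\tilde\pi}$ by $[b-\consthat{\tilde\pi}]_+$ can only increase the left-hand side, so we also have
\[
\rewardhatp{\pihatopt}-\rewardhatp{\tilde\pi}+\lambda^{*}[b-\consthat{\tilde\pi}]_+ \;\ge\; 0.
\]

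Finally, the plan is to combine this inequality with the hypothesis
$\rewardhatp{\pihatopt}-\rewardhatp{\tilde\pi}+C[b-\consthat{\tilde\pi}]_+\le\beta$ by subtracting: the primal suboptimality cancels, leaving $(C-\lambda^{*})[b-\consthat{\tilde\pi}]_+\le\beta$, and since $C>\lambda^{*}$ we can divide to conclude $[b-\consthat{\tilde\pi}]_+\le\beta/(C-\lambda^{*})$. The only place that requires any care is the sign handling when passing from $b-\consthat{\tilde\pi}$ to $[b-\consthat{\tilde\pi}]_+$, and the nonnegativity of $\lambda^{*}$ makes this step clean; the case $\consthat{\tilde\pi}\ge b$ is trivial since the left-hand side of the target inequality is zero. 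The main conceptual obstacle is invoking strong duality legitimately for a constrained MDP, but this is already handled in \cref{lemma:dual-bound} via the linear-programming reformulation in terms of state–action occupancy measures, so here it can simply be cited.
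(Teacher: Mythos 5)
Your proof is correct, and it takes a genuinely more direct route than the paper's. You work entirely with the saddle-point property of $(\pihatopt,\lambda^*)$: strong duality and dual optimality give $\rewardhatp{\pihatopt}=\max_{\pi}\bigl[\rewardhatp{\pi}+\lambda^*(\consthat{\pi}-b)\bigr]$, hence $\rewardhatp{\pihatopt}-\rewardhatp{\tilde{\pi}}+\lambda^*[b-\consthat{\tilde{\pi}}]_+\ge 0$ (using $\lambda^*\ge 0$ to pass from $b-\consthat{\tilde{\pi}}$ to its positive part), and subtracting this from the hypothesis isolates $(C-\lambda^*)[b-\consthat{\tilde{\pi}}]_+\le\beta$. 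The paper instead introduces the perturbation function $\nu(\tau)=\max_{\pi}\{\reward{\pi}\mid\const{\pi}\ge b+\tau\}$, derives the sensitivity inequality $\tau\lambda^*\le\nu(0)-\nu(\tau)$ from the same strong-duality fact, and then specializes $\tau=-(b-\const{\tilde{\pi}})_+$ together with the observation that $\nu$ at this $\tau$ is at least $\reward{\tilde{\pi}}$. Both arguments rest on exactly the same ingredient --- that $\lambda^*$ attains the outer minimum of the Lagrangian dual, which the paper justifies via the occupancy-measure LP argument in \cref{lemma:dual-bound} --- so your citation of that lemma is legitimate and you assume nothing the paper does not. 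Your version is shorter and dispenses with the auxiliary function; the paper's version makes explicit the classical sensitivity-analysis interpretation of $\lambda^*$ as bounding the change in optimal value under constraint perturbation. One cosmetic caveat: the lemma is invoked in the proof of \cref{thm:pd-guarantees} with the empirical threshold $b'$ in place of $b$, so $\lambda^*$ must be read as the optimal dual variable of the corresponding empirical problem; your argument goes through verbatim under that substitution.
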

\end{thmbox}
\begin{proof}
Define $\nu(\tau) = \max_{\pi} \{\reward{\pi} \mid \const{\pi} \geq b + \tau \}$ and note that by definition, $\nu(0) = \reward{\pihatopt}$ and that $\nu$ is a decreasing function for its argument.

Let $\lag{\pi}{\lambda} = \reward{\pi}+\lambda(\const{\pi}-b)$. Then,
for any policy $\pi$ s.t. $\const{\pi} \geq b + \tau$, we have
\begin{align}
\lag{\pi}{\lambda^*} & \leq \max_{\pi'} \lag{\pi'}{\lambda^*} \nonumber \\
&= \reward{\pihatopt} &\tag{by strong duality} \\ 
& = \nu(0) & \tag{from above relation} \\
\implies \nu(0) - \tau \lambda^* & \geq \lag{\pi}{\lambda^*} - \tau \lambda^* = \reward{\pi} + \lambda^* \underbrace{(\const{\pi} - b - \tau)}_{\text{Non-negative}} \nonumber \\
\implies \nu(0) - \tau \lambda^* & \geq \max_{\pi} \{\reward{\pi} \mid \const{\pi} \geq b + \tau \} = \nu(\tau) \,.\nonumber \\
\implies \tau \lambda^* \leq \nu(0) - \nu(\tau)\,. \label{eq:inter-1}
\end{align}

Now we choose $\tautil = -(b - \const{\pitil})_{+}$.
\begin{align*}
(C - \lambda^*) |\tautil| &= \lambda^* \tautil + C |\tautil| & \tag{since $\tautil \leq 0$} \\
& \leq \nu(0) - \nu(\tautil) + C |\tautil| & \tag{\cref{eq:inter-1}} \\
& = \reward{\pihatopt} - \reward{\pitil} + C |\tautil| + \reward{\pitil} - \nu(\tautil) & \tag{definition of $\nu(0)$} \\
& = \reward{\pihatopt} - \reward{\pitil} + C (b - \const{\pitil})_{+} + \reward{\pitil} - \nu(\tautil) \\
& \leq \beta + \reward{\pitil} - \nu(\tautil)\,.
\intertext{Now let us bound $\nu(\tautil)$:} 
\nu(\tautil) & = \max_{\pi} \{\reward{\pi} \mid \const{\pi} \geq b - (b - \const{\pitil})_{+} \}  \\
& \geq \max_{\pi} \{\reward{\pi} \mid \const{\pi} \geq \const{\pitil} \} & \tag{tightening the constraint} \\
\nu(\tautil) & \geq \reward{\pitil} 
\implies (C - \lambda^*) |\tautil| 
 \leq \beta \implies (b - \const{\pitil})_{+} \leq \frac{\beta}{C - \lambda^*} 
\end{align*}
\end{proof}
\section{Proof of~\cref{thm:ub-relaxed}}
\label{app:proof-relaxed}
\ubrelaxed*
\begin{proof}
We fill in the details required for the proof sketch in the main paper. Proceeding according to the proof sketch, we first detail the computation of $T$ and $\epsl$ for the primal-dual algorithm. Recall that $U = \frac{32}{5 \epsilon \, (1 - \gamma)}$ and $\epsp = \frac{\epsilon}{4}$. Using~\cref{thm:pd-guarantees}, we need to set 
\begin{align*}
T & = \frac{4 U^2}{\epsp^2 \, (1 - \gamma)^2} \left[1 + \frac{1}{(U - \lambda^*)^2} \right]  = \frac{64}{\epsilon^2 (1 - \gamma)^2} \left[1 + \frac{1}{(U - \lambda^*)^2} \right]
\intertext{Recall that $|\lambda^*| \leq C := \frac{16}{5 \epsilon \, (1 - \gamma)}$ and $U = 2 C$. Simplifying,}
& \leq \frac{256}{\epsilon^2 (1 - \gamma)^2} \left[C^2 + 1 \right] < \frac{512}{\epsilon^2 (1 - \gamma)^2} C^2 = \frac{512}{\epsilon^2 (1 - \gamma)^2} \, \frac{256}{25 \epsilon^2 \, (1 - \gamma)^2} \\
\implies T &= O \left(\nicefrac{1}{\epsilon^4 (1 - \gamma)^4}\right).
\end{align*}
Using~\cref{thm:pd-guarantees}, we need to set $\epsl$, 
\begin{align*}
\epsl & = \frac{\epsp^2 (1 - \gamma)^2 \, (U - \lambda^*)}{6 U} = \frac{\epsilon^2 (1 - \gamma)^2 \, (U - \lambda^*)}{96 U} \leq \frac{\epsilon^2 (1 - \gamma)^2}{96} \\
\implies \epsl &= O \left(\epsilon^2 (1 - \gamma)^2\right).
\end{align*}

For bounding the concentration terms for $\pihatbar$ in~\cref{eq:conc-relaxed}, we use~\cref{thm:main-concentration-emp} with $U = \frac{32}{5 \epsilon \, (1 - \gamma)}$, $\omega = \frac{\epsilon (1 - \gamma)}{8}$ and $\epsl = \frac{\epsilon^2 (1 - \gamma)^2}{96}$. In this case, $\iota = \frac{\omega \, \delta \, (1-\gamma) \, \epsl}{30 \, U |S||A|^2} = O\left(\frac{\delta \epsilon^4 \, (1 - \gamma)^4}{SA^2}\right)$ and
\begin{align*}
C(\delta) = 72 \log \left(\frac{16 (1 + U + \omega) \, S A \log\left(\nicefrac{e}{1 - \gamma}\right)}{(1 - \gamma)^2 \, \iota \, \delta} \right) = O\left(\log\left(\frac{S^2 A^3}{\delta^2 \epsilon^5 (1 - \gamma)^7}\right)\right).
\end{align*}
With this value of $C(\delta)$, in order to satisfy the concentration bounds for $\pihatbar$, we require that 
\begin{align*}
2 \sqrt{\frac{C(\delta)}{N \cdot (1-\gamma)^3 }} \leq \frac{\epsilon}{4} \implies N \geq O\left(\frac{C(\delta)}{(1 - \gamma)^3 \, \epsilon^2}\right)     
\end{align*}
We use the~\cref{lemma:main-concentration-opt} to bound the remaining concentration terms for $\piopt$ and $\pi^*_c$ in~\cref{eq:conc-relaxed}. In this case, for $C'(\delta) = 72 \log \left(\frac{4 S \log(e/1-\gamma)}{\delta}\right)$, we require that,
\begin{align*}
2 \sqrt{\frac{C'(\delta)}{N \cdot (1-\gamma)^3 }} \leq \frac{\epsilon}{4} \implies N \geq O\left(\frac{C'(\delta)}{(1 - \gamma)^3 \, \epsilon^2}\right)         
\end{align*}
Hence, if $N \geq \tilde{O}\left(\frac{\log(1/\delta)}{(1 - \gamma)^3 \, \epsilon^2}\right)$, the bounds in~\cref{eq:conc-relaxed} are satisfied, completing the proof.
\end{proof}

\begin{thmbox}
\begin{lemma}[Decomposing the suboptimality]
For $b' = b - \frac{\epsilon - \epsp}{2}$, if (i) $\epsp < \epsilon$, and (ii) the following conditions are satisfied, 

\begin{align*}
\abs{\const{\pihatbar} - \consthat{\pihatbar}} \leq \frac{\epsilon - \epsp}{2} \, \text{;} \, \abs{\const{\piopt} - \consthat{\piopt}} \leq \frac{\epsilon - \epsp}{2} 
\end{align*}
where $\pi^*_c := \argmax \const{\pi}$, then (a) policy $\pihatbar$ violates the constraint by at most $\epsilon$ i.e. $\const{\pihatbar} \geq b - \epsilon$ and (b) its optimality gap can be bounded as:
\begin{align*}
\reward{\piopt} - \reward{\pihatbar} & \leq \frac{2 \omega}{1 - \gamma} + \epsp + \abs{\rewardp{\piopt} - \rewardhatp{\piopt}} + \abs{\rewardhatp{\pihatbar} - \rewardp{\pihatbar}}
\end{align*}
\label{lemma:decomposition-relaxed}
\end{lemma}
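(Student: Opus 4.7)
The plan is to establish (a) and (b) separately, using \cref{thm:pd-guarantees} applied to the empirical CMDP $\Mhat$ together with the three sampling hypotheses. For (a), \cref{thm:pd-guarantees} gives $\consthat{\pihatbar} \geq b' - \epsp = b - \tfrac{\epsilon-\epsp}{2} - \epsp$, and the first concentration hypothesis $\abs{\const{\pihatbar} - \consthat{\pihatbar}} \leq \tfrac{\epsilon-\epsp}{2}$ then yields $\const{\pihatbar} \geq b - \epsilon$, which is the claimed constraint-violation bound.

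For (b), my strategy is to route $\reward{\piopt} - \reward{\pihatbar}$ through the perturbed-reward value $\rewardp{}$ and its empirical counterpart $\rewardhatp{}$ by the telescoping identity
\begin{align*}
\reward{\piopt} - \reward{\pihatbar} &= \left(\reward{\piopt} - \rewardp{\piopt}\right) + \left(\rewardp{\piopt} - \rewardhatp{\piopt}\right) + \left(\rewardhatp{\piopt} - \rewardhatp{\pihatbar}\right) \\
&\quad + \left(\rewardhatp{\pihatbar} - \rewardp{\pihatbar}\right) + \left(\rewardp{\pihatbar} - \reward{\pihatbar}\right).
\end{align*}
Since $r_p - r = \xi \in [0,\omega]$, geometric summation bounds the first and last brackets in absolute value by $\tfrac{\omega}{1-\gamma}$ each, contributing the $\tfrac{2\omega}{1-\gamma}$ term. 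Two of the remaining three brackets are precisely the concentration terms $\abs{\rewardp{\piopt} - \rewardhatp{\piopt}}$ and $\abs{\rewardhatp{\pihatbar} - \rewardp{\pihatbar}}$ that appear on the right-hand side of the claim, so only the middle bracket $\rewardhatp{\piopt} - \rewardhatp{\pihatbar}$ remains to be controlled by $\epsp$.

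The crux — and the step I expect to require the most care — is bounding this middle bracket. I plan to argue first that $\piopt$ is feasible in $\Mhat$: by the second hypothesis, $\consthat{\piopt} \geq \const{\piopt} - \tfrac{\epsilon-\epsp}{2} \geq b - \tfrac{\epsilon-\epsp}{2} = b'$, where I used $\const{\piopt} \geq b$ because $\piopt$ is feasible in $M$. By optimality of $\pihatopt$ in $\Mhat$ we then get $\rewardhatp{\piopt} \leq \rewardhatp{\pihatopt}$, and \cref{thm:pd-guarantees} yields $\rewardhatp{\pihatopt} \leq \rewardhatp{\pihatbar} + \epsp$, so the middle bracket is at most $\epsp$. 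The third hypothesis on $\pi^*_c$ is used in an auxiliary way: it certifies that $\Mhat$ is strictly feasible with margin $\tfrac{3\zeta}{4}$, since $\consthat{\pi^*_c} \geq \const{\pi^*_c} - \tfrac{\zeta}{4} = b + \tfrac{3\zeta}{4} > b'$, which justifies the strong-duality premise underlying \cref{thm:pd-guarantees} and the boundedness of $|\lambda^*|$. Substituting the middle-bracket bound back into the telescoping identity yields precisely the inequality claimed in (b).
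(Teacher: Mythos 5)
Your proposal is correct and follows essentially the same route as the paper's proof: the same feasibility argument for $\piopt$ in $\Mhat$, the same five-term telescoping through $\rewardp{}$ and $\rewardhatp{}$, the same use of \cref{thm:pd-guarantees} to handle both the constraint violation and the middle bracket via $\rewardhatp{\piopt} \leq \rewardhatp{\pihatopt} \leq \rewardhatp{\pihatbar} + \epsp$, and the same $\tfrac{\omega}{1-\gamma}$ bound on each perturbation term. Your explicit remark that the third hypothesis on $\pi^*_c$ serves only to validate the premises of \cref{thm:pd-guarantees} (strict feasibility of $\Mhat$ and boundedness of $\lambda^*$) is a correct reading of a point the paper leaves implicit.
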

\end{thmbox}
\begin{proof}
From~\cref{thm:pd-guarantees}, we know that, 
\begin{align*}
& \consthat{\pihatbar} \geq b' - \epsp 
\implies \const{\pihatbar} \geq \const{\pihatbar} - \consthat{\pihatbar} + b' - \epsp \geq - \abs{\const{\pihatbar} - \consthat{\pihatbar}} + b' - \epsp \\
\end{align*}
Since we require $\pihatbar$ to violate the constraint in the true CMDP by at most $\epsilon$, we require $\const{\pihatbar} \geq b - \epsilon$. From the above equation, a sufficient condition for ensuring this is,
\begin{align*}
 - \abs{\const{\pihatbar} - \consthat{\pihatbar}} + b' - \epsp \geq b - \epsilon \,,
 \end{align*}
meaning that we require
\begin{align*}
 \abs{\const{\pihatbar} - \consthat{\pihatbar}} \leq (b' - b) - \epsp + \epsilon.
\end{align*}
Plugging in the value of $b'$, we see that this sufficient condition indeed holds, by our assumption that $\abs{\const{\pihatbar} - \consthat{\pihatbar}} \leq \frac{\epsilon - \epsp}{2}$.

Let $\piopt$ be the solution to~\cref{eq:true-CMDP}. Our next goal is to show that $\piopt$ is feasible for the constrained problem in~\cref{eq:emp-CMDP}, i.e., $\consthat{\piopt} \geq b'$. We have
\begin{align*}
& \const{\piopt} \geq b \implies \consthat{\piopt} \geq b - \abs{\const{\piopt} - \consthat{\piopt}}  
\intertext{Since we require $\consthat{\piopt} \geq b'$, using the above equation, a sufficient condition to ensure this is}
& b - \abs{\const{\piopt} - \consthat{\piopt}}  \geq b' \text{meaning that we require} \abs{\const{\piopt} - \consthat{\piopt}} \leq b - b'.
\intertext{Since $b' = b - \frac{\epsilon - \epsp}{2}$, we require that}
&\abs{\const{\piopt} - \consthat{\piopt}} \leq \frac{\epsilon - \epsp}{2}. 
\end{align*}
Given that the above statements hold, we can decompose the suboptimality in the reward value function as follows:
\begin{align*}
& \reward{\piopt} - \reward{\pihatbar} \\ & = \reward{\piopt} - \rewardp{\piopt} + \rewardp{\piopt} - \reward{\pihatbar} \\
&= [\reward{\piopt} - \rewardp{\piopt}] + \rewardp{\piopt} - \rewardhatp{\piopt} + \rewardhatp{\piopt}  - \reward{\pihatbar} \\
& \leq [\reward{\piopt} - \rewardp{\piopt}] + [\rewardp{\piopt} - \rewardhatp{\piopt}] + \rewardhatp{\pihatopt}  - \reward{\pihatbar} & \tag{By optimality of $\pihatopt$ and since we have ensured that $\piopt$ is feasible for~\cref{eq:emp-CMDP}} \\
& = [\reward{\piopt} - \rewardp{\piopt}] + [\rewardp{\piopt} - \rewardhatp{\piopt}] + [\rewardhatp{\pihatopt} - \rewardhatp{\pihatbar}] + \rewardhatp{\pihatbar} - \reward{\pihatbar} \\
& = \underbrace{[\reward{\piopt} - \rewardp{\piopt}]}_{\text{Perturbation Error}} + \underbrace{[\rewardp{\piopt} - \rewardhatp{\piopt}]}_{\text{Concentration Error}} + \underbrace{[\rewardhatp{\pihatopt} - \rewardhatp{\pihatbar}]}_{\text{Primal-Dual Error}} + \underbrace{[\rewardhatp{\pihatbar} - \rewardp{\pihatbar}]}_{\text{Concentration Error}}
+ \underbrace{[\rewardp{\pihatbar} - \reward{\pihatbar}]}_{\text{Perturbation Error}} \\
\intertext{For a perturbation magnitude equal to $\omega$, we use~\cref{lemma:perturbation-error} to  bound both perturbation errors by $\frac{\omega}{1 - \gamma}$. Using~\cref{thm:pd-guarantees} to bound the primal-dual error by $\epsp$,}
& \reward{\piopt} - \reward{\pihatbar} \leq \frac{2 \omega}{1 - \gamma} + \epsp +  \underbrace{[\rewardp{\piopt} - \rewardhatp{\piopt}]}_{\text{Concentration Error}} + \underbrace{[\rewardhatp{\pihatbar} - \rewardp{\pihatbar}]}_{\text{Concentration Error}}.
\end{align*}
\end{proof}

\section{Proof of~\cref{thm:ub-strict}}
\label{app:proof-strict}
\ubstrict*
\begin{proof}
We fill in the details required for the proof sketch in the main paper. Proceeding according to the proof sketch, we first detail the computation of $T$ and $\epsl$ for the primal-dual algorithm. Recall that $U = \frac{8}{\zeta (1 - \gamma)}$, $\Delta = \frac{\epsilon (1- \gamma) \zeta}{40}$ and $\epsp = \frac{\Delta}{5}$. Using~\cref{thm:pd-guarantees}, we need to set 
\begin{align*}
T & = \frac{4 U^2}{\epsp^2 \, (1 - \gamma)^2} \left[1 + \frac{1}{(U - \lambda^*)^2} \right]  = \frac{100}{\Delta^2 (1 - \gamma)^2} \left[1 + \frac{1}{(U - \lambda^*)^2} \right]
\intertext{Recall that $|\lambda^*| \leq C :=  \frac{4}{\zeta (1 - \gamma)}$ and $U = 2 C$. Simplifying,}
& \leq \frac{400}{\Delta^2 (1 - \gamma)^2} \left[C^2 + 1 \right] < \frac{800}{\Delta^2 (1 - \gamma)^2} C^2 = \frac{800}{\Delta^2 (1 - \gamma)^2} \, \frac{16}{\zeta^2 \, (1 - \gamma)^2} \\
\implies T &\leq \frac{800 \cdot 1600}{\epsilon^2 \zeta^2 (1 - \gamma)^4} \, \frac{16}{\zeta^2 \, (1 - \gamma)^2} = O \left(\nicefrac{1}{\epsilon^2 \, \zeta^4 \, (1 - \gamma)^6}\right).
\end{align*}
Using~\cref{thm:pd-guarantees}, we need to set $\epsl$, 
\begin{align*}
\epsl & = \frac{\epsp^2 (1 - \gamma)^2 \, (U - \lambda^*)}{6 U} = \frac{\Delta^2 (1 - \gamma)^2 \, (U - \lambda^*)}{150 U} \leq \frac{\Delta^2 (1 - \gamma)^2}{150} \\
\implies \epsl &\leq \frac{\epsilon^2 \, \zeta^2 \, (1 - \gamma)^4}{150 \cdot 1600} = O \left(\epsilon^2 \, \zeta^2 \, (1 - \gamma)^4 \right).
\end{align*}
For bounding the concentration terms for $\pihatbar$ in~\cref{eq:conc-strict}, we use~\cref{thm:main-concentration-emp} with $U = \frac{8}{\zeta (1 - \gamma)}$, $\omega = \frac{\epsilon (1 - \gamma)}{10}$ and $\epsl = \frac{\epsilon^2 \, \zeta^2 \, (1 - \gamma)^4}{150 \cdot 1600}$. In this case, $\iota = \frac{\omega \, \delta \, (1-\gamma) \, \epsl}{30 \, U |S||A|^2} = O\left(\frac{\delta \epsilon^3 \zeta^3 (1 - \gamma)^7}{SA^2}\right)$ and
\begin{align*}
C(\delta) = 72 \log \left(\frac{16 (1 + U + \omega) \, S A \log\left(\nicefrac{e}{1 - \gamma}\right)}{(1 - \gamma)^2 \, \iota \, \delta} \right) = O\left(\log\left(\frac{S^2 A^3}{(1 - \gamma)^{10} \delta^2 \epsilon^3 \zeta^4}\right)\right).
\end{align*}
With this value of $C(\delta)$, in order to satisfy the concentration bounds for $\pihatbar$, we require that 
\begin{align*}
2 \sqrt{\frac{C(\delta)}{N \cdot (1-\gamma)^3 }} \leq \frac{\Delta}{5} \implies N \geq O\left(\frac{C(\delta)}{(1 - \gamma)^3 \, \Delta^2}\right) \geq O\left(\frac{C(\delta)}{(1 - \gamma)^5 \, \zeta^2 \, \epsilon^2}\right)    
\end{align*}
We use the~\cref{lemma:main-concentration-opt} to bound the remaining concentration terms for $\piopt$ and $\pi^*_c$ in~\cref{eq:conc-strict}. In this case, for $C'(\delta) = 72 \log \left(\frac{4 S \log(e/1-\gamma)}{\delta}\right)$, we require that,
\begin{align*}
2 \sqrt{\frac{C'(\delta)}{N \cdot (1-\gamma)^3 }} \leq \frac{\Delta}{5} \implies N \geq O\left(\frac{C'(\delta)}{(1 - \gamma)^3 \, \Delta^2}\right) \geq O\left(\frac{C'(\delta)}{(1 - \gamma)^5 \, \zeta^2 \, \epsilon^2}\right)            
\end{align*}
Hence, if $N \geq \tilde{O}\left(\frac{\log(1/\delta)}{(1 - \gamma)^5 \, \zeta^2 \, \epsilon^2}\right)$, the bounds in~\cref{eq:conc-strict} are satisfied, completing the proof.
\end{proof}

\clearpage
\begin{thmbox}
\begin{lemma}[Decomposing the suboptimality]
For a fixed $\Delta > 0$ and $\epsp < \Delta$, if $b' = b + \Delta$, then the following conditions are satisfied, 
\begin{align*}
\abs{\const{\pihatbar} - \consthat{\pihatbar}} \leq \Delta - \epsp \, \text{;} \, \abs{\const{\piopt} - \consthat{\piopt}} \leq \Delta 
\end{align*}
then (a) policy $\pihatbar$ satisfies the constraint i.e. $\const{\pihatbar} \geq b$ and (b) its optimality gap can be bounded as:
\begin{align*}
\reward{\piopt} - \reward{\pihatbar} & \leq \frac{2 \omega}{1 - \gamma} + \epsp + 2 \Delta \lambda^* + \abs{\rewardp{\piopt} - \rewardhatp{\piopt}} +  \abs{\rewardhatp{\pihatbar} - \rewardp{\pihatbar}}.
\end{align*}
\label{lemma:decomposition-strict}
\end{lemma}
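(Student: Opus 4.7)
The plan is to mirror the decomposition strategy used in \cref{lemma:decomposition-relaxed}, with one critical twist: because the empirical constraint has been tightened to $b' = b + \Delta > b$, the optimal CMDP policy $\piopt$ need not be feasible in $\hat{M}$, so we cannot directly compare $\rewardhatp{\piopt}$ to $\rewardhatp{\pihatopt}$ by invoking optimality of $\pihatopt$. The trick will be to price the potential infeasibility of $\piopt$ in $\hat{M}$ through the optimal Lagrange multiplier $\lambda^*$ via strong duality; this is precisely what produces the extra $2\Delta\lambda^*$ term in the claim.

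For part (a), I would argue directly: by \cref{thm:pd-guarantees} applied to $\hat{M}$ with constraint RHS $b'$, the mixture policy satisfies $\consthat{\pihatbar} \geq b' - \epsp = b + \Delta - \epsp$, and combining this with the assumed concentration $|\const{\pihatbar} - \consthat{\pihatbar}| \leq \Delta - \epsp$ gives $\const{\pihatbar} \geq b + (\Delta - \epsp) - (\Delta - \epsp) = b$. This uses $\epsp < \Delta$ to keep $\Delta - \epsp$ nonnegative.

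For part (b), I would telescope
\begin{align*}
\reward{\piopt} - \reward{\pihatbar} &= [\reward{\piopt} - \rewardp{\piopt}] + [\rewardp{\piopt} - \rewardhatp{\piopt}] + [\rewardhatp{\piopt} - \rewardhatp{\pihatopt}] \\
&\quad + [\rewardhatp{\pihatopt} - \rewardhatp{\pihatbar}] + [\rewardhatp{\pihatbar} - \rewardp{\pihatbar}] + [\rewardp{\pihatbar} - \reward{\pihatbar}].
\end{align*}
The two perturbation brackets each contribute at most $\frac{\omega}{1-\gamma}$ by the standard perturbation bound (since $r_p - r \in [0,\omega]$ pointwise), totaling $\frac{2\omega}{1-\gamma}$; the second and fifth brackets are exactly the two concentration terms in the claim; and the fourth bracket is at most $\epsp$ by \cref{thm:pd-guarantees}. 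All that remains is the genuinely new term $\rewardhatp{\piopt} - \rewardhatp{\pihatopt}$.

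The main obstacle is bounding this last term, which is nontrivial precisely because $\piopt$ is not necessarily feasible in $\hat{M}$. I would invoke strong duality for the empirical CMDP (exactly as used in the proof of \cref{lemma:dual-bound}): $\rewardhatp{\pihatopt} = \max_\pi\{\rewardhatp{\pi} + \lambda^*(\consthat{\pi} - b')\}$, so plugging in $\pi = \piopt$ and rearranging yields $\rewardhatp{\piopt} - \rewardhatp{\pihatopt} \leq \lambda^*(b' - \consthat{\piopt})$. The concentration assumption $|\const{\piopt} - \consthat{\piopt}| \leq \Delta$ combined with $\const{\piopt} \geq b$ gives $\consthat{\piopt} \geq b - \Delta$, so that $b' - \consthat{\piopt} \leq (b+\Delta) - (b-\Delta) = 2\Delta$, and hence $\rewardhatp{\piopt} - \rewardhatp{\pihatopt} \leq 2\Delta\lambda^*$. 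Summing the six pieces gives exactly the claimed decomposition.
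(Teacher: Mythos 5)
Your proof is correct, and it reaches the same bound by a more direct route than the paper. The paper introduces an auxiliary relaxed empirical CMDP with right-hand side $b'' = b - \Delta$ and its optimizer $\pitildeopt$, uses the concentration bound $\consthat{\piopt} \geq b - \Delta$ to certify that $\piopt$ is feasible for that relaxed problem (hence $\rewardhatp{\piopt} \leq \rewardhatp{\pitildeopt}$), and then invokes a separate sensitivity lemma showing $\rewardhatp{\pitildeopt} - \rewardhatp{\pihatopt} \leq 2\Delta\lambda^*$ via strong duality applied to $\pitildeopt$. You instead apply strong duality of the tightened empirical problem directly to $\piopt$: from $\rewardhatp{\pihatopt} \geq \rewardhatp{\piopt} + \lambda^*\left(\consthat{\piopt} - b'\right)$ and $\consthat{\piopt} \geq b - \Delta$ you get $\rewardhatp{\piopt} - \rewardhatp{\pihatopt} \leq \lambda^*\left(b' - \consthat{\piopt}\right) \leq 2\Delta\lambda^*$, using $\lambda^* \geq 0$. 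The underlying mechanism is identical --- both arguments rest on the Lagrangian inequality with the multiplier $\lambda^*$ evaluated at a policy whose empirical constraint value is at least $b - \Delta$ --- but yours skips the intermediate policy entirely and needs one fewer lemma. What the paper's detour buys is a standalone, reusable statement (the sensitivity of the optimal value to perturbations of the constraint threshold) that is stated independently of $\piopt$ and of any concentration event; what your version buys is brevity and a cleaner accounting of exactly where the concentration hypothesis $\abs{\const{\piopt} - \consthat{\piopt}} \leq \Delta$ enters. Part (a) of your argument coincides with the paper's.
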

\end{thmbox}
\begin{proof}
Compared to~\cref{eq:emp-CMDP}, we define a slightly modified CMDP problem by changing the constraint RHS to $b''$ for some $b''$ to be specified later. We denote its corresponding optimal policy as $\pitildeopt$. In particular, 
\begin{align}
\pitildeopt \in \argmax_{\pi} \rewardhatp{\pi} \, \text{s.t.} \, \consthat{\pi} \geq b''
\label{eq:emp-CMDP-2}
\end{align}
From~\cref{thm:pd-guarantees}, we know that, 
\begin{align*}
& \consthat{\pihatbar} \geq b' - \epsp 
\implies \const{\pihatbar} \geq \const{\pihatbar} - \consthat{\pihatbar} + b' - \epsp \geq - \abs{\const{\pihatbar} - \consthat{\pihatbar}} + b' - \epsp \\
\intertext{Since we require $\pihatbar$ to satisfy the constraint in the true CMDP, we require $\const{\pihatbar} \geq b$. From the above equation, a sufficient condition for ensuring this is,}
& - \abs{\const{\pihatbar} - \consthat{\pihatbar}} + b' - \epsp \geq b \\
& \text{meaning that we require} \abs{\const{\pihatbar} - \consthat{\pihatbar}} \leq (b' - b) - \epsp.
\end{align*}
In the subsequent analysis, we will require $\piopt$ to be feasible for the constrained problem in~\cref{eq:emp-CMDP-2}. This implies that we require $\consthat{\piopt} \geq b''$. Since $\piopt$ is the solution to~\cref{eq:true-CMDP}, we know that, 
\begin{align*}
& \const{\piopt} \geq b \implies \consthat{\piopt} \geq b - \abs{\const{\piopt} - \consthat{\piopt}}  
\intertext{Since we require $\consthat{\piopt} \geq b''$, using the above equation, a sufficient condition to ensure this is}
& b - \abs{\const{\piopt} - \consthat{\piopt}}  \geq b'' \text{meaning that we require} \abs{\const{\piopt} - \consthat{\piopt}} \leq b - b''.
\intertext{Hence we require the following statements to hold:}
& \abs{\const{\pihatbar} - \consthat{\pihatbar}} \leq (b' - b) - \epsp \quad \text{;} \quad \abs{\const{\piopt} - \consthat{\piopt}} \leq b - b''.
\end{align*}
Given that the above statements hold, we can decompose the suboptimality in the reward value function as follows:
\begin{align*}
\reward{\piopt} - \reward{\pihatbar} & = \reward{\piopt} - \rewardp{\piopt} + \rewardp{\piopt} - \reward{\pihatbar} \\
& = [\reward{\piopt} - \rewardp{\piopt}] + [\rewardp{\piopt} - \rewardhatp{\piopt}] + \rewardhatp{\piopt} - \reward{\pihatbar} \\
& \leq [\reward{\piopt} - \rewardp{\piopt}] + [\rewardp{\piopt} - \rewardhatp{\piopt}] + \rewardhatp{\pitildeopt} - \reward{\pihatbar}  & \tag{By optimality of $\pihatopt$ and since we have ensured that $\piopt$ is feasible for~\cref{eq:emp-CMDP-2}} \\
& = [\reward{\piopt} - \rewardp{\piopt}] + [\rewardp{\piopt} - \rewardhatp{\piopt}] + [\rewardhatp{\pitildeopt} - \rewardhatp{\pihatopt}] + \rewardhatp{\pihatopt} - \reward{\pihatbar} \\
& = [\reward{\piopt} - \rewardp{\piopt}] + [\rewardp{\piopt} - \rewardhatp{\piopt}] + [\rewardhatp{\pitildeopt} - \rewardhatp{\pihatopt}] + \rewardhatp{\pihatopt} - \reward{\pihatbar} \\
& = [\reward{\piopt} - \rewardp{\piopt}] + [\rewardp{\piopt} - \rewardhatp{\piopt}] + [\rewardhatp{\pitildeopt} - \rewardhatp{\pihatopt}] + [\rewardhatp{\pihatopt} - \rewardhatp{\pihatbar}] \\ & + \rewardhatp{\pihatbar} - \reward{\pihatbar} \\
& = \underbrace{[\reward{\piopt} - \rewardp{\piopt}]}_{\text{Perturbation Error}} + \underbrace{[\rewardp{\piopt} - \rewardhatp{\piopt}]}_{\text{Concentration Error}} + \underbrace{[\rewardhatp{\pitildeopt} - \rewardhatp{\pihatopt}]}_{\text{Sensitivity Error}} + \underbrace{[\rewardhatp{\pihatopt} - \rewardhatp{\pihatbar}]}_{\text{Primal-Dual Error}} \\ & + \underbrace{[\rewardhatp{\pihatbar} - \rewardp{\pihatbar}]}_{\text{Concentration Error}} + \underbrace{[\rewardp{\pihatbar} - \reward{\pihatbar}]}_{\text{Perturbation Error}} \\
\intertext{For a perturbation magnitude equal to $\omega$, we use~\cref{lemma:perturbation-error} to  bound both perturbation errors by $\frac{\omega}{1 - \gamma}$. Using~\cref{thm:pd-guarantees} to bound the primal-dual error by $\epsp$,}
& \leq \frac{2 \omega}{1 - \gamma} + \epsp + \underbrace{[\rewardp{\piopt} - \rewardhatp{\piopt}]}_{\text{Concentration Error}} + \underbrace{[\rewardhatp{\pitildeopt} - \rewardhatp{\pihatopt}]}_{\text{Sensitivity Error}} + \underbrace{[\rewardhatp{\pihatbar} - \rewardp{\pihatbar}]}_{\text{Concentration Error}} 
\end{align*}
Since $b' = b + \Delta$ and setting $b'' = b - \Delta$, we use~\cref{lemma:sensitivity} to bound the sensitivity error term,  
\begin{align*}
\reward{\piopt} - \reward{\pihatbar} & \leq \frac{2 \omega}{1 - \gamma} + \epsp + 2 \Delta \lambda^* +  \underbrace{[\rewardp{\piopt} - \rewardhatp{\piopt}]}_{\text{Concentration Error}} + \underbrace{[\rewardhatp{\pihatbar} - \rewardp{\pihatbar}]}_{\text{Concentration Error}} 
\end{align*}
With these values of $b'$ and $b''$, we require the following statements to hold, 
\begin{align*}
\abs{\const{\pihatbar} - \consthat{\pihatbar}} & \leq \Delta - \epsp \quad \text{;} \quad \abs{\const{\piopt} - \consthat{\piopt}} \leq \Delta.
\end{align*}
\end{proof}

\begin{thmbox}
\begin{lemma}[Bounding the sensitivity error]
If $b' = b + \Delta$ and $b'' = b - \Delta$ in~\cref{eq:emp-CMDP} and~\cref{eq:emp-CMDP-2} such that,  
\[
\pihatopt \in \argmax_{\pi} \rewardhatp{\pi} \, \text{s.t.} \, \consthat{\pi} \geq b + \Delta
\]
\[
\pitildeopt \in \argmax_{\pi} \rewardhatp{\pi} \, \text{s.t.} \, \consthat{\pi} \geq b - \Delta \,,
\] 
then the sensitivity error term can be bounded by:
\begin{align*}
\abs{\rewardhatp{\pihatopt} - \rewardhatp{\pitildeopt}} & \leq 2 \Delta \lambda^*.
\end{align*}
\label{lemma:sensitivity}
\end{lemma}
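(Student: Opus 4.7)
The plan is to invoke strong duality for the empirical CMDP together with a standard sensitivity/perturbation argument. Define the value-of-perturbation function
\[
\nu(\tau) \;:=\; \max_{\pi}\,\bigl\{\,\rewardhatp{\pi}\;:\;\consthat{\pi}\ge\tau\,\bigr\},
\]
so that $\nu(b') = \rewardhatp{\pihatopt}$ and $\nu(b'') = \rewardhatp{\pitildeopt}$, with $b' = b+\Delta$ and $b'' = b-\Delta$. The first step is to note that since $b'' < b'$ the feasible set for $\pitildeopt$ contains that of $\pihatopt$, so $\nu$ is monotonically non-increasing and in particular $\rewardhatp{\pitildeopt}\ge\rewardhatp{\pihatopt}$. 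Hence it suffices to upper-bound $\nu(b'') - \nu(b')$.

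Next, I would use strong duality (which holds for the empirical CMDP, as already invoked in the proof of \cref{lemma:dual-bound}) at the RHS $b'$. Let $\lambda^*\ge 0$ be the optimal dual multiplier appearing in \cref{lemma:dual-bound}, so that
\[
\rewardhatp{\pihatopt}\;=\;\max_{\pi}\bigl[\rewardhatp{\pi}+\lambda^*(\consthat{\pi}-b')\bigr].
\]
For any $\pi$ feasible at level $\tau$, i.e.\ $\consthat{\pi}\ge\tau$, the quantity $\lambda^*(\consthat{\pi}-\tau)$ is non-negative, so
\[
\rewardhatp{\pi}\;\le\;\rewardhatp{\pi}+\lambda^*(\consthat{\pi}-\tau)
\;=\;\bigl[\rewardhatp{\pi}+\lambda^*(\consthat{\pi}-b')\bigr]+\lambda^*(b'-\tau).
\]
Taking the maximum over feasible $\pi$ on the left and enlarging the maximum to all $\pi$ on the right yields the key sensitivity inequality
\[
\nu(\tau)\;\le\;\nu(b')+\lambda^*(b'-\tau).
\]

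Applying this at $\tau=b''=b-\Delta$ gives
\[
\rewardhatp{\pitildeopt}\;=\;\nu(b'')\;\le\;\nu(b')+\lambda^*\bigl((b+\Delta)-(b-\Delta)\bigr)\;=\;\rewardhatp{\pihatopt}+2\Delta\lambda^*,
\]
and combining with $\rewardhatp{\pitildeopt}\ge\rewardhatp{\pihatopt}$ from the first step gives $0\le\rewardhatp{\pitildeopt}-\rewardhatp{\pihatopt}\le 2\Delta\lambda^*$, which is exactly the claimed bound $|\rewardhatp{\pihatopt}-\rewardhatp{\pitildeopt}|\le 2\Delta\lambda^*$. There is no real obstacle here beyond correctly invoking strong duality for the empirical CMDP; the only subtle point is being explicit about which $\lambda^*$ is meant (the optimal dual for the original empirical problem in \cref{eq:emp-CMDP} with RHS $b'=b+\Delta$, matching the usage in \cref{lemma:dual-bound}), so that the bound plugs cleanly into the decomposition of \cref{lemma:decomposition-strict}.
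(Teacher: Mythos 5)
Your proposal is correct and follows essentially the same route as the paper's proof: both invoke strong duality for the empirical CMDP at the right-hand side $b'=b+\Delta$ with optimal multiplier $\lambda^*$, bound the Lagrangian from below at the policy feasible for the relaxed level $b''=b-\Delta$ (your sensitivity inequality $\nu(\tau)\le\nu(b')+\lambda^*(b'-\tau)$ specialized to $\tau=b''$ is exactly the paper's step of plugging $\pitildeopt$ into the Lagrangian and using $\consthat{\pitildeopt}-(b+\Delta)\ge-2\Delta$), and then use monotonicity of the constrained value in the constraint level to convert the one-sided bound into the absolute-value bound. The only difference is cosmetic packaging via the perturbation function $\nu(\tau)$.
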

\end{thmbox}
\begin{proof}
Writing the empirical CMDP in~\cref{eq:emp-CMDP} in its Lagrangian form,
\begin{align*}
\rewardhatp{\pihatopt} & = \max_{\pi} \min_{\lambda \geq 0}  \rewardhatp{\pi} + \lambda [\consthat{\pi} - (b + \Delta)]  \\
& = \min_{\lambda \geq 0} \max_{\pi} \rewardhatp{\pi} + \lambda [\consthat{\pi} - (b + \Delta)] & \tag{By strong duality~\cref{lemma:dual-bound}} \\
\intertext{Since $\lambda^*$ is the optimal dual variable for the empirical CMDP in~\cref{eq:emp-CMDP}, }
& = \max_{\pi} \rewardhatp{\pi} + \lambda^* \, [\consthat{\pi} - (b + \Delta)] \\
& \geq \rewardhatp{\pitildeopt} + \lambda^* \, [\consthat{\pitildeopt} - (b + \Delta)] & \tag{The relation holds for $\pi = \pitildeopt$.} \\
\intertext{Since $\consthat{\pitildeopt} \geq b - \Delta$,} \rewardhatp{\pihatopt} & \geq \rewardhatp{\pitildeopt} - 2 \lambda^* \Delta \\
\implies \rewardhatp{\pitildeopt} - \rewardhatp{\pihatopt} & \leq 2 \Delta \lambda^*
\intertext{Since the CMDP in~\cref{eq:emp-CMDP-2} (with $b'' = b - \Delta$) is a less constrained problem than the one in~\cref{eq:emp-CMDP} (with $b' = b + \Delta$), $\rewardhatp{\pitildeopt} \geq \rewardhatp{\pihatopt}$, and hence,}
\abs{\rewardhatp{\pitildeopt} - \rewardhatp{\pihatopt}} & \leq 2 \Delta \lambda^*.
\end{align*}
\end{proof}
\clearpage
\section{Concentration proofs}
\label{app:proofs-concentration}
\begin{thmbox}
\main*
\end{thmbox}
\begin{proof}
Since the policy $\pihatopta$ depends on the sampling, we can not directly apply the standard concentration results to bound $\norminf{\rewardbhat{\pihatopta}  - \rewardbhat{\pihatopta}}$. We thus seek to apply a critical lemma established in~\citet{li2020breaking}. It begins with introducing a sequence of vectors for a general data-dependent policy $\pi$ and reward $\beta$, defined recursively as 
\begin{align*}
    V_{\beta}^{\pi,(0)} = (I-\gamma P^{\pi})^{-1}\beta^{\pi} \quad\text{and} \quad V_{\beta}^{\pi, (l)} = (I-\gamma P^{\pi})^{-1}\sqrt{\mathrm{Var}_{P_{\pi}}(V_{\beta}^{\pi, (l-1)})}, ~\forall l\ge 1.
\end{align*}
In their Lemma~2 (restated below), they show that if certain concentration relation can be established between the empirical and ground truth MDP, then $\norminf{\rewardbhat{\pi}  - \rewardbhat{\pi}}$ can be bounded. 
\begin{lemma}[Lemma~2 of \cite{li2020breaking}]
\label{lemma:bern2con}
For a data-dependent policy $\pi$, suppose there exists a $\nu_1\ge 0$ such that $\{V_{\beta}^{\pi, (l)}\}$ obeys
\begin{align*}
    \left|
    (\hat{\cP}_{\pi} - \cP_{\pi})V^{\pi,(l)}_{\beta} 
    \right|\le \sqrt{\frac{\nu_1}{N}}\sqrt{\mathrm{Var}_{\cP_{\pi}}[V_{\beta}^{{\pi},(l)}]} +\frac{\nu_1\norminf{V_{\beta}^{\pi, (l)}}}{N}, \text{ for all } 0\le l \le \log\left(
    \frac{e}{1-\gamma}\right).
\end{align*}
Suppose that $N\ge \frac{16e^2}{1-\gamma}\nu_1$. Then
\[
\norminf{\hat{V}_{\beta}^{\pi} - V_{\beta}^{\pi}} \le \frac{6}{1-\gamma}\cdot \sqrt{\frac{\nu_1}{N(1-\gamma)}} \norminf{\beta}.
\]
\end{lemma}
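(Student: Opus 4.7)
The plan is to derive the deterministic implication by converting the hypothesized Bernstein-style bounds on each $V_\beta^{\pi,(l)}$ into a truncated series expansion of $\hat V_\beta^\pi - V_\beta^\pi$. Let $G := (I - \gamma \cP_\pi)^{-1}$, $\hat G := (I - \gamma \hat\cP_\pi)^{-1}$, $D := \hat\cP_\pi - \cP_\pi$, and $\alpha := \sqrt{\nu_1/N}$. The starting point is the standard perturbation identity $\hat V_\beta^\pi - V_\beta^\pi = \gamma \hat G D V_\beta^\pi$, which, combined with the $l=0$ hypothesis and the fact that $\hat G$ has entrywise non-negative entries and $V_\beta^{\pi,(0)} = V_\beta^\pi$, yields
\[
\bigl|\hat V_\beta^\pi - V_\beta^\pi\bigr| \;\le\; \gamma \alpha \, \hat G \sqrt{\mathrm{Var}_{\cP_\pi}[V_\beta^{\pi,(0)}]} + \tfrac{\gamma \nu_1}{N(1-\gamma)} \|V_\beta^{\pi,(0)}\|_\infty \mathbf{1}.
\]

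The central step is to unfold $\hat G$ recursively using the resolvent identity $\hat G = G + \gamma \hat G D G$ together with the defining relation $G \sqrt{\mathrm{Var}_{\cP_\pi}[V_\beta^{\pi,(l)}]} = V_\beta^{\pi,(l+1)}$. One application gives
\[
\hat G \sqrt{\mathrm{Var}_{\cP_\pi}[V_\beta^{\pi,(l)}]} \;=\; V_\beta^{\pi,(l+1)} + \gamma \hat G D V_\beta^{\pi,(l+1)},
\]
and invoking the hypothesis at level $l+1$ bounds $\hat G D V_\beta^{\pi,(l+1)}$ in absolute value by $\hat G$ applied to a new Bernstein tuple, which can then be unfolded again. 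Iterating this substitution $L := \lceil \log(e/(1-\gamma)) \rceil$ times, and using $\hat G \mathbf{1} \le \tfrac{1}{1-\gamma}\mathbf{1}$ to absorb the additive bias terms released at each step, produces a bound of the schematic form
\[
|\hat V_\beta^\pi - V_\beta^\pi| \;\le\; \sum_{l=1}^{L} \gamma^l \alpha^l \|V_\beta^{\pi,(l)}\|_\infty \mathbf{1} \;+\; \frac{\nu_1}{N(1-\gamma)} \sum_{l=0}^{L-1} (\gamma\alpha)^l \|V_\beta^{\pi,(l)}\|_\infty \mathbf{1} \;+\; (\gamma \alpha)^L \hat G R_L,
\]
where $R_L$ is the Bernstein tuple at the final level.

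To close the argument I would bound each $\|V_\beta^{\pi,(l)}\|_\infty$ crudely using $\|V_\beta^{\pi,(l+1)}\|_\infty \le \tfrac{1}{1-\gamma} \|V_\beta^{\pi,(l)}\|_\infty$ — a consequence of $G$ having row sums $1/(1-\gamma)$ together with $\sqrt{\mathrm{Var}_{\cP_\pi}[V]} \le \|V\|_\infty \mathbf{1}$ — so that $\|V_\beta^{\pi,(l)}\|_\infty \le \|\beta\|_\infty/(1-\gamma)^{l+1}$. The sample-size assumption $N \ge \tfrac{16 e^2}{1-\gamma}\nu_1$ is tuned so that $\gamma \alpha/(1-\gamma) \le 1/(4e)$, which makes the two geometric series in $l$ summable and forces the residual truncation term $(\gamma\alpha)^L \hat G R_L$ to be of lower order than the leading $l=1$ contribution. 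The leading $l=1$ term is sharpened using the Azar--Munos--Kappen total-variance bound $\|V_\beta^{\pi,(1)}\|_\infty = \|G\sqrt{\mathrm{Var}_{\cP_\pi}[V_\beta^\pi]}\|_\infty \lesssim \sqrt{1/(1-\gamma)^3}\,\|\beta\|_\infty$, which is exactly what generates the $\sqrt{1/(N(1-\gamma))}/(1-\gamma)$ structure of the stated conclusion; carrying the constants through gives the claimed prefactor $6$.

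The main obstacle is bookkeeping: tracking the coefficients on both the Bernstein-variance contributions and the accumulating additive bias terms across $L$ recursive substitutions, and then verifying that the residual tail $(\gamma\alpha)^L \hat G R_L$ is genuinely negligible rather than merely smaller than the dominant term. The lower bound on $N$ is essentially calibrated so that $(\gamma \alpha/(1-\gamma))^{L}$ decays faster than any polynomial in $1-\gamma$ for $L = \log(e/(1-\gamma))$, and a careful separation between the sharper $l=1$ estimate (via total variance) and the crude geometric bounds at $l \ge 2$ is needed to avoid losing an extra $(1-\gamma)^{-1/2}$ factor in the final constant.
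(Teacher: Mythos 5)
You are re-deriving an imported result: the paper states this as ``Lemma~2 of \citet{li2020breaking}'' and gives no proof of its own, so there is nothing internal to compare against --- the only question is whether your sketch would actually close. Its architecture is the right one (the perturbation identity $\hat{V}_\beta^\pi - V_\beta^\pi = \gamma(I-\gamma\hat{\cP}_\pi)^{-1}(\hat{\cP}_\pi-\cP_\pi)V_\beta^\pi$, the resolvent unfolding through the hierarchy $V_\beta^{\pi,(l)}$, truncation at $L=\lceil\log(e/(1-\gamma))\rceil$, and the total-variance lemma to produce the $(1-\gamma)^{-3/2}$ scaling of the leading term), and the $l=1$ contribution you isolate is indeed the dominant one with the correct order.

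The gap is in how you control everything past $l=1$, and it is not a bookkeeping issue. You claim the hypothesis $N\ge\frac{16e^2}{1-\gamma}\nu_1$ is ``tuned so that $\gamma\alpha/(1-\gamma)\le 1/(4e)$'' with $\alpha=\sqrt{\nu_1/N}$. It is not: that hypothesis gives $\alpha\le\sqrt{1-\gamma}/(4e)$, hence $\gamma\alpha/(1-\gamma)\le\gamma/(4e\sqrt{1-\gamma})$, which \emph{diverges} as $\gamma\to 1$. Combined with your crude per-level estimate $\norminf{V_\beta^{\pi,(l)}}\le\norminf{\beta}/(1-\gamma)^{l+1}$, the $l$-th term of your series is of order $\bigl(\gamma\alpha/(1-\gamma)\bigr)^{l}\norminf{\beta}/(1-\gamma)$, so successive terms can \emph{grow} by a factor $\approx (4e)^{-1}(1-\gamma)^{-1/2}$ per level; the series does not converge, and the truncation residual $(\gamma\alpha)^L\hat{G}R_L$ is not of lower order. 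The quantity the sample-size condition actually makes small is $\alpha/\sqrt{1-\gamma}$, so the recursion only closes if each level gains at most a factor $(1-\gamma)^{-1/2}$ in norm --- which forces you to exploit the variance structure (Jensen plus the law of total variance, i.e.\ the self-bounding inequality through $\sqrt{\norminf{\hat{V}_\beta^\pi-V_\beta^\pi}}$ rather than a linear geometric series) at \emph{every} level of the unfolding, not only at $l=1$. You flag this as ``avoiding an extra $(1-\gamma)^{-1/2}$ in the final constant,'' but without it the argument fails outright rather than losing a constant; this is precisely the part of Li et al.'s proof that does the work, and your sketch does not contain it.
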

To use this lemma for $\pi = \pihatopta$, we will need to establish Bernstein-type bounds on $(\cP_{s,a}-\hat{\cP}_{s,a}) V_{\beta}^{\pihatopta, (l)}$ for all $(s,a)$ and integer $0\le l \le \log(e/(1-\gamma))$. Since $\pihatopta$ depends on the sampling, a direct concentration bound is not possible. Instead, we will first bound $(\cP_{s,a}-\hat{\cP}_{s,a})V_{\beta}^{\pi, (l)}$ for all $\pi\in \Pi_{s,a}$, where $\Pi_{s,a}$ is a random set independent of $\hat{\cP}$, and then show that $\hat{\pi}^*\in \Pi_{s,a}$ with good probability. 
 
First, we describe the construction of $\Pi_{s,a}$. We will follow the ideas in~\citet{agarwal2020model} and \citet{li2020breaking}, and construct an absorbing empirical MDP $\hat{M}_{s,a}$, which is the same as the original empirical MDP, but state-action pair $(s,a)$ is absorbing, i.e., $\hat{\cP}'(s^\prime | s, a) = 1$ if and only if $s^\prime = s$. The reward for $(s,a)$ is equal to $u$. We define $\hat{V}_{s,a,\alpha, u}^{\pi}$ and $\hat{Q}_{s,a,\alpha, u}^{\pi}$ to be the value function and $Q$-function of policy $\pi$ for $\hat{M}_{s,a}$ with reward function $\alpha$, and define $\hat{\pi}_{s,a,\alpha, u}^*$ to be the optimal policy  i.e. $\hat{\pi}_{s,a, \alpha, u}^* = \argmax \hat{V}_{s,a,\alpha, u}^{\pi}$. We will use the shorthand -- $\hat{V}_{\alpha, u}^{*} := \max \hat{V}_{s,a,\alpha, u}^{\pi}$ and $\hat{Q}_{\alpha, u}^{*} := \max \hat{Q}_{s,a,\alpha, u}^{\pi}$.

We consider a grid, \[U_{s,a} = \{0, \pm\iota(1-\gamma)/2, \pm2\iota(1-\gamma)/2, \pm3\iota(1-\gamma)/2 \ldots, \pm \alpha_{\max}\},\] and define 
$\Pi_{s,a} = \{\hat{\pi}_{s,a,\alpha, u}^*: u\in U_{s,a}\}$. 
Then 
$\Pi_{s,a}$ is a random set independent of $\hat{\cP}_{s,a}$.
Let $L = \{0, 1, \ldots, \lceil\log(e/(1-\gamma))\rceil\}$.
Then, by Lemma~\ref{lemma:bern}, we have, with probability at least $1-\delta/|S|/|A|$, 
for all $\pi \in \Pi_{s,a}$ and $l\in L$
\begin{align*}
\left|( \cP_{s,a}- \cPhat_{s,a}) \cdot V_{\beta}^{\pi, (l)} \right| & \leq 
\sqrt{\frac{2 \log(4|U_{s,a}||L||S||A|/\delta)}{N}} \sqrt{\Var{P_{s,a}}{V_{\beta}^{\pi, (l)}}} + \frac{2 \log(4|U_{s,a}||S||A||L|/\delta)\norminf{V_{\beta}^{\pi, (l)}}}{3 N},
\end{align*}
which we denote as event $\cE_{s,a}$.

Next, we show that if $u^* =\hat{Q}_{\alpha}^{*}(s,a) - \gamma \hat{V}_{\alpha}^{*}(s) $, then 
$\hat{\pi}_{s,a,\alpha, u^*}^* = \hat{\pi}_{\alpha}^*$: \\
(1) If $\pi^*(s) = a$, it straightforward to verify that \[
\hat{V}_{\alpha}^{*}(s) = \hat{Q}_{\alpha}^{*}(s,a)= u^* + \gamma \hat{V}_{\alpha}^{*}(s)
\ge r(s, a') + \hat{\cP}(\cdot|s', a')^\top 
\hat{V}_{\alpha}^{*}, \quad \forall a'\neq a.
\] \\
(2) If  $\pi^*(s) \neq a$, then 
\[
\hat{V}_{\alpha}^{*}(s) = \max_{a'}\hat{Q}_{\alpha}^{*}(s,a') = \max_{a'}(r(s, a') + \hat{\cP}(\cdot|s, a')^\top 
\hat{V}_{\alpha}^{*}) =  r(s, \hat{\pi}^*(s)) + \hat{\cP}(\cdot|s, \hat{\pi}^*(s))^\top 
\hat{V}_{\alpha}^{*}.\] 
(3) For $s'\neq s$, we have
\[\hat{V}_{\alpha}^{*}(s') = \hat{Q}_{\alpha}^*(s',\hat{\pi}^*(s')) = r(s', \hat{\pi}^*(s')) + \hat{\cP}(\cdot|s', \hat{\pi}^*(s'))^\top 
\hat{V}_{\alpha}^{*} = \max_{a'}(r(s, a') + \hat{\cP}(\cdot|s', a')^\top 
\hat{V}_{\alpha}^{*}).\] Therefore, $\hat{Q}_{\alpha}^*(s',a')$ and $\pihatopta$ satisfies the Bellman equations in the absorbing MDP; consequently, we have $\hat{Q}_{s,a,\alpha, u}^{\pihatopta}=\hat{Q}_{\alpha, u}^{*} =\hat{Q}_{\alpha}^*$ and $\hat{V}_{s,a,\alpha, u}^{\pihatopta} = \hat{V}_{\alpha, u}^{*} =\hat{V}_{\alpha}^{\pihatopta}$ and $\pihatopta$ is an optimal policy in the absorbing MDP.

Moreover, suppose event $\cE$ happens, then $\hat{Q}_{\alpha}^{*}$ satisfies the $\iota$-gap condition. By Lemma~\ref{policy-gap}, 
for all $|u-u^*|\le \iota(1-\gamma)/2$, we have
\[
\pihatopta = \hat{\pi}_{s,a,\alpha, u^*}^*
=\hat{\pi}_{s,a,\alpha, u}^*.
\]
Thus, if $\cE$ happens, then $\pihatopta = \hat{\pi}_{s,a,\alpha, u}^*$ for some $u\in U_{s,a}$ and thus $\hat{\pi}^*\in \Pi_{s,a}$.
On $\cE\cap \cE_{s,a}$, we have, for all $l\in L$,
\begin{align*}
\left|( \cP_{s,a}- \cPhat_{s,a}) \cdot V_{\beta}^{\pihatopta, (l)} \right| & \leq 
\sqrt{\frac{2 \log(4|U_{s,a}||L||S||A|/\delta)}{N}} \sqrt{\Var{P_{s,a}}{V_{\beta}^{\pihatopta, (l)}}} + \frac{2 \log(4|U_{s,a}||S||A||L|/\delta)\norminf{V_{\beta}^{\pihatopta, (l)}}}{3 N}.
\end{align*}

By a union bound over all $(s,a)$, we have, with probability at least $\Pr[\cE\cap (\cap_{s,a}\cE_{s,a})] \ge \Pr[\cE]-\delta$,  for all $(s,a)$, and $l\in L$,
\begin{align*}
\left|( \cP_{s,a}- \cPhat_{s,a}) \cdot V_{\beta}^{\pihatopta, (l)} \right| & \leq \sqrt{\frac{2 \log(4|U_{s,a}||L||S||A|/\delta)}{N}} \sqrt{\Var{P_{s,a}}{V_{\beta}^{\pihatopta, (l)}}} + \frac{2 \log(4|U_{s,a}||S||A||L|/\delta)\norminf{V_{\beta}^{\pihatopta, (l)}}}{3 N}.
\end{align*}

Let $\nu_1 = 2 \log(4 |U_{s,a}||S||A||L|/\delta)$ and apply Lemma~\ref{lemma:bern2con}, we arrive at,
\[
\norminf{\hat{V}_{\beta}^{\hat{\pi}^*} - V_{\beta}^{\hat{\pi}^*}} \le \frac{6}{1-\gamma}\cdot \sqrt{\frac{\nu_1}{N(1-\gamma)}} \norminf{\beta} 
\]
provided $N\ge \frac{16e^2}{1-\gamma}\nu_1$. For instantiating $\nu_1$, note that $|U_{s,a}| = \frac{4 \alpha_{\max}}{(1 - \gamma)^2 \iota}$, $|L| = \log\left(\nicefrac{e}{1 - \gamma}\right)$. Hence, $\nu_1 = 2 \log \left(\frac{16  \alpha_{\max} S A \log\left(\nicefrac{e}{1 - \gamma}\right)}{(1 - \gamma)^2 \, \iota \, \delta} \right)$. 

\end{proof}

\begin{thmbox}
\begin{lemma}
For any policy $\pi$, we have
\[
\norminf{\reward{\pi} - \rewardp{\pi}} \leq \frac{\omega}{1-\gamma} \quad \text{;} \quad \norminf{\rewardhat{\pi} - \rewardhatp{\pi}} \leq \frac{\omega}{1-\gamma}
\]
\label{lemma:perturbation-error}
\end{lemma}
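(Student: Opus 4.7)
The plan is to prove both inequalities by a direct pointwise bound on the rewards followed by summing the geometric series, since the two value functions in each case differ only in their reward function while sharing the same transition kernel.

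First I would note that by construction $r_p(s,a) = r(s,a) + \xi(s,a)$ with $\xi(s,a) \in [0,\omega]$, so $|r_p(s,a) - r(s,a)| \leq \omega$ holds uniformly over $(s,a)$ (and in fact deterministically, not just in expectation). The value functions $\reward{\pi}$ and $\rewardp{\pi}$ are defined through the same dynamics $\cP$ and discount $\gamma$ starting from the same initial distribution $\rho$, so the difference $\reward{\pi}(s) - \rewardp{\pi}(s)$ equals $\mathbb{E}_{\pi,\cP}\bigl[\sum_{t=0}^\infty \gamma^t (r(s_t,a_t) - r_p(s_t,a_t)) \mid s_0 = s\bigr]$.

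By linearity of expectation and the triangle inequality, this is bounded in absolute value by $\sum_{t=0}^\infty \gamma^t \cdot \omega = \omega/(1-\gamma)$, giving the first inequality. For the second inequality, exactly the same argument applies with $\cP$ replaced by $\cPhat$ throughout: the reward-difference bound $|r_p - r| \leq \omega$ is a property of the rewards alone and is independent of which transition kernel is used to define the value function, so one again obtains the geometric-sum bound $\omega/(1-\gamma)$. No step is really an obstacle here; this is essentially the standard simulation-style observation that value functions are $1/(1-\gamma)$-Lipschitz in the sup-norm of the reward.
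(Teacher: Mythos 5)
Your proof is correct and is essentially the same argument as the paper's: the paper writes the value-function difference as $\left(I - \gamma P_{\pi}\right)^{-1}\left[r^{\pi} - r_p^{\pi}\right]$ and bounds the resolvent's operator norm by $\frac{1}{1-\gamma}$ and the reward gap by $\omega$, which is exactly your trajectory-expectation/geometric-series computation written in matrix form. Both inequalities are handled identically in the paper as well, by swapping $\cP$ for $\cPhat$.
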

\end{thmbox}
\begin{proof}
For policy $\pi$, $\reward{\pi} = \left(I - \gamma P_{\pi}\right)^{-1} r^{\pi}$ and $\rewardp{\pi} = \left(I - \gamma P_{\pi}\right)^{-1} r_p^{\pi}$. 
\begin{align*}
\reward{\pi} - \rewardp{\pi} &= \left(I - \gamma P_{\pi}\right)^{-1} [r^{\pi} - r_p^{\pi}] \\
\implies \norminf{\reward{\pi} - \rewardp{\pi}} & \leq \|\left(I - \gamma P_{\pi}\right)^{-1}\|_{1} \norminf{r^\pi - r^\pi_p} \\
\intertext{Since $\|\left(I - \gamma P_{\pi}\right)^{-1}\|_{1} \leq \frac{1}{1 - \gamma}$ and $\norminf{r^\pi - r^\pi_p} \leq \omega$}
\norminf{\reward{\pi} - \rewardp{\pi}} & \leq
\frac{\omega}{1 - \gamma}. 
\end{align*}
The same argument can be used to bound $\norminf{\rewardhat{\pi} - \rewardhatp{\pi}}$ completing the proof. 
\end{proof}

\mainconcentrationemp*
\begin{proof}
\begin{align*}
\abs{\rewardp{\pihatbar} - \rewardhatp{\pihatbar}} &= \abs{\frac{1}{T} \sum_{t = 0}^{T-1} \left[\rewardp{\pit} - \rewardhatp{\pit}\right]} \leq \frac{1}{T} \sum_{t = 0}^{T-1} \abs{\rewardp{\pit} - \rewardhatp{\pit}} \\
& \leq \frac{1}{T} \sum_{t = 0}^{T-1} \norminf{V_{r_p}^{\pit} - \hat{V}_{r_p}^{\pit}} 
\end{align*}
Recall that $\hat{M}_{r + \lambdat c}$ satisfies the gap condition with $\iota = \frac{\omega \, \delta}{30 \, |\Lambda| |S||A|^2}$ for every $\lambdat \in \Lambda$. Since $|\Lambda| = \frac{U}{\epsl}$, $\iota = \frac{\omega \, \delta (1 - \gamma) \, \epsl}{30 \, U |S||A|^2}$. Since $\pit := \argmax \hat{V}_{r_p + \lambdat c}^{\pi}$, we use~\cref{lemma:main} with $\alpha = r_p + \lambdat c$ and $\beta = r_p$, and obtain the following result. For $N \geq \frac{4 \, C(\delta)}{1-\gamma}$, for each $t \in [T]$, with probability at least $1 - \delta/5$,
\begin{align*}
\norminf{V_{r_p}^{\pit} - \hat{V}_{r_p}^{\pit}} & \leq \sqrt{\frac{C(\delta)}{N \cdot (1-\gamma)^3 }} \, (1 + \omega) \leq 2 \sqrt{\frac{C(\delta)}{N \cdot (1-\gamma)^3 }} 
\end{align*}
Using the above relations, 
\begin{align*}
\abs{\rewardp{\pihatbar} - \rewardhatp{\pihatbar}} & \leq 2 \sqrt{\frac{C(\delta)}{N \cdot (1-\gamma)^3 }} 
\end{align*}
Similarly, invoking~\cref{lemma:main} with $\alpha = r_p + \lambdat c$ and $\beta = c$ gives the bound on $\abs{\const{\pihatbar} - \consthat{\pihatbar}}$. 
\end{proof}

\begin{thmbox}
\mainconcentrationopt*
\end{thmbox}
\begin{proof}
Since $\piopt$ and $\pi^*_c$ are fixed policies independent of the sampling, we can directly use~\citet[Lemma 1]{li2020breaking}. 
\end{proof}

\clearpage
\subsection{Helper lemmas}
\label{app:common-helper}

\begin{thmbox}
\begin{lemma}
With probability at least $1 - \delta/10$, for every $\lambda \in \Lambda$, $\hat{M}_{r_p + \lambda c}$ satisfies the gap condition in~\cref{def:gap-condition} with $\iota = \frac{\omega \, \delta \, (1-\gamma)}{30 \, |\Lambda||S||A|^2}$. 
\label{lemma:gap}
\end{lemma}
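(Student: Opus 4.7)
The plan is to apply the reward-perturbation gap lemma of \citet{li2020breaking} (their Lemma~6) to each fixed $\lambda \in \Lambda$ separately, and then union-bound over the finite grid $\Lambda$. For a fixed $\lambda$, the rewards of $\hat{M}_{r_p + \lambda c}$ decompose as $(r(s,a) + \lambda c(s,a)) + \xi(s,a)$, where the baseline $r + \lambda c$ is deterministic and, crucially, independent of the i.i.d.\ uniform perturbations $\xi(s,a) \sim \mathcal{U}[0,\omega]$; the empirical kernel $\hat{\cP}$ is also independent of $\xi$ since the samples and the $\xi$'s are drawn separately in Lines~2--3 of \cref{alg:cmdp-generative}. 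This is precisely the input structure assumed by Lemma~6 of \citet{li2020breaking}.

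Invoking that lemma for a fixed $\lambda$ yields that the probability that $\hat{M}_{r_p + \lambda c}$ fails the $\iota$-gap condition in \cref{def:gap-condition} is at most a constant multiple of $\frac{\iota\, |S|\, |A|^2}{\omega\,(1-\gamma)}$, where the constant can be chosen to be $3$ to match the target $\iota$. Concretely, with
\[
\iota \;=\; \frac{\omega\,\delta\,(1-\gamma)}{30\,|\Lambda|\,|S|\,|A|^2},
\]
the per-$\lambda$ failure probability is at most $\delta/(10\,|\Lambda|)$. A union bound over the at most $|\Lambda|$ values $\lambda \in \Lambda$ then gives that, with probability at least $1-\delta/10$, $\hat{M}_{r_p+\lambda c}$ satisfies the $\iota$-gap condition simultaneously for every $\lambda \in \Lambda$.

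The main subtlety to handle carefully is that we reuse the same perturbation $\xi$ across all values of $\lambda$, so the gap events for different $\lambda$'s are not independent; however, Lemma~6 of \citet{li2020breaking} only requires the marginal structure for each fixed reward vector, and the union bound already absorbs the lack of independence at the cost of the factor $|\Lambda|$ in $\iota$. A second point worth verifying is that the combined reward $r + \lambda c + \xi$ lies in a bounded range so that the constants in Li et al.'s lemma apply uniformly; since $\lambda \in [0,U]$ and $r,c,\xi$ are all in $[0, 1+\omega]$ (times $U$), the requisite boundedness holds and the constant hidden in the $O(\cdot)$ is indeed absolute, which is what allows the clean choice of $\iota$ above.
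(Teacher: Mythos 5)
Your proposal is correct and is essentially the paper's own proof: the paper likewise invokes Lemma~6 of \citet{li2020breaking} (restated as \cref{lemma:gapli}) for each fixed $\lambda$ and union-bounds over $\Lambda$, which is exactly how the factor $|\Lambda|$ and the $\delta/10$ enter the stated $\iota$. Your extra remarks on independence and reward boundedness are fine but not needed, since \cref{lemma:gapli} allows an arbitrary deterministic base reward and its gap bound does not depend on the reward magnitude.
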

\end{thmbox}
\begin{proof}
Using~\cref{lemma:gapli} with a union bound over $\Lambda$ gives the desired result. 
\end{proof}

\begin{thmbox}
\begin{lemma}
\label{policy-gap}
Let $\pi^{*}_{\alpha}$ and $\pi^{*}_{\alpha'} $ be two optimal policies for an MDP with rewards $\alpha$ and $\alpha'$ respectively.
Suppose $Q_\alpha^*$ satisfies the $\iota$-gap condition. 
Then, for all $\alpha'$ with $\norminf{\alpha' - \alpha} < \iota(1-\gamma)/2$, we have
\[
\pi^{*}_{\alpha} = \pi^{*}_{\alpha'}.
\]
\label{lemma:abs-lipschitz-policy}
\end{lemma}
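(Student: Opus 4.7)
The plan is to prove the lemma in two clean steps: (i) establish a Lipschitz-type perturbation bound for $Q^*_\alpha$ as a function of the reward $\alpha$, and (ii) invoke the $\iota$-gap condition to argue that the unique $\argmax$ at each state is preserved under a sufficiently small reward perturbation.

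For step (i), I would show that
\[
\|Q^*_{\alpha'} - Q^*_\alpha\|_\infty \;\le\; \frac{\|\alpha' - \alpha\|_\infty}{1-\gamma}.
\]
This is a standard Bellman-contraction argument. Letting $T_\alpha$ denote the Bellman optimality operator with reward $\alpha$, we have $Q^*_\alpha = T_\alpha Q^*_\alpha$ and $Q^*_{\alpha'} = T_{\alpha'} Q^*_{\alpha'}$. Since $T_\alpha$ is a $\gamma$-contraction in $\|\cdot\|_\infty$ and satisfies $\|T_\alpha Q - T_{\alpha'} Q\|_\infty \le \|\alpha - \alpha'\|_\infty$ for any fixed $Q$, adding and subtracting $T_\alpha Q^*_{\alpha'}$ and applying the triangle inequality gives $\|Q^*_\alpha - Q^*_{\alpha'}\|_\infty \le \gamma \|Q^*_\alpha - Q^*_{\alpha'}\|_\infty + \|\alpha - \alpha'\|_\infty$, which rearranges to the displayed bound.

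For step (ii), fix a state $s$ and let $a^\star := \pi^*_\alpha(s)$. The hypothesis $\|\alpha' - \alpha\|_\infty < \iota(1-\gamma)/2$ together with step (i) yields $\|Q^*_{\alpha'} - Q^*_\alpha\|_\infty < \iota/2$. The $\iota$-gap condition states $Q^*_\alpha(s, a^\star) - Q^*_\alpha(s, a) \ge \iota$ for every $a \ne a^\star$. Combining these two facts, for any $a \ne a^\star$,
\[
Q^*_{\alpha'}(s, a^\star) - Q^*_{\alpha'}(s, a) \;>\; \bigl[Q^*_\alpha(s, a^\star) - \tfrac{\iota}{2}\bigr] - \bigl[Q^*_\alpha(s, a) + \tfrac{\iota}{2}\bigr] \;\ge\; 0,
\]
so $a^\star$ is the unique maximizer of $Q^*_{\alpha'}(s, \cdot)$ as well, giving $\pi^*_{\alpha'}(s) = \pi^*_\alpha(s)$. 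Running this argument over all states concludes the proof.

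Main obstacle: honestly there is no serious obstacle here; both steps are standard dynamic-programming manipulations. The only point to keep in mind is that the perturbation bound must be \emph{strict} (inherited from the strict inequality $\|\alpha' - \alpha\|_\infty < \iota(1-\gamma)/2$) so that the displayed difference in step (ii) is strictly positive and $\pi^*_\alpha$ is forced to be the unique optimal policy of the perturbed MDP rather than one of several tied maximizers. If only the non-strict inequality $\|\alpha' - \alpha\|_\infty \le \iota(1-\gamma)/2$ were assumed, one could conclude only that $\pi^*_\alpha$ remains \emph{an} optimal policy under $\alpha'$, which would not suffice for the uses of this lemma in the concentration argument for $\hat{\pi}^*_{s,a,\alpha,u}$.
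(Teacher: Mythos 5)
Your proposal is correct and follows essentially the same route as the paper: both rely on the $\frac{1}{1-\gamma}$-Lipschitz continuity of $Q^*$ in the reward (which the paper asserts and you prove via the Bellman contraction) to get $\|Q^*_{\alpha'}-Q^*_\alpha\|_\infty < \iota/2$, and then combine this with the $\iota$-gap condition through a triangle-inequality chain to conclude the argmax is preserved. Your step (ii) is marginally more direct than the paper's (you show $\pi^*_\alpha(s)$ is the strict maximizer of $Q^*_{\alpha'}(s,\cdot)$ outright, while the paper rules out every other action as a candidate for $\pi^*_{\alpha'}(s)$), but the difference is cosmetic.
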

\end{thmbox}
\begin{proof}
Since $\norminf{\alpha' - \alpha} < \iota(1-\gamma)/2$, we thus have
\[
\|{Q}^*_{\alpha} - {Q}^*_{\alpha'}\|_{\infty}
< \frac{\iota(1-\gamma)}{2(1-\gamma)} = \frac{\iota}{2}.
\]
Note that, for all $s$,
\begin{align*}
{Q}^*_{\alpha}(s,{\pi}^*_{\alpha'}(s))
&>{Q}^*_{\alpha'}(s,{\pi}^*_{\alpha'}(s)) - \frac{\iota}{2}\\
&\ge {Q}^*_{\alpha'}(s, {\pi}^*_{\alpha}(s)) - \frac{\iota}{2},\\
&> {Q}^*_{\alpha}(s, {\pi}^*_{\alpha}(s)) - {\iota}\\
& \ge  {Q}^*_{\alpha}(s, a'). \quad\forall a'\neq {\pi}^*_{\alpha}(s).
\end{align*}
Hence, ${\pi}^*_{\alpha'}(s)\not\in\{a': a'\neq {\pi}^*_{\alpha}(s)\}$ for all $s\in S$, 
and consequently ${\pi}^*_{\alpha} = {\pi}^*_{\alpha'}$.
\end{proof}

\begin{thmbox}
\begin{lemma}[Lemma~6 of~\citet{li2020breaking}]
\label{lemma:gapli}
Consider the MDP $M=(S,A, P, \gamma, r_p)$ with randomly perturbed rewards $r_p$ ($r_p(s,a) = r(s,a) + \xi(s,a)$ where $\xi(s,a) \sim \mathcal{U}[0,\omega]$ and $r(s,a)\in \mathbb{R}$). If optimal $Q$-function is denoted as $Q^*_{r_p}$ 
and  $\pi_{r_p}^*$ is an optimal deterministic  policy for $M$
 then
 for any $\delta\in [0,1]$,
  with probability at least $1-\delta$ we have, for all $s$ and $a\neq \pi_{r'}^*(s)$
\[
Q^{*}_{r_p}(s,\pi_{r_p}^*(s)) \ge 
Q^{*}_{r_p}(s, a) + \frac{\omega \delta (1-\gamma)}{3|S||A|^2} \, 
\]
\end{lemma}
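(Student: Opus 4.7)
The plan is to fix a state–action pair $(s, a)$, condition on all perturbations $\{\xi(s'', a'')\}_{(s'', a'') \neq (s, a)}$, and treat $\xi_0 := \xi(s, a)$ as the single remaining source of randomness, uniform on $[0, \omega]$. The goal will be to show that, on the event $\{\pi^*_{r_p}(s) \neq a\}$, the gap $Q^*_{r_p}(s, \pi^*_{r_p}(s)) - Q^*_{r_p}(s, a)$ is an affine function of $\xi_0$ with slope $-1$, so that the event ``this gap is smaller than $\iota$'' has conditional probability at most $\iota / \omega$. A union bound over the $|S||A|$ pairs $(s, a)$ then finishes the proof.

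The structural heart of the argument would be as follows. Let $V^*_{-a}$ denote the optimal value function of the MDP obtained from $M$ by forbidding action $a$ at state $s$ (and keeping all other actions available). Since the reward $r_p(s, a)$ never enters the restricted Bellman operator, $V^*_{-a}$ does not depend on $\xi_0$. Define the threshold $\xi_0^\star := V^*_{-a}(s) - r(s, a) - \gamma\, P(\cdot \mid s, a)^\top V^*_{-a}$, which is likewise $\xi_0$-independent. I would then verify, by substituting $V^*_{-a}$ into the full (unrestricted) Bellman equations, that whenever $\xi_0 \le \xi_0^\star$ the unconstrained optimal value coincides with $V^*_{-a}$, the unconstrained optimal policy at $s$ differs from $a$, and
\[ Q^*_{r_p}(s, \pi^*_{r_p}(s)) - Q^*_{r_p}(s, a) \;=\; \xi_0^\star - \xi_0. \]
For $\xi_0 > \xi_0^\star$ one has $\pi^*_{r_p}(s) = a$, so the pair $(s, a)$ is irrelevant to the gap condition.

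Given the affine expression above, the bad event $\{\pi^*_{r_p}(s) \neq a\} \cap \{\text{gap} < \iota\}$, conditional on the other perturbations, corresponds to $\xi_0 \in (\xi_0^\star - \iota,\, \xi_0^\star]$, which has Lebesgue measure at most $\iota$. Hence its conditional, and therefore unconditional, probability is at most $\iota / \omega$. A union bound over the $|S||A|$ choices of $(s, a)$ gives overall failure probability at most $|S||A|\,\iota / \omega$; setting $\iota = \omega\delta / (|S||A|)$ makes this at most $\delta$, which is in fact stronger than the stated target $\iota = \omega \delta (1-\gamma) / (3 |S| |A|^2)$. The main obstacle will be the fixed-point verification in the previous paragraph: a priori $V^*_{r_p}$ depends on $\xi_0$ through a cascade of Bellman updates, and the essential reason the argument goes through is that every deterministic policy $\pi$ with $\pi(s) \neq a$ has value $V^\pi$ completely independent of $\xi_0$, so the ``best non-$a$'' value at $s$ is $\xi_0$-independent, and this restricted optimum coincides with the true $V^*_{r_p}$ exactly in the regime where $a$ is sub-optimal at $s$.
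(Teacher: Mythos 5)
Your argument is correct, and it is worth noting that the paper itself gives no proof of this lemma: it is imported verbatim as Lemma~6 of \citet{li2020breaking} and used as a black box, so your self-contained derivation is a genuinely different (and more economical) route. The key step --- introducing the auxiliary MDP in which action $a$ is forbidden at state $s$, observing that its optimal value $V^*_{-a}$ is independent of $\xi_0=\xi(s,a)$, and verifying via the Bellman optimality equation that for $\xi_0\le \xi_0^\star$ one has $V^*_{r_p}=V^*_{-a}$ and hence $Q^{*}_{r_p}(s,\pi_{r_p}^*(s)) - Q^{*}_{r_p}(s,a)=\xi_0^\star-\xi_0$ --- checks out, as does the converse claim that $\xi_0>\xi_0^\star$ forces $\pi^*_{r_p}(s)=a$ (if not, $\pi^*_{r_p}$ would be feasible for the restricted MDP, giving $V^*_{r_p}=V^*_{-a}$ and then $Q^*_{r_p}(s,a)>V^*_{r_p}(s)$, a contradiction). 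Conditioning on the other perturbations makes $\xi_0^\star$ deterministic while $\xi_0$ remains uniform on $[0,\omega]$, so the bad event is an interval of length $\iota$ and the union bound over at most $|S||A|$ pairs yields failure probability $|S||A|\iota/\omega$. This gives $\iota=\omega\delta/(|S||A|)$, which is strictly stronger than the cited $\omega\delta(1-\gamma)/(3|S||A|^2)$; the extra $(1-\gamma)/(3|A|)$ in the cited bound is an artifact of the less direct argument in \citet{li2020breaking} and is harmless here since the paper only needs the weaker constant. Two cosmetic points: you should state explicitly that ties ($\xi_0=\xi_0^\star$ exactly) occur with probability zero and are in any case absorbed into the bad event, and that the interval $(\xi_0^\star-\iota,\xi_0^\star]$ may fall partly outside $[0,\omega]$, which only decreases its probability.
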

i.e. $M$ satisfies the gap condition in~\cref{def:gap-condition} with $\iota = \frac{\omega \delta (1-\gamma)}{3|S||A|^2}$. 
\end{thmbox}

\begin{thmbox}
\begin{lemma}[Bernstein inequality] 
\label{lemma:bern}
Fix a state $s$, an action $a$. Let  $\delta \geq 0$. Then, for any fixed vector $V$, with probability greater than $1-\delta$, it holds that, 
\begin{align*}
\left|( \cP_{s,a}- \cPhat_{s,a}) \cdot V \right| & \leq
\sqrt{\frac{2 \log(4/\delta)}{N}} \sqrt{\Var{P_{s,a}}{V}} + \frac{2 \log(4/\delta)\norminf{V}}{ 3 N}.
\end{align*}
\label{lemma:bernstein}
\end{lemma}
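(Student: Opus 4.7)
The statement is a direct instance of the empirical Bernstein-type bound for an average of bounded i.i.d.\ random variables applied to the sampling oracle, so my plan is to reduce it to a standard scalar Bernstein inequality and then do one elementary inversion. First I would introduce the $N$ i.i.d.\ samples $s'_1,\dots,s'_N \sim \cP(\cdot|s,a)$ that are used to form $\cPhat_{s,a}$, and the real-valued random variables $X_i := V(s'_i)$. By construction $\cPhat_{s,a}\cdot V = \frac{1}{N}\sum_{i=1}^N X_i$ and $\cP_{s,a}\cdot V = \E[X_1]$, while $|X_i| \le \|V\|_\infty$ and $\mathrm{Var}(X_i) = \Var{P_{s,a}}{V}$. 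The quantity we need to control is therefore $\bigl|\frac{1}{N}\sum_i X_i - \E X_1\bigr|$.

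Next I would invoke the classical (two-sided) Bernstein inequality for bounded i.i.d.\ variables, namely
\[
\Pr\!\left[\left|\tfrac{1}{N}\sum_{i=1}^N X_i - \E X_1\right| \ge t\right] \;\le\; 2\exp\!\left(-\frac{N t^2}{2\,\Var{P_{s,a}}{V} + \tfrac{2}{3}\|V\|_\infty t}\right).
\]
Setting the right-hand side equal to $\delta/2$ (to match the $\log(4/\delta)$ in the statement, I am splitting the confidence between the two tails; equivalently one may apply the one-sided Bernstein to $X_i$ and $-X_i$ separately and union-bound) gives a quadratic inequality in $t$ of the form $Nt^2 \le 2\log(4/\delta)\bigl(\Var{P_{s,a}}{V} + \tfrac{1}{3}\|V\|_\infty t\bigr)$.

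Finally I would solve (or simply upper-bound) this quadratic. The cleanest route is the standard subadditivity trick: if $t$ satisfies $Nt^2 \le A + Bt$ with $A,B\ge 0$, then $t \le \sqrt{A/N} + B/N$. Applied with $A = 2\log(4/\delta)\,\Var{P_{s,a}}{V}$ and $B = \tfrac{2}{3}\log(4/\delta)\,\|V\|_\infty$, this yields exactly
\[
|(\cP_{s,a}-\cPhat_{s,a})\cdot V| \;\le\; \sqrt{\frac{2\log(4/\delta)}{N}\,\Var{P_{s,a}}{V}} \;+\; \frac{2\log(4/\delta)\,\|V\|_\infty}{3N},
\]
which is the claimed bound. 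There is no genuine obstacle here: the only mildly delicate step is the quadratic inversion, and even this is a one-line application of $\sqrt{a+b}\le\sqrt{a}+\sqrt{b}$ after bounding $t$ on both sides. The essential ingredients are (i) that $\cPhat_{s,a}$ is literally an empirical distribution from $N$ independent draws, so the samples are i.i.d., and (ii) that $V$ is a fixed vector independent of the samples, which makes the scalar Bernstein inequality directly applicable without any covering or union-bound argument.
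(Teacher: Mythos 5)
Your proof is correct and follows exactly the standard route the paper implicitly relies on: the lemma is quoted in the appendix as a textbook Bernstein inequality with no proof given, and your reduction to the scalar two-sided Bernstein bound for the i.i.d.\ samples $X_i = V(s'_i)$, followed by the quadratic inversion $Nt^2 \le A + Bt \Rightarrow t \le \sqrt{A/N} + B/N$, is precisely how one derives the stated form. The only caveat is that taking the range constant to be $\norminf{V}$ rather than $2\norminf{V}$ requires $X_i - \E X_i$ to have span at most $\norminf{V}$ (e.g.\ $V \ge 0$); this is a feature of the lemma as stated, not of your argument, and is harmless in the paper's applications where the vectors $V^{\pi,(l)}_{\beta}$ are nonnegative.
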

\end{thmbox}
\clearpage
\section{Lower Bound}
\label{app:lb}
In this section, we first present the lower-bound in the simplified bandit setting (\cref{app:lb-bandit}), and then present the formal CMDP lower bound in~\cref{app:proof-lb}. 

\subsection{Bandit Setting}
\label{app:lb-bandit}
Consider the 2-arm bandit setting where arm-1 has a mean reward $H := \frac{1}{1 - \gamma}$ and mean constraint reward $b-\zeta$ and arm-2 has a mean reward $0$ and mean constraint reward $b+\zeta$. For the constraint RHS equal to $b$ (implying that the Slater constant is $\zeta$), the ground truth optimal policy plays each arm with equal probability to achieve a reward value $H/2$ (with no constraint violation). \\ \\
However, suppose there is an error $\epsilon'$ in the estimation of the constraint reward in arm-2 (we estimate it to be $b + \zeta - \epsilon'$); then even if everything else is exactly estimated, the empirical optimal policy has to play action $2$ with probability $\zeta/(2\zeta-\epsilon')\approx 1/2 + \epsilon'/(4\zeta)$ to satisfy the constraint, which gives a value $H/2 - H\epsilon'/(4\zeta)$. This results in an $H\epsilon'/(4\zeta)$  error of the final policy. To obtain an $\epsilon$-correct policy without constraint violation, one needs to set $\epsilon' = \epsilon\zeta/H = \epsilon \zeta \, (1 - \gamma)$, thus inflating the sample complexity compared to the unconstrained setting. We instantiate this intuition for CMDPs in~\cref{thm:lb-strict}.

\subsection{Proof of~\cref{thm:lb-strict}}
\label{app:proof-lb}

\lbstrict* 

\begin{figure}[!ht]
    \subfigure
    {
    \includegraphics[width=0.9\textwidth]{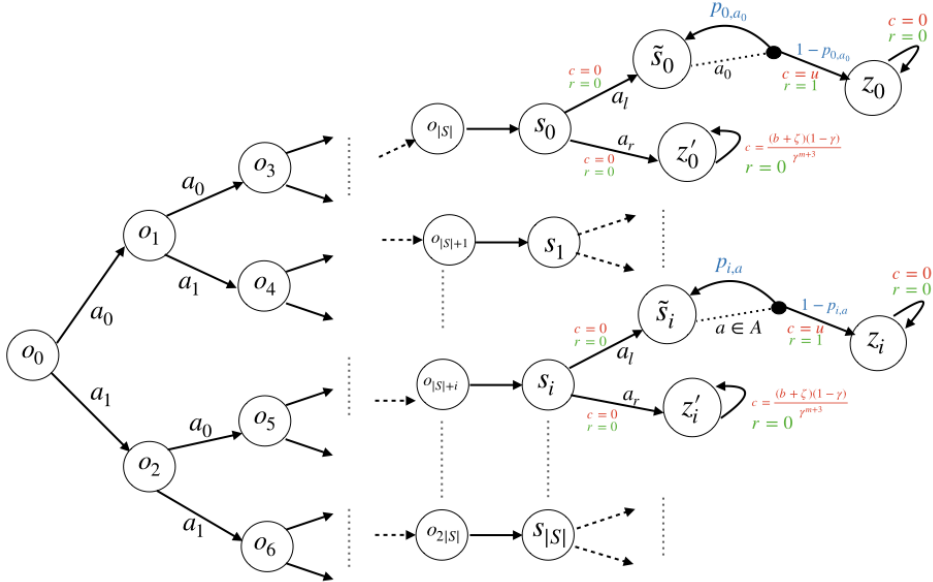}
    }
    \caption{
    The lower-bound instance consists of CMDPs with $S = 2^m - 1$ (for some integer $m > 0$) states and $A$ actions. We consider $SA + 1$ CMDPs -- $M_0$ and $M_{i,a}$ ($i \in \{1,\ldots S\}$, $a \in \{1,\ldots, A\}$) that share the same structure shown in the figure. For each CMDP, $o_0$ is the fixed starting state and there is a deterministic path of length $m+1$ from $o_0$ to each of the $S + 1$ states -- $s_i$ (for $i \in \{0, 1, \ldots, S\}$). Except for states $\tilde{s}_i$, the transitions in all other states are deterministic. For $i \neq 0$, for action $a \in \cA$ in state $\tilde{s}_i$, the probability of staying in $\tilde{s}_i$ is $p_{i,a}$, while that of transitioning to state $z_i$ is $1 - p_{i,a}$. There is only one action $a_0$ in $\tilde{s}_0$ and the probability of staying in $\tilde{s}_0$ is $p_{0,a_0}$, while that of transitioning to state $z_0$ is $1 - p_{0,a_0}$. The CMDPs $M_0$ and $M_{i,a}$ only differ in the values of $p_{i,a}$. The rewards $r$ and constraint rewards $c$ are the same in all CMDPs and are denoted in green and red respectively for each state and action.  
    } 
    \label{fig:lb}
\end{figure}

\begin{proof}[Proof of Theorem~\ref{thm:lb-strict}]
Without loss of generality, we assume $|\cS|=2^m - 1$ for some integer $m$.
In what follows, we first introduce our hard instance. Note that some of the states of this instance may have less than $|\cA|$ actions. This is without loss of generality as one can easily duplicate actions to make each state having exactly $|\cA|$ actions. 

\textbf{Hard Instance.}
We will consider the basic gadget defined in Figure~\ref{fig:lb}. 
We will make $|\cS|+1$ copies of this gadget. 
In the $i$-th gadet for $i=0,1,\ldots, |\cS|$, 
there is an input state $s_i$ with 2 actions $a_l, a_r$. Playing $a_l$ deterministically transitions to $\tilde{s}_i$ with reward and constraint $0$. Playing $a_r$ goes to $z'_i$ with reward and constraint $0$. 
In state $\tilde{s}_i$, there are $|\cA|$ actions (only 1 action $a_0$ on $\tilde{s}_0$). Playing action $a\in \cA$, this state transitions to state $z_i$ with probability $1-p_{i,a}$ and self loop with probability $p_{i,a}$, where $p_{i,a}$ will be determined shortly. The reward at this state is $1$ and constraint reward is $u$ to be determined. In state $z_i'$, there is only one action, whose reward is $0$ and constraint is $(b+\zeta)(1-\gamma)/\gamma^{m+3}$.

Lastly, we consider $2|\cS|+1$ routing state $o_0, o_1, \ldots$ that form a binary tree, i.e., in each of these routing state, there are two actions $a_0$ and $a_1$. The state-action pair $(o_i, a_j)$ transitions to state $o_{2i + j+1}$ for $i< |\cS|$.
Each state $o_{|\cS|+k}$ transitions deterministically to the gate state $s_k$ for $k=0,1,\ldots |\cS|$. Rewards and constraint are all 0 for these states.
Note that, for any state $s_k$, there is a unique deterministic path of length $m+1$ connecting $o_0$. 
Hence we require $\gamma \ge  1 - 1/(cm)$ for some constant $c$ and hence
$\gamma^{\Theta(m)} = \Theta(1)$.

Note that this instance modifies the MDP instance in \cite{feng2019does}. Some parameters chosen are also adapted from there.

\textbf{Hypotheses.}
With the above defined hard instance structure, we now define a family of hypotheses on the probability transitions. Later we will show that any sound algorithm would be able to test the hypothesises but would require a large number of samples.
Let $q_0, q_1, q_2 \in (0, 1)$ be some parameters to be determined. We define,
\begin{itemize}
    \item \emph{Null case} MDP $M_0$: $p_{0,a_0} = q_1$, and $p_{i, a} = q_0$ for all $i\in [|\cS|]$ and $a\in \cA$. 
    \item \emph{Alternative case} MDP $M_{i, a}$: $p_{0,a_0} = q_1$,  $p_{j, a'} = q_0$ for all $(j, a')\neq (i,a)$, and $p_{i,a} = q_2$. 
\end{itemize}
Note that all these MDPs $M_0, M_1\ldots $ share exactly  the same graph structure, with only the transition probability changes. Moreover, $M_{i,a}$ differs from $M_0$ only on state-action pair $(\tilde{s}_i, a)$.

\textbf{Optimal Policies.}
Now we specify $q_0, q_1$ and $q_2$ and check the optimal policies of each hypothesis.
Let $\epsilon' = (1-\gamma)\zeta\epsilon$,
\[
{q}_0 = \frac{1-{c_1(1-\gamma)}}{\gamma},
q_1 = {q}_0+\alpha_1,  \text{ and }q_2 = {q}_0+\alpha_2,
\]
and 
\[
\alpha_1 = \frac{c_2(1-\gamma {q}_0)^2\epsilon'}{\gamma}, \quad\text{and}
\quad \alpha_2 = \frac{c_3(1-\gamma {q}_0)^2\epsilon'}{\gamma}
\]
for some absolute constants $c_1, c_2, c_3 > 0$
and that $\alpha_1/q_0\in (0, 1/2)$, $\alpha_1/(1-q_0)\in (0, 1/2)$, $\alpha_2/q_0\in (0, 1/2)$, and $\alpha_2/(1-q_0)\in (0, 1/2)$.
We choose these parameters such that 
\[
\frac{1}{c_1(1-\gamma)} = \frac{1}{1-\gamma q_0}< \frac{1}{1-\gamma q_1}< \frac{1}{1-\gamma q_2} = \frac{1}{c_1(1-\gamma)} + c_4\epsilon'
\]
for some constants $c_4>0$
and that 
\[
\left|\frac{1}{1-\gamma q_1} - \frac{1}{1-\gamma q_0} \right| = \Theta(\epsilon'), \text{ and }
\left|\frac{1}{1-\gamma q_1} - \frac{1}{1-\gamma q_2} \right| = \Theta(\epsilon').
\]
Note that, for the reward values, if $\zeta=\Theta(1)$, then these actions only differ by $\Theta(\epsilon')\ll \epsilon$. A correct algorithm would not need to distinguish these actions. Yet, once the constraints are concerns, we will show shortly that these actions do differ because of the constraint values.

With these parameters, we can derive the optimal CMDP policy for $M_{0}$ and each $M_{i, a}$.
For any policy $\pi$, we denote the 
state occupancy measure as $\mu^{\pi}$, i.e., 
\begin{align}
\label{eqn:flux}
\mu^{\pi}(s,a) = \sum_{z}\rho(z)\sum_{t=1}^{\infty}\gamma^{t-1}\sum_{z}
\Pr_{\pi}[s_t=s, a_t=a|s_1=z]
\end{align}
where $\rho$ is the initial distribution with $\rho(o_0) = 1$ and $\rho(s)=0$ for all $s\neq o_0$.
This occupancy measure describes the discounted reachablity from $o_0$ to an arbitrary state action pair.
The reward value and constraint value can be written as 
\[
V^{\pi} = \sum_{s,a} \mu^{\pi}(s,a)r(s,a), \quad \text{and}\quad V_{c}^{\pi} = 
\sum_{s,a}\mu^{\pi}(s,a) c(s,a).
\]
Note that, given a state-occupancy measure $\mu$, a policy can be specified as $\pi_{\mu}(a|s) = \mu(s,a)/\sum_{a'} \mu(s, a')$. We can use the LP formulation for the CMDP as follows
\begin{align}
    \max &\sum_{s,a} \mu(s,a)r(s,a) \text{ subject to },\notag\\
        &\forall s: ~\sum_{a}\mu(s,a) = \rho(s) + \gamma \sum_{s', a'} P(s|s', a')\mu(s', a'),\notag\\
        &\sum_{s,a}\mu^{\pi}(s,a) c(s,a) \ge b,\notag\\
        &\mu(s,a)\ge 0.
\end{align}
Due to the structure of the CMDPs, we can further simply the structure of the constraints. In particular, we have
\begin{align}
\forall i:\quad 
\sum_{a}\mu(\tilde{s}_i, a) = \gamma\mu({s}_i, a_l) + \gamma \sum_{a}p_{i, a}\mu(\tilde{s}_i, a) 
\end{align}
let  $\bar{\nu} = \sum_{i} \mu(s_i, a_r)$. We then have
\[
\sum_{i}\mu({s}_i, a_l) + \bar{\nu} = \gamma^{m+2}, 
\]
and
\[
\mu(z_i', a) = \frac{\gamma\mu(s_i, a_r)}{1-\gamma}.
\]
Consider $M_0$,  let $\mu_0 := \mu({s}_0,a_l)$,  $\mu^c_0:= \sum_{i>0} \mu({s}_i,a_l)$ we have
\[
\sum_{i>0, a} \mu(\tilde{s}_i,a) = \frac{\gamma \mu^c_0}{1-\gamma q_0},
\quad \text{and}\quad \mu(\tilde{s}_0,a_0)
= \frac{\gamma \mu_0}{1-\gamma q_1}.
\]
The LP can be rewritten as 
\begin{align*}
    &\max \quad \frac{\gamma\mu_0}{1-\gamma q_1} + \frac{\gamma\mu_0^c}{1-\gamma q_0} \\ & \text{ s.t. } 
    \frac{\gamma\mu_0 \cdot u}{1-\gamma q_1} + \frac{\gamma\mu_0^c \cdot u}{1-\gamma q_0} + \frac{ \bar{\nu}\cdot (b+\zeta)}{\gamma^{m+2}} \ge b, ~ \mu_0 +\mu_0^c +\bar{\nu} = \gamma^{m+2}, \text{  and  } \mu(s,a)\ge 0, ~\forall (s,a). 
\end{align*}

Here, we specify the values of $u$ as,
\[
u = \frac{(1-\gamma q_0) (b-x)}{\gamma^{m+3}},
\]
for some $x=\Theta(\zeta)$,
such that,
\[
 \frac{\gamma^{m+3}u}{1-\gamma q_0} = b-x< \frac{\gamma^{m+3}u}{1-\gamma q_1}
= b-x + \epsilon'_1< \frac{\gamma^{m+3}u}{1-\gamma q_2} = b-x+\epsilon'_2< b,
\]
where $c'\epsilon'\le \epsilon'_1 \le \epsilon'_1  + c'' \epsilon' \le \epsilon'_2 \le c'''\epsilon'$ for some positive constants $c', c'', c'''$ determined by $c_1, c_2, c_3$. For this value of $u$, the maximum value of the constraint value function $\max V_c^\pi$ is $b + \zeta$, implying that $\zeta$ is the Slater constant for all these CMDPs.

Thus, for $M_0$, the solution is, 
\[
\mu_0 = \frac{\gamma^{m+2}\zeta}{\zeta + x - \epsilon'_1},\quad
\mu_0^c=  0,   \quad\text{and}\quad \bar{\nu} = \frac{(x-\epsilon'_1)\gamma^{m+2}}{\zeta+x - \epsilon'_1}.
\]

Note that this implies the policy deterministically choose a path to reach state $s_0$, and then plays action $a_l$ with probability $\mu_0/\gamma^{m+2}$.
The optimal value in this case is \[
V_{M_0}^*(o_0) = \frac{\zeta}{\zeta + x -\epsilon'_1}\cdot \frac{\gamma^{m+3}}{1-\gamma q_1}. 
\]

Similarly, for $M_{i, a}$ with $i\ge 1$, 
let $\mu_{i,a} := \mu({s}_i, a)$, $\mu_{i,a}^c := \sum_{i'>0, (i', a')\neq (i,a)}\mu(s_{i'}, a')$, the LP can be written as 
\begin{align*}
    &\max \quad \frac{\gamma\mu_0}{1-\gamma q_1} + \frac{\gamma\mu_{i,a}^c}{1-\gamma q_0} 
    + \frac{\gamma\mu_{i,a}}{1-\gamma q_2} \\ & \text{ s.t. } 
   \frac{\gamma\mu_0u}{1-\gamma q_1} + \frac{\gamma\mu_{i,a}^cu}{1-\gamma q_0} 
    + \frac{\gamma\mu_{i,a}u}{1-\gamma q_2} + \frac{ \bar{\nu}\cdot (b+\zeta)}{\gamma^{m+2}} \ge b, ~ \mu_0 +\mu_{i,a}^c + \mu_{i,a} +\bar{\nu} = \gamma^{m+2}, \\
    &\text{  and  } \mu(s,a)\ge 0, ~\forall (s,a). 
\end{align*}

the solution is
\[
\mu_{i,a}=\frac{\zeta\gamma^{m+2}}{\zeta+x-\epsilon'_2},~
\mu_{i,a}^c = \mu_0 = 0, \quad\text{and}\quad \bar{\nu} =\frac{(x-\epsilon'_2)\gamma^{m+2}}{\zeta+x - \epsilon'_2},
\]
i.e., the policy chooses a path to reach state $s_i$ deterministically and with optimal value
\[
V_{M_{i,a}}^*(o_0) = \frac{\zeta}{\zeta+x-\epsilon'_2}\cdot\frac{\gamma^{m+3}}{1-\gamma q_2}. 
\]

Lastly, we shall check the gap of the value functions.
\[
\left|V_{M_{i,a}}^*(o_0) - V_{M_{0}}^*(o_0)\right|
\ge \left(\frac{\zeta}{\zeta+x-\epsilon'_2} -\frac{\zeta}{\zeta+x-\epsilon'_1}\right)\cdot \frac{\gamma^{m+3}}{1-\gamma q_1}
= \frac{\epsilon''}{\zeta + x} \cdot  \frac{\gamma^{m+3}}{1-\gamma q_1}
\ge c_7 \epsilon.
\]
where $\epsilon''\ge c_8 \epsilon'$ for some constants $c_7, c_8$ determined by $c_1, c_2, c_3$ and $x$.
Thus the error in $V_c^{\pi}$ is amplified by a factor of $\Theta[(1-\gamma)^{-1}\zeta^{-1}]$.

\textbf{Implications of Soundness: Near-Optimal Policies.}
Let $\cK$ be a $(\epsilon, \delta)$-sound algorithm, i.e., on input any CMDP with a generative model, it outputs a policy, which is $\epsilon$-optimal with probability at least $1-\delta$. 
We thus define the event
\[
\cE_{0} = \left\{\cK \text{ outputs policy $\pi$ such that } \mu^{\pi}({s}_0, a_l) \ge \frac{\zeta\gamma^{m+2}}{(\zeta + x - \epsilon_1'/2)}\right\},
\]
i.e., this event requires the output policy reaching $s_0$ and play action $a_l$ with sufficiently high probability.
We now measure the probability of $\cE_{0}$ on different input CMDPs. 
Due to the soundness, it must be the case that
\[
\Pr_{M_{i,a}}[\cE_{0}] < \delta.
\]
If not, on $\cE_{0}$, 
$\mu^{\pi}({s}_0, a_l)\ge \frac{\zeta \gamma^{m+2}}{\zeta + x - \epsilon_1'/2}$,
we can then compute the best possible $V^{\pi}_{M_{i,a}}(o_0)$ as solving the following LP,
\begin{align*}
    &\max \quad \frac{\gamma\mu_0}{1-\gamma q_1} + \frac{\gamma\mu_{i,a}^c}{1-\gamma q_0} 
    + \frac{\gamma\mu_{i,a}}{1-\gamma q_2} \\ & \text{ s.t. } 
   \frac{\gamma\mu_0u}{1-\gamma q_1} + \frac{\gamma\mu_{i,a}^cu}{1-\gamma q_0} 
    + \frac{\gamma\mu_{i,a}u}{1-\gamma q_2} + \frac{ \bar{\nu}\cdot (b+\zeta)}{\gamma^{m+2}}\ge b, ~ \mu_0 +\mu_{i,a}^c + \mu_{i,a} +\bar{\nu} = \gamma^{m+2}, \\
    & \mu_0 \ge  \frac{\zeta\gamma^{m+2}}{(\zeta + x - \epsilon_1'/2)} \text{  and  } \mu(s,a)\ge 0, ~\forall (s,a). 
\end{align*}

Plugging the values of $p_{i,a}$, and due to $q_0 < q_1 < q_2$, we obtain  $\mu_0 = \frac{\zeta\gamma^{m+2}}{\zeta+x-\epsilon_1'/2}$, $\mu_{i,a}^c = 0$ and,
\begin{align*}
    \mu_{i,a}\cdot (b-x + \epsilon'_2)
     + \mu_0\cdot (b-x + \epsilon'_1)
     + \bar{\nu}(b+\zeta) = b\gamma^{m+2}, \mu_{i,a}+ \mu_0 + \bar{\nu} = \gamma^{m+2}.
\end{align*}
Hence, 
\[
\mu_{i,a}
= \gamma^{m+2}\cdot\frac{\zeta - \mu_0 (\zeta+x-\epsilon'_1)}{\zeta+x-\epsilon_2'}
\le 
\frac{c_9\epsilon_1'\zeta\cdot \gamma^{m+2}}{2(\zeta+x-\epsilon_1'/2)(\zeta+x-\epsilon_2)}
\]
for some constant $c_9$ depending on $c_1, c_2, c_3, x$, 
and 
\begin{align*}
V^{\pi}(o_0) & \le 
\frac{\zeta}{\zeta+x-\epsilon_1'/2}\cdot\frac{\gamma^{m+3}}{1-\gamma q_1} +
\frac{c_9\epsilon_1'\zeta}{2(\zeta+x-\epsilon_1'/2)(\zeta+x-\epsilon_2)}\cdot\frac{\gamma^{m+3}}{1-\gamma q_2}
\end{align*}
Note that
\[
V^*_{M_{i,a}}(o_0) = \frac{\zeta}{\zeta + x-\epsilon_2'}\cdot \frac{\gamma^{m+3}}{1-\gamma q_2}
\]
and \[
V^{*}_{M_{i,a}}(o_0) - 
V^{\pi}_{M_{i,a}}(o_0)
\ge \left(\frac{\zeta}{\zeta+x-\epsilon'_2} -\frac{\zeta}{\zeta+x-\epsilon'_1/2}-\frac{c_9\epsilon_1'\zeta}{2(\zeta+x-\epsilon_1'/2)(\zeta+x-\epsilon_2')}\right)\cdot \frac{\gamma^{m+3}}{1-\gamma q_2}
\ge  \epsilon
\]
for some appropriately chosen $c_1, c_2, c_3, x$, which is
a contradiction of the $(\epsilon, \delta)$-soundness.

\textbf{Implications of Soundness: Expectation on Null Hypothesis.}
Let $N_{i,a}$ be the number of samples the algorithm $\cK$ takes on state-action $(s_i, a)$. Next we show that, 
$\mathbb{E}[N_{i,a}]$ has to be big on $M_0$.
\begin{thmbox}
\begin{lemma}
Let $t_* = \frac{c_{10}\log\delta^{-1}}{(1-\gamma)^3{\epsilon'}^2}$ for some constant $c_{10}$.
For any $(\epsilon, \delta)$-sound algorithm $\cK$, for any $(i,a)$, we have \[
\mathbb{E}_{M_0}(N_{i,a})
\ge t_*.
\]
\end{lemma}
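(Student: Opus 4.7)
The plan is to prove the lemma by contradiction via the standard change-of-measure (likelihood-ratio) technique, applied to the event $\cE_0$ introduced earlier. Suppose to the contrary that $\mathbb{E}_{M_0}[N_{i,a}] < t_\star$ for some $(i,a)$. I would first establish a two-sided separation of $\cE_0$ across the two laws: $\Pr_{M_0}[\cE_0]\ge 1-\delta$ and $\Pr_{M_{i,a}}[\cE_0]\le \delta$. The second bound is already derived above from $(\epsilon,\delta)$-soundness on $M_{i,a}$. For the first, note that the LP solution for $M_0$ gives optimal occupancy $\mu_0^\star = \zeta\gamma^{m+2}/(\zeta+x-\epsilon'_1)$, which is strictly larger than the $\cE_0$-threshold $\zeta\gamma^{m+2}/(\zeta+x-\epsilon'_1/2)$; an LP-sensitivity argument mirroring the alternative-case calculation above shows that any policy whose occupancy at $(s_0,a_l)$ falls below this threshold is more than $\epsilon$-suboptimal in $M_0$, so soundness on $M_0$ forces $\Pr_{M_0}[\cE_0]\ge 1-\delta$.

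Next I would convert this probability gap into a bound on the KL divergence between the full sample paths. Let $\Pr_{M_0}$ and $\Pr_{M_{i,a}}$ denote the joint laws of all samples drawn by $\cK$ from the generative model. Because $M_0$ and $M_{i,a}$ agree on every state--action transition other than $(\tilde s_i,a)$, the chain rule yields
\[
\KL(\Pr_{M_0}\,\|\,\Pr_{M_{i,a}}) \;=\; \mathbb{E}_{M_0}[N_{i,a}]\cdot \KL\bigl(\Ber(q_0)\,\|\,\Ber(q_2)\bigr).
\]
The parameter choices give $q_2-q_0=\alpha_2=\Theta((1-\gamma)^2\epsilon')$ and $q_0(1-q_0)=\Theta(1-\gamma)$, and the conditions $\alpha_2/q_0,\alpha_2/(1-q_0)\in (0,1/2)$ ensure the quadratic Bernoulli-KL estimate applies, so $\KL(\Ber(q_0)\|\Ber(q_2))=O((1-\gamma)^3{\epsilon'}^2)$. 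Under the standing assumption $\mathbb{E}_{M_0}[N_{i,a}]<t_\star$, this gives $\KL(\Pr_{M_0}\|\Pr_{M_{i,a}})\le c'\log(1/\delta)$ for a constant $c'$ that can be made as small as desired by choosing $c_{10}$ small.

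To close the contradiction I would invoke the Bretagnolle--Huber inequality $\Pr_{M_0}[A] + \Pr_{M_{i,a}}[A^c]\ge \tfrac12\exp(-\KL(\Pr_{M_0}\|\Pr_{M_{i,a}}))$ with $A=\cE_0^c$. The two-sided gap gives $\Pr_{M_0}[\cE_0^c]\le\delta$ and $\Pr_{M_{i,a}}[\cE_0]\le\delta$, so $4\delta\ge\exp(-\KL(\Pr_{M_0}\|\Pr_{M_{i,a}}))$, i.e.\ $\KL(\Pr_{M_0}\|\Pr_{M_{i,a}})\ge \log(1/(4\delta))$, contradicting the KL upper bound once $c_{10}$ is taken small enough. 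The main obstacle, and the only place where nontrivial work beyond plugging in the standard tools is needed, is the LP-sensitivity step showing that $(\epsilon,\delta)$-soundness on $M_0$ implies $\Pr_{M_0}[\cE_0]\ge 1-\delta$: because the reward and constraint LPs couple the occupancies $\mu_0$, $\mu_0^c$ and $\bar\nu$, one must carefully track how a deficit at $\mu(s_0,a_l)$ is forced either onto $\bar\nu$ (losing value at rate $\gamma^{m+3}/(1-\gamma q_1)$) or onto the other $s_i$ states (helpful only up to a $\Theta(\epsilon')$ error via the constraint), so that the resulting reward suboptimality in $M_0$ is at least $\epsilon$; everything else is a mechanical specialization of the two-instance lower-bound recipe.
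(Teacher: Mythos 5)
Your proposal is correct, but it replaces the paper's core change-of-measure step with a different (and more textbook) one. The paper works directly with the likelihood ratio $\Pr_{M_{i,a}}[\tau]/\Pr_{M_0}[\tau]$, which factorizes over the transitions at $(\tilde s_i,a)$, and lower-bounds this ratio by $4\delta$ on an auxiliary good event $\cE'_{i,a}$ (defined by $N_{i,a}\le 10t_*$ and a $\sqrt{N_{i,a}}$-window for the empirical count $N_{\tilde s_i,\tilde s_i,a}$), whose $M_0$-probability is controlled by Markov's and Doob's inequalities; summing $\Pr_{M_{i,a}}[\tau]$ over $\tau\in\cE_0\cap\cE'_{i,a}$ then yields $\Pr_{M_{i,a}}[\cE_0]\ge 3\delta$, contradicting soundness on $M_{i,a}$. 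You instead use the divergence decomposition for adaptively collected samples, $\KL(\Pr_{M_0}\,\|\,\Pr_{M_{i,a}})=\mathbb{E}_{M_0}[N_{i,a}]\cdot\KL(\Ber(q_0)\,\|\,\Ber(q_2))$, the quadratic Bernoulli-KL estimate $\KL(\Ber(q_0)\,\|\,\Ber(q_2))=O((1-\gamma)^3{\epsilon'}^2)$ (valid here since $q_0(1-q_0)=\Theta(1-\gamma)$ and $\alpha_2=\Theta((1-\gamma)^2\epsilon')$), and the Bretagnolle--Huber inequality applied to $\cE_0$; this avoids the auxiliary event and the pathwise concentration entirely, at the cost of needing the Wald-type decomposition to hold for a data-dependent sample count (fine here, since the contradiction hypothesis gives $\mathbb{E}_{M_0}[N_{i,a}]<t_*<\infty$). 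The two routes prove the same statement with the same $t_*$ up to constants; yours is shorter and more modular, the paper's is more self-contained and gives explicit control of the ratio on individual sample paths. One further point in your favor: you correctly flag that the assertion $\Pr_{M_0}[\cE_0]\ge 1-\delta$ is not free --- it requires the LP-sensitivity argument showing that a deficit in $\mu^{\pi}(s_0,a_l)$ forces either a reward loss through $\bar\nu$ or a constraint violation --- whereas the paper passes over this with ``similar to the proof above''; your sketch of that step is consistent with the alternative-case calculation the paper does spell out.
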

\end{thmbox}
\begin{proof}
Suppose $\mathbb{E}_{M_0}(N_{i,a})<t_*$, then we aim to show a contradiction: $\Pr_{M_{i,a}}[\cE_0]\ge \delta$.
Similar to the proof above, 
since $\cK$ is $(\epsilon, \delta)$-sound, it must be the case that 
\[
\Pr_{M_{0}}[\cE_{0}]\ge 1-\delta.
\]
We now consider the likelihood ratio
\[
\Pr_{M_{i,a}}[\cE_{0}] / \Pr_{M_{0}}[\cE_{0}].
\]
For any realization of the empirical samples, consider the samples the algorithm takes as $\tau = \{(s_{i,a}^{(1)}, s_{i,a}^{(2)}, \ldots, s_{i,a}^{(N_{i,a})}): (i,a)\in [|\cS|]\times [\cA]\}$. 
Let us define $N_{s', s, a}$ as the number of samples from $(s, a)\to s'$.
By Markov property, since the only difference of the probability matrix between $M_{i,a}$ and $M_{0}$ is on $p_{i, a}$, we have
\begin{align*}
\frac{\Pr_{M_{i,a}}[\tau]}{\Pr_{M_{0}}[\tau]}
&= \frac{q_2^{N_{\tilde{s}_i, \tilde{s}_i,a}}(1-q_2)^{N_{z_i, \tilde{s}_i,a}}}{q_0^{N_{\tilde{s}_i, \tilde{s}_i,a}}(1-q_0)^{N_{z_i, \tilde{s}_i,a}}}= \left(\frac{q_2}{q_0}\right)^{N_{\tilde{s}_i, \tilde{s}_i,a}}\cdot\left(\frac{1-q_2}{1-q_0}\right)^{N_{z_i, \tilde{s}_i,a}}\\
&= \left(1+\frac{\alpha_2}{q_0}\right)^{Nq_0-\Delta}\cdot \left(1-\frac{\alpha_2}{1-q_0}\right)^{N(1-q_0)+\Delta}
\end{align*}
where $\Pr_{M}[\tau]$ denotes the probability of $\cK$ taking the samples $\tau$ in CMDP $M$, 
 $\Delta = N_{i,a}{q}_0 - N_{\tilde{s}_i, \tilde{s}_i, a}$, and 
 $N=N_{i,a}$.

By a similar derivation of Lemma~5 in \cite{feng2019does} (page 15-19), 
on the following event, 
\[
\cE'_{i,a} = \left\{N_{i,a} \le 10t_*, \text{ and } |N_{\tilde{s}_i, \tilde{s}_i,a}
- N_{i,a}q_0| \le \sqrt{20(1-q_0)q_0N_{i,a}}\right\}
\]
we have
\[
\frac{\Pr_{M_{i,a}}[\tau]}{\Pr_{M_{0}}[\tau]}
\ge 4\delta
\]
provided appropriately chosen $c_1, c_2, c_3, x, c_{10}$.

By Markov inequality and Doob's inequality (e.g. Lemma~7-8 of \cite{feng2019does}), we have
\[
\Pr_{M_0}[\cE'_{i,a}] \ge  1 - \frac{1}{10} - \frac{1}{10} = \frac{4}{5}.
\]

We are able to compute the  probability of $\cE_{0}$ on $M_{i,a}$ as follows:
\begin{align*}
\Pr_{M_{i,a}}[\cE_{0}]
 &= \sum_{\tau \in \cE_{0}} \Pr_{M_{i,a}}[\tau] 
 \ge  
 \sum_{\tau \in \cE_{0}\cap \cE_{i,a}'} \Pr_{M_{i,a}}[\tau] \\
 &= \sum_{\tau \in \cE_{0}\cap \cE_{i,a}'} \frac{\Pr_{M_{i,a}}[\tau]}{\Pr_{M_{0}}[\tau]}\cdot\Pr_{M_{0}}[\tau]
 \ge 4\delta \sum_{\tau \in \cE_{0}\cap \cE_{i,a}'} \Pr_{M_{0}}[\tau]
 \ge 3\delta,
\end{align*}
provided $\delta \le c_{11}$ for some absolute constant $c_{11}$, hence a contradiction of soundness.
\end{proof}
\paragraph{Wrapping up.}Hence, if the algorithm is $(\epsilon, \delta)$-sound for all $\{M_{i,a}\}$, it must be the case that 
\[
\mathbb{E}_{M_0}[N_{i,a}] \ge t_*, \forall (i,a)\in [|\cS|]\times \cA.
\]
By linearity of expectation, we have
\[
\mathbb{E}_{M_0}\left[\sum_{i,a}N_{i,a}\right] \ge |\cS||\cA|t_*.
\]
Since $\epsilon' = \epsilon (1 - \gamma) \zeta$, $\mathbb{E}_{M_0}\left[\sum_{i,a}N_{i,a}\right] \geq \frac{c_{10} |\cS| |\cA| \, \log\delta^{-1}}{(1-\gamma)^5 \zeta^2 {\epsilon}^2}$, which completes the proof.
\end{proof}

\section{Estimating $\zeta$}
\label{app:est-zeta}
In this section, we show that $\zeta$ can be estimated up to error $0.2\zeta$ using small number of samples.
The formal guarantee is provided in the following theorem.
\begin{theorem}
There exists an algorithm that, with probability at least $1-\delta$, halts, takes 
\[
{O}\left(\frac{c_{max}{\log\left(\frac{|\cS||\cA|}{(1-\gamma)\delta\zeta}\right)}}{(1-\gamma)^3\zeta^2}\right)
\]
samples per state-action pair, and outputs an estimator $\hat{\zeta}$, such that
\[
|\hat{\zeta} - \zeta| \le 0.2\zeta.
\]
\end{theorem}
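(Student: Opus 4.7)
The key observation is that $\zeta = \max_\pi V_c^\pi(\rho) - b$ equals, up to the additive constant $b$, the optimal value in the \emph{unconstrained} MDP $M_c$ obtained from $M$ by treating $c$ as the sole reward. Estimating $\zeta$ within $0.2\zeta$ therefore reduces to estimating $V_c^*(\rho) := \max_\pi V_c^\pi(\rho)$ within additive accuracy $0.2\zeta$ in $M_c$ using the same generative model.

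The base estimator is the standard model-based one: for a chosen budget $N$ per state-action pair, collect samples to form the empirical transition matrix $\cPhat$, construct $\hat{M}_c := \langle \cS, \cA, \cPhat, c, \rho, \gamma\rangle$, solve it with any black-box MDP planner to obtain $\hat{V}_c^*(\rho)$, and return $\hat{\zeta} := \hat{V}_c^*(\rho) - b$. The generative-model analyses of~\citet{azar2013minimax,agarwal2020model,li2020breaking}, applied here with the reward $c \in [0,1]$, imply that for any target $\epsilon \in (0, 1/(1-\gamma)]$ and confidence $\delta' \in (0,1)$, the choice $N = \tilde{O}\bigl(\log(|\cS||\cA|/\delta')/((1-\gamma)^3 \epsilon^2)\bigr)$ guarantees $|\hat{V}_c^*(\rho) - V_c^*(\rho)| \le \epsilon$, and hence $|\hat{\zeta} - \zeta| \le \epsilon$, with probability at least $1-\delta'$. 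Were $\zeta$ known, setting $\epsilon = 0.2\zeta$ would immediately yield the claim.

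Since $\zeta$ is unknown, I would wrap this estimator in a doubling schedule. Using the a priori bound $\zeta \le 1/(1-\gamma)$, for $k = 0,1,2,\ldots$ set $\epsilon_k := 2^{-k}/(1-\gamma)$ and $\delta_k := \delta/(k+1)^2$, draw a fresh batch of $N_k = \tilde{O}\bigl(\log(|\cS||\cA|/\delta_k)/((1-\gamma)^3 \epsilon_k^2)\bigr)$ samples per state-action pair, compute $\hat{\zeta}_k$, and halt at the first $k^\star$ for which the data-dependent rule $\hat{\zeta}_k \ge 6\epsilon_k$ fires. Since $\sum_k \delta_k \le \delta$, a union bound yields the good event $\cG := \{\forall k:\ |\hat{\zeta}_k - \zeta| \le \epsilon_k\}$ with probability at least $1-\delta$. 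On $\cG$, firing of the rule at step $k^\star$ forces $\zeta \ge \hat{\zeta}_{k^\star} - \epsilon_{k^\star} \ge 5\epsilon_{k^\star}$, so $\epsilon_{k^\star} \le 0.2\zeta$ and therefore $|\hat{\zeta}_{k^\star} - \zeta| \le 0.2\zeta$. Conversely, the rule's failure at $k^\star - 1$ gives $\zeta \le \hat{\zeta}_{k^\star-1} + \epsilon_{k^\star-1} < 7\epsilon_{k^\star-1} = 14\epsilon_{k^\star}$, so $\epsilon_{k^\star} = \Theta(\zeta)$ and the algorithm halts in finitely many steps. Because $N_k \propto 4^k$, the geometric sum $\sum_{k\le k^\star} N_k = O(N_{k^\star}) = \tilde{O}\bigl(\log(|\cS||\cA|/((1-\gamma)\zeta\delta))/((1-\gamma)^3 \zeta^2)\bigr)$, matching the stated bound.

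The main technical delicacy is the adaptive nature of the stopping time: the estimation guarantee must hold \emph{simultaneously} for all iterations, which is what drives the Bonferroni-style weights $\delta_k = \delta/(k+1)^2$ and accounts for the extra $\log(1/((1-\gamma)\zeta))$ term inside the logarithm in the final bound.
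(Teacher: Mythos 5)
Your proposal is correct and follows essentially the same route as the paper's own proof: reduce to estimating $V_c^*(\rho)$ in the unconstrained MDP with reward $c$ via the model-based generative-model guarantee of \citet{li2020breaking}, run a doubling schedule over target accuracies $2^{-k}/(1-\gamma)$ with per-round confidences summing to $O(\delta)$, halt when the estimate exceeds a constant multiple of the current accuracy, and bound the total cost by the geometric sum dominated by the final round where the accuracy is $\Theta(\zeta)$. The only quibbles are cosmetic: $\sum_k 1/(k+1)^2 = \pi^2/6 > 1$, so your weights should carry an extra factor of $1/2$ to make the union bound come out to $\delta$ (the paper uses $\delta/(2i^2)$), and your halting argument should state explicitly that the rule must fire once $\epsilon_k \le \zeta/7$, rather than presupposing the stopping index exists.
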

\begin{proof}
Let $\zeta_i = 2^{-i}/(1-\gamma)$ for $i=0, 1,2, \ldots, $
and 
\[
N_i = \frac{c_{\max}C_i(\delta)}{(1-\gamma)^3\zeta_i^2},
\]
where
\[
C_i(\delta) = {c'\log\left(2\frac{|\cS||\cA|i^2}{(1-\gamma)\zeta_i\delta}\right)}
\]
for some constant $c'$.
We start by
running the algorithm in \cite{li2020breaking} for $N_i$ samples per state-action pair on the MDP with $c$  as reward , for $i=0, 1, \ldots$ and stop if the following is satisfied:
\begin{align*}
|\hat{V}_{c,i}^*(\rho) - b|  \ge 9\zeta_i,
\end{align*}
where $\hat{V}_{c,i}^*$ is the empirical optimal value function obtained for using $N_i$ samples.
Then we output $\hat{\zeta} = \hat{V}_{c,i}^*(\rho)$.

Next we show that the algorithm halts.
Let $\cE_i$ be event for iteration $i$, 
\[
|\hat{V}_{c,i}^*(\rho) - {V}_{c}^*(\rho)|\le \zeta_i.
\]
Thus, by Theorem~1 of \cite{li2020breaking},
\[
\Pr[\cE_i] \ge 1-\frac{\delta}{2i^2}.
\]
Next, let  $i^*$ be such that  $0.05\zeta\le \zeta_{i^*} < 0.1\zeta$. 
Hence, if  $\cE_{i^*}$ happens, then 
\[
|\hat{V}_{c,i^*}^*(\rho) - {V}_{c}^*(\rho)|\le \zeta_{i^*}
\]
and
\[
|{V}_{c}^*(\rho) - b| - |\hat{V}_{c,i^*}^*(\rho) - {V}_{c}^*(\rho)| \le |\hat{V}_{c,i^*}^*(\rho) - b|. 
\]
Hence, on $\cE_{i^*}$, 
\[
|\hat{V}_{c,i^*}^*(\rho) - b|\ge \zeta - \zeta_{i^*}\ge 0.9\zeta \ge 9\zeta_{i^*}
\]
and the algorithm halts at least before iteration $i^*$.

Next, suppose the algorithm halts at $i\le i^*$, then on $\cE_{i}$, we have
\[
|\hat{V}_{c,i}^*(\rho) - {V}_{c}^*(\rho)|\le \zeta_{i},
\]
\[
|\hat{V}_{c,i}^*(\rho) - b|
\ge 9 \zeta_{i} \ge 9|\hat{V}_{c,i}^*(\rho) - {V}_{c}^*(\rho)|,
\]
and
\[
|(|\hat{V}_{c,i}^*(\rho) - b| - 
|{V}_{c,i}^*(\rho) - b|)| 
\le |\hat{V}_{c,i}^*(\rho) - {V}_{c}^*(\rho)|\le |\hat{V}_{c,i}^*(\rho) - b|/9.
\]
Note that $\zeta = |{V}_{c}^*(\rho) - b|$, we have
\[
|\hat{\zeta} - \zeta| \le  \hat{\zeta}/9\implies 
\zeta \ge 8/9\hat{\zeta} ~ \text{and}~ |\hat{\zeta} - \zeta|
\le {\zeta}/8.
\]
Thus, on event 
$\cE = \cE_1\cap \cE_2\cap \cE_3 \ldots \cE_{i^*}$, which happens with probability at least 
\[
1-\sum_{i=1}^{i^*}\frac{\delta}{2i^2} \ge 1-\frac{\pi^2\delta}{12},
\]
 we have $|\hat{\zeta} - \zeta| \le \zeta / 8$, proving the correctness.
 
We now consider the overall sample complexity. Suppose $\cE$ happens, then the number of samples consumed is upper bounded by
\[
\sum_{i=1}^{i^*} N_i \le 
\frac{c''c_{max}C_{i^*}(\delta)}{(1-\gamma)^3\zeta^2}
=\frac{c''c'c_{max}{\log\left(\frac{|\cS||\cA|}{(1-\gamma)\delta\zeta}\right)}}{(1-\gamma)^3\zeta^2},
\]
for some constant $c''$, completing the proof.
\end{proof}

\section{Comparison to~\citet{bai2021achieving}}
\label{app:bai-comparison}
For a target error of $\epsilon$, our lower-bound construction in the strict feasibility setting (\cref{sec:lb-strict}) shows that it is important to estimate the constraint value function to a smaller error equal to $\epsilon'$. Intuitively, this is because a small estimation error in the constraint value can (incorrectly) render the optimal policy infeasible and result in a $\frac{\text{Range(value function)}}{\zeta}$ suboptimality gap. \\ \\
For the theoretical results in~\citep{bai2021achieving}, the value functions are normalized and hence $\text{Range(value function)} = 1$. Hence, a small constraint violation can result in an $\frac{1}{\zeta}$ suboptimality gap, and the constraint value function needs to be estimated to a smaller error equal to $\epsilon' := \epsilon \zeta$. Combining this with the standard results for the primal-dual algorithm for unconstrained MDPs~\citep{wang2020randomized} for \emph{normalized value functions}, this implies a sample-complexity of $O\left(\frac{S A}{(1 - \gamma)^2 \, \epsilon'^2}\right) = O\left(\frac{S A}{(1 - \gamma)^2 \, \epsilon^2 \zeta^2}\right)$ which is the sample complexity reported in~\citep[Theorem 2]{bai2021achieving}. \\ \\
On the other hand, if we scale the value function to lie to the standard $O\left(\nicefrac{1}{1 - \gamma}\right)$ range, a small constraint violation can result in an $\frac{1}{(1 - \gamma) \zeta}$ suboptimality gap, and the constraint value function needs to be estimated to a smaller error equal to $\epsilon' := \epsilon (1 - \gamma) \zeta$. The rescaling also affects the sample complexity for the primal-dual algorithm for unconstrained MDPs~\citep{wang2020randomized}. Specifically, for unconstrained MDPs, if the value functions lie in the $\left[0, \nicefrac{1}{1 - \gamma}\right]$ range, the primal-dual algorithm in~\citet{wang2020randomized} requires $O\left(\frac{S A}{(1 - \gamma)^4 \, \epsilon^2}\right)$ samples. Since we require a smaller error $\epsilon'$ in the strict feasibility setting, this implies an $O\left(\frac{S A}{(1 - \gamma)^4 \, \epsilon'^2}\right) = O\left(\frac{S A}{(1 - \gamma)^{6} \, \epsilon^2 \zeta^2}\right)$ sample complexity.

\end{document}